\newtheorem{definition}{Definition}
\newtheorem{theorem}{Theorem}
\newtheorem{lemma}{Lemma}
\newtheorem{assumption}{Assumption}
\newcommand{\commentout}[1]{}
\title{Understanding Scaling Laws with Statistical and Approximation Theory for Transformer Neural Networks on Intrinsically Low-dimensional Data}
\author{%
  Alex Havrilla \\
  Department of Mathematics\\
  Georgia Institute of Technology \\
  Atlanta, GA, 30308 \\
  \texttt{ahavrilla3@gatech.edu} \\
  \And
  Wenjing Liao \\
  Department of Mathematics \\
  Georgia Institute of Technology \\
  Atlanta, GA, 30308 \\
  \texttt{wliao60@gatech.edu}
}
\newcommand{\M}{\mathcal{M}}
\newcommand{\T}{\mathcal{T}}
\newcommand{\N}{\mathbb{N}}
\newcommand{\R}{\mathbb{R}}
\newcommand{\E}{\mathbb{E}}
\newcommand{\manifold}{\mathcal{M}}
\newcommand{\attn}{\textup{A}}
\newcommand{\mha}{\textup{MHA}}
\newcommand{\mhaclass}{\mathcal{MHA}}
\newcommand{\ff}{\textup{FFN}}
\newcommand{\ffclass}{\mathcal{FFN}}
\newcommand{\tb}{\textup{B}}
\newcommand{\tbclass}{\mathcal{B}}
\newcommand{\encoder}{\textup{E}}
\newcommand{\decoder}{\textup{D}}
\newcommand{\PE}{\textup{PE}}
\newcommand{\tnn}{\textup{T}}
\newcommand{\tclass}{\mathcal{T}(\tdepth, \ffdepth, \ffwidth, \clen, \dhd, \numA, R, \kappa)}
\newcommand{\dhd}{d_{embd}}
\newcommand{\dext}{D}
\newcommand{\din}{d}
\newcommand{\tdepth}{L_{T}}
\newcommand{\ffdepth}{L_\ff}
\newcommand{\ffwidth}{w_\ff}
\newcommand{\clen}{l}
\newcommand{\numA}{m}
\newcommand{\iterm}{\mathcal{I}}
\newcommand{\numCharts}{C_{\mathcal{M}}}
\newcommand{\token}{h}
\newcommand{\datakernel}{data}
\newcommand{\bigO}[1]{O\big(#1\big)}
\newcommand{\tildeO}[1]{\tilde{O}\big(#1\big)}
\newcommand{\Linf}[2]{\|#1\|_{L^{\infty}(#2)}}
\newcommand{\linf}[1]{\|#1\|_{\infty}}
\newcommand{\minf}[1]{\|#1\|_{\infty, \infty}}
\def\block(#1,#2)#3{\multicolumn{#2}{c}{\multirow{#1}{*}{$ #3 $}}}
\newcommand{\alex}[1]{{\color{red}#1}}
\DeclareMathOperator*{\argmin}{arg\,min}
\begin{document}

\maketitle

\begin{abstract}
  When training deep neural networks, a model's generalization error is often observed to follow a power scaling law dependent both on the model size and the data size. Perhaps the best known example of such scaling laws are for transformer-based large language models (\textbf{LLMs}), where networks with billions of parameters are trained on trillions of tokens of text. Yet, despite sustained widespread interest, a rigorous understanding of why transformer scaling laws exist is still missing. To answer this question, we establish novel statistical estimation and mathematical approximation theories for transformers when the input data are concentrated on a low-dimensional manifold. Our theory predicts a power law between the generalization error and both the training data size and the network size for transformers, where the power depends on the intrinsic dimension $\din$ of the training data. Notably, the constructed model architecture is shallow, requiring only logarithmic depth in $\din$. By leveraging low-dimensional data structures under a manifold hypothesis, we are able to explain transformer scaling laws in a way which respects the data geometry.  Moreover, we test our theory with empirical observation by training LLMs on natural language datasets. We find the observed empirical scaling laws closely agree with our theoretical predictions. Taken together, these results rigorously show the intrinsic dimension of data to be a crucial quantity affecting transformer scaling laws in both theory and practice.
\end{abstract}

\section{Introduction}

Deep learning has made remarkable breakthroughs in various real-world applications, such as natural language processing  \citep{graves2013speech,bahdanau2014neural,liu2023pre,vaswani2017attention}, computer vision \citep{krizhevsky2012imagenet,goodfellow2014generative,song2020score}, healthcare \citep{miotto2018deep}, and robotics \citep{gu2017deep}. A neural scaling law between the generalization error (or test loss) and several quantities, including the model size, the training data size, and the amount of compute,  plays a key role in the performance of neural networks. 
Perhaps the best known example of such scaling laws are for transformer-based LLMs. Recent works in \citet{hestness2017deep,rosenfeld2019constructive,kaplan2020scaling,Bahri2021ExplainingNS} demonstrated a power law   between the test loss and  the network size, the training data size, and the amount of compute for transformer-based LLMs. Yet, despite sustained widespread interest, a rigorous understanding of why \textit{transformer scaling laws} exist is still missing.

Understanding the theory behind neural scaling laws provides invaluable insights into practical applications of deep learning. A mathematical principal of neural scaling laws enables researchers and practitioners to describe and analyze the performance of neural networks with precision and rigor. The neural scaling law between the generalization error and the network size can be partially explained via neural network representation theory \citep{Yarotsky2016ErrorBF}. Further, the neural scaling law between the generalization error and the training data size $n$ can be  explained via statistical estimation theory.
For feedforward neural networks  \citep{schmidt2020nonparametric} and convolutional residual networks \citep{oono2019approximation}, a  generalization error bound has been established  for  regression. \citet{schmidt2020nonparametric,oono2019approximation} predicted $\text{Generalization Error} \sim n^{-c/D}$ where $n$ is the training data size, $D$ is the data dimension and $c$ is a constant. This predicted rate of convergence is extremely slow for high dimensional data when $D$ is large, while the  rate of convergence observed in real-world applications is significantly faster, which reveals a gap between theory and practice.

This gap can be bridged by exploiting low-dimensional structures of data.
Real-world data sets often exhibit low-dimensional geometric structures due to rich local regularities, global symmetries, or repetitive patterns \citep{tenenbaum2000global,roweis2000nonlinear}. According to \citet[Figure 1]{min2023an}, the intrinsic dimension of CIFAR-100, CelebA and ImageNet datasets are about $20, 20$ and $40$ respectively. When the low-dimensional geometric structure of data is modeled by a manifold, 
the predicted scaling for regression, classification and distribution estimation becomes $\text{Generalization Error} \sim n^{-c/d}$, 
where $n$ is the training data size, $d$ is the intrinsic dimension of the data manifold, and $c$ is a constant \citep{Chen2019NonparametricRO,liu2021besov,dahal2022deep,nakada2020adaptive}.
In \cite{KaplanIDScalingLaws}, the neural scaling law between the test loss and the network size was predicted to be $\text{Test loss} \sim ({size})^{-{4}/{d}}$ where $d$ is the intrinsic dimension of data. 
While the theoretical studies focus on feedforward neural networks \citep{Chen2019NonparametricRO,nakada2020adaptive} and convolutional residual networks \citep{liu2021besov}, a generalization to transformer-based neural networks \citep{vaswani2017attention} is of great interest but widely open.



This paper establishes  mathematical approximation and statistical estimation theories  to predict and justify the scaling law between the generalization error and the model/data size for transformer neural networks. We consider regression of a $\beta$\textit{-H\"older continuous function} $f: {\M} \rightarrow \R$ where ${\M}$ is a $d$-dimensional compact Riemannian manifold isometrically embedded in $\R^D$. After  embedding the input $x\in \M \subset \R^D$ to a proper sequence, we apply a transformer network on the embedded sequence to learn the function $f$. Our main results are on the statistical estimation and  universal approximation theories of H\"older continuous functions on $\M$ by transformer neural networks.

{\bf Statistical Theory:} In Theorem \ref{thm:statistical}, we consider the global empirical risk minimizer $\hat{\tnn}_n$ from $n$ i.i.d. training data $\{(x_i,f(x_i))\}_{i=1}^n$, given by
\begin{align}\label{eq:erm}
  \textstyle  \hat{\tnn}_n = \argmin_{\tnn \in \T} \frac{1}{n}\sum_{i=1}^n \big(\tnn(x_i) - f(x_i) \big)^2,
\end{align}
under a properly chosen transformer network architecture $\T$. We prove that, the  generalization error of $\hat{\tnn}_n$ satisfies
\begin{equation}
   \E \|\hat{\tnn}_n- f\|_{L^2(Q)}^2 
   \le \tildeO{D d^2 n^{-\frac{2\beta}{2\beta+d}}}
\label{eq:genboundintro}
    \end{equation}
where $Q$ denotes the distribution of $x$, and  $\tilde{O}$ hides constants and $\log n$  terms.

{\bf Approximation Theory:} In Theorem \ref{thm:approximation}, we construct a transformer network to universally approximate $\beta$-H\"older continuous functions on $\M$ with an arbitrarily given accuracy $\varepsilon$. Notably, the network is shallow, requiring only $\bigO{\log(\din)}$ independent of the desired accuracy $\epsilon$ to approximate $f$ locally. This highlights a major advantage of Transformers over feed-forward ReLU networks, which require $\bigO{\log(\frac{1}{\epsilon})}$ layers to achieve the same accuracy.

In our proof, we embed the entries of $x = [x^1,\ldots,x^D] \in  \M$ into  tokens such that the $x^i$'s appear in a sequence. Our proof for the approximation theory explicitly constructs transformers to realize the interaction between different tokens efficiently via a crucial \textit{Interaction Lemma} \ref{lemma:interaction}. This lemma allows us to flexibly implement many common operations including addition,  multiplication, and parallelization, and so may  of independent interest. In our proof for the statistical theory, we calculate the covering number of our transformer network class, which is also of independent interest.

{\bf Neural Scaling Laws and the Intrinsic Dimension:} Our generalization error bound in \eqref{eq:genboundintro} predicts the following  neural scaling law between the  generalization error and the data size $n$: 
\begin{equation}
\text{Squared Generalization Error} := \E \|\hat{\tnn}_n- f\|_{L^2(Q)}^2  \lesssim n^{-\alpha_D}, \ \text{ where }  \alpha_D=\frac{2\beta}{2\beta+d}
\label{eq:genscalingintro},
\end{equation}
with sufficient data.  Our approximation theory in Theorem \ref{thm:approximation} predicts the following neural scaling law between the approximation error and the network size $N$:
\begin{equation}
\textstyle \text{Squared Approximation Error} := \inf_{\tnn \in \T} \|\tnn - f\|_{L^\infty(\M)}^2 \lesssim N^{-\alpha_N}, \ \text{ where }  \alpha_N=\frac{2\beta}{d}
\label{eq:appscalingintro}
\end{equation}
for a sufficiently large network class $\T$. 
Our prediction of the power scaling law is consistent with our own empirical observations, and those in \citet{kaplan2020scaling} and \citet{biderman2023pythia}.  More importantly, our theory quantifies the power $\alpha_D,\alpha_N$  in terms of the intrinsic dimension of data.

{\bf Experimental Validation on LLMs:} After establishing our theory we seek to validate it in practice by predicting empirical scaling laws for LLMs trained on natural language data. To test our predictions for the data scaling law, we pretrain a series of small (125 million parameter) LLMs on three datasets \citep{Gokaslan2019OpenWeb, eldan2023tinystories, Kocetkov2022TheStack}. We find close agreement ($\pm 0.02$) between our predicted scaling exponent ${\alpha}_D$ and the observed exponents $\hat{\alpha}_D$. To evaluate our predictions for the model scaling exponent $\alpha_N$, we rely on publicly available scaling suites \citep{biderman2023pythia, Radford2019LanguageMA} whose intrinsic data dimensions we can estimate. We find our predictions are still close but less accurate for $\alpha_N$. Finally, we carry out a series of ablations investigating factors impacting the estimated intrinsic data dimension $d$. For a fixed dataset, we find the estimated $d$ is stable with respect to several factors including the model size, model embedding dimension, and context length\footnote{Code is available at \url{https://github.com/Dahoas/transformer_manifolds_learning}}.

In summary, we make the following contributions:
\begin{itemize}
    \item A novel approximation theory for transformers approximating H\"older continuous functions on 
    a $d$-dimensional manifold, requiring  $O(\log(\din))$ depth independent of the accuracy $\epsilon$.
    \item A novel computation of the covering number of our transformer network class. This is used to establish generalization bounds exponentially depending  on the intrinsic dimension $\din$.
    \item Empirical experiments demonstrating our theory predicts data scaling laws for LLMs as a function of the estimated intrinsic data dimension $\din$.
    \item An empirical study of several factors affecting the estimated intrinsic data dimension for transformers  including model size, embedding dimension, layer depth, and context length.
\end{itemize}

We will present our main theory in Section \ref{sec:transformer_theory}, numerical validation of our theory and the prediction of neural scaling laws in Section \ref{sec:experiments}. We will discuss related work in Section \ref{sec:relatedwork} and conclude our paper in Section \ref{sec:conclusion}.
Our pre-training hyperparameters are given in Appendix \ref{sec:hparams}. The derivation of neural scaling laws is presented in Appendix \ref{sec:exp_derivation}. 
Our notation is given in Appendix \ref{appsec:notation}, and proofs are presented in Appendix \ref{appsec:proofapproxtheory}
 and \ref{appsec:proofgentheory}.
 
\section{Transformer Generalization and  Approximation Theory}\label{sec:transformer_theory}

 This paper establishes statistical estimation and mathematical approximation theory of transformers for the regression of H\"older functions on a low-dimensional manifold. We start by defining transformer neural networks.  

\subsection{Transformer Neural Networks}

\begin{definition}[Transformer Neural Network]\label{def:transformer}
    We define a transformer neural network $\tnn$ as a composition of functions of the form 
    \begin{align}
    \tnn(x) = \decoder \circ \tb_{\tdepth} \circ ... \circ \tb_1 \circ (\PE + \encoder (x))
    \label{eq:transformernet}
\end{align} which is parameterized by
\begin{itemize}
    \item $\tdepth$: The number of \textit{transformer blocks} $\tb_i$ in $\tnn$.
    \item $\numA$: The maximum number of attention heads per transformer block.
    \item $\ffdepth$: The max depth of the feed-forward layers per block.
    \item $\ffwidth$: The max width of the feed-forward layers per block.
    \item $\dhd$: The token embedding dimension.
    \item $\dext$: The input dimension.
    \item $\clen$: The number of hidden tokens.
    \item $\kappa$: A bound on the magnitude of network parameters.
\end{itemize} We define each component of the composition as
\begin{itemize}
    \item $x \in \R^{\dext}$ is the input.
    \item A linear embedding layer $\encoder : \R^{\dext} \to \R^{\dhd \times \clen}$. In this work we will always take $\encoder = E' \circ U$ where $U \in \R^{\clen \times \dext}$ and $E' \in \R^{\dhd \times 1}$ applied columnwise is fixed. We call embedded output $H = \encoder (x)$ the first \textbf{embedding matrix} whose columns are referred to as \textbf{tokens}.
    \item $\PE \in \R^{\dhd \times l}$ is a fixed matrix implementing the transformer \textbf{positional encoding}. 
    \item Transformer blocks $\tb_i: \R^{d \times l} \to \R^{d \times l}$ for $i \in \{1,...,\tdepth\}$ which are residual compositions of \textit{multi-headed attention} (\textbf{MHA}) layers $\mha$ and feed-forward layers $\ff$ acting token-wise.
    \item A decoding layer $\decoder : \R^{\dhd \times l} \to \R$ which is fixed to outputting the first element of the last column.
\end{itemize}
We use the ReLU activation function $\sigma(x) = \max(0,x)$ in the network.
\end{definition} 

For a complete definition of the components of the transformer neural networks, we refer to Appendix  \ref{appsec:transformer}.  The transformer network $\tnn$ may sometimes be written $\tnn_\theta$ which makes explicit the dependence on learnable weights $\theta$. 
We can also define a class of transformer neural networks $\T$ of interest.
\begin{definition}[Transformer Network Class]
\label{def:transformerclass}
We define a class of transformer networks as
\begin{align*}
   \tclass = \Big\{ \tnn_\theta \hspace{3pt}|\hspace{3pt} &\tnn_\theta \textup{ is in \eqref{eq:transformernet} with at most } m \textup{ attention heads in each block,} \\
   &\ffdepth \textup{ layer feed-forward networks with hidden width }  \ffwidth, \\
   &\dhd \textup{ token dimension}, \clen \textup{ hidden tokens }, \\
   &\textup{and have } \|\tnn_\theta\|_{L^{\infty}(\R^D)}\leq R, \|\theta\|_{\infty} \leq \kappa \Big\}
\end{align*} where $\|\tnn_\theta\|_{L^{\infty}(\R^D)} \leq R$ bounds the output of $\tnn$ and $\|\theta\|_\infty$ bounds the weight magnitude 
of $\tnn_\theta$. 
\end{definition} 

\subsection{Assumptions}
We consider a manifold $\M$ and the target function $f: \M \to \R$, with the following assumptions:
\begin{assumption}[Manifold]
\label{assumption:manifold_p} 
Let $\M$ be a compact Riemannian manifold with intrinsic dimension $\din$ isometrically embedded in $\R^{\dext}$. Because $\M$ is compact, there exists $M > 0$ such that $\|x\|_\infty \leq M$ for $x \in \M$. Additionally, we assume $\M$ has positive \textbf{reach} $\tau > 0$.
\end{assumption}

\begin{assumption}[Target function]
\label{assumption:function_p}
The target function $f: \M \to \R$ is $\beta$-H\"older continuous on $\M$, for some $0 < \beta \leq 1$ and H\"older constant $H_f > 0$, and in addition $\|f\|_{L^{\infty}(\M)} \leq R$ for some $R > 0$.
\end{assumption} 

In Assumption \ref{assumption:manifold_p}, the reach \citep{federer1959curvature,aamari2019estimating} $\tau$ of $\M$ can be defined as
\begin{align*}
    \textstyle    \tau = \inf\{r > 0 : \exists x\neq y \in \M, v \in \R^D \textup{ such that } r =\|x- v\| = \|y- v\| = \inf_{z \in \M} \| z - v \| \}.
\end{align*} Informally, reach is the smallest distance at which a projection onto the manifold is no longer unique. In practice this can be used to establish a bound on the number of charts covering the manifold. 

\subsection{Transformer Generalization Theory}
\label{subsec:gen}

Given $n$ training samples $\{(x_i, f(x_i))\}_{i=1}^n$ where $\{x_i\}_{i=1}^n$ are i.i.d. samples of
 a distribution $Q$ supported on $\M$, we aim to \textit{learn} an approximation $\hat{\tnn}_n$ to $f$ by minimizing the empirical risk \eqref{eq:erm} over a class of a transformer neural networks $\tclass$.
The corresponding  generalization error is given by
\begin{align}\label{eq:generlization_error}
\textstyle
 \E \|\hat{\tnn}_n- f\|_{L^2(Q)} =   \E \sqrt{\int_\M \big(\hat{\tnn}_n(x) - f(x)\big)^2dQ(x)}.
\end{align} 


If $\M$ and $f$ satisfy Assumptions \ref{assumption:manifold_p} and \ref{assumption:function_p}, we prove the following generalization error bound. 

\begin{theorem}\label{thm:statistical}
    Let $M,\tau, R, H_f > 0$, $0<\beta\le 1$, $d, D \in \mathbb{N}$, $\M$ and $f$ satisfy Assumption \ref{assumption:manifold_p} and \ref{assumption:function_p} respectively.
    Given $n$ training samples $\{(x_i, f(x_i))\}_{i=1}^n$ where $\{x_i\}_{i=1}^n$ are i.i.d. samples of
 a distribution $Q$ supported on $\M$, if we use the transformer neural network class $\tclass$ with parameters
    \begin{align*}
        &\tdepth = \bigO{\log(d)}, \quad \ffdepth = \bigO{\log(n)}, \quad \ffwidth = \bigO{1},
        \quad l = \bigO{dn^{\frac{\din}{2\beta+\din}}}\\ 
        &\dhd = \bigO{1}, \quad m = \bigO{dn^{\frac{\din}{2\beta+\din}}}, \quad \kappa = \bigO{d^2n^{\frac{2\din}{2\beta+\din}}}
    \end{align*} 
    in the empirical risk minimization \eqref{eq:erm}, where $\bigO{\cdot}$ hides  terms in $C_\M$ (the number of charts), $ D, H_f, M$,  then the empirical risk minimizer  $\hat{\tnn}_n$ given by \eqref{eq:erm} satisfies 
    \begin{align*}
        \E \int_\M \big(\hat{\tnn}_n(x) - f(x)\big)^2dQ \leq \tildeO{Dd^2n^{\frac{-2\beta}{2\beta+\din}}}
    \end{align*} where $\tilde{O}$ hides logarithmic terms in $n,d$ and linear terms in $C_\M$.
\end{theorem}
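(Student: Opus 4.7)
I would prove the theorem by the classical approximation-estimation decomposition for empirical risk minimization over a uniformly bounded class. The starting point is an oracle inequality which, for any $\tnn^* \in \tclass$ and any scale $\delta > 0$, gives
\begin{align*}
    \E \|\hat{\tnn}_n - f\|_{L^2(Q)}^2 \;\lesssim\; \|\tnn^* - f\|_{L^\infty(\M)}^2 \;+\; \frac{R^2\bigl(\log \mathcal{N}(\delta,\tclass, \Linf{\cdot}{\R^D}) + 1\bigr)}{n} \;+\; R\delta,
\end{align*}
where $\mathcal{N}(\delta, \tclass, \Linf{\cdot}{\R^D})$ is the $\delta$-covering number of $\tclass$ in the sup-norm. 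The bias term would be controlled using Theorem \ref{thm:approximation}: for target accuracy $\varepsilon > 0$, there exists a transformer $\tnn^* \in \tclass$ with $\|\tnn^* - f\|_{L^\infty(\M)} \le \varepsilon$, provided the architectural parameters scale as prescribed in that theorem in terms of $\varepsilon$, $d$, and $\beta$.

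\textbf{Step one: covering number.} The main technical step is to bound $\log \mathcal{N}(\delta, \tclass, \Linf{\cdot}{\R^D})$. I would first propagate, block-by-block, a deterministic $\ell^{\infty}$ bound on intermediate token matrices that is polynomial in $\kappa, M, \dhd, \clen$ (using that softmax outputs lie in $[0,1]$ and that $\linf{\theta} \le \kappa$). Once tokens are bounded, each attention head and each feed-forward sublayer is Lipschitz in its own weights with a polynomial constant; composing through $\tdepth$ blocks yields a global Lipschitz constant $L$ of $\tnn_\theta$ with respect to $\theta$ that is polynomial in the architectural parameters and only exponential in the (logarithmic) quantities $\tdepth, \ffdepth$. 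A standard volume-covering argument then gives
\begin{align*}
    \log \mathcal{N}(\delta,\tclass, \Linf{\cdot}{\R^D}) \;\lesssim\; P \log\!\tfrac{L}{\delta}, \qquad P \;=\; O\bigl(\tdepth(\ffdepth\ffwidth^2 + m\,\dhd^2\,\clen) + D\,\clen\bigr).
\end{align*}

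\textbf{Step two: balancing.} Setting $\delta = 1/n$ and trading off the bias $\varepsilon^2$ against the variance $P\log(Ln)/n$, the optimal approximation accuracy solves to $\varepsilon \asymp n^{-\beta/(2\beta+d)}$. Plugging this $\varepsilon$ back into the parameter formulas from Theorem \ref{thm:approximation} reproduces the architectural scalings $\tdepth = \bigO{\log d}$, $\clen = \bigO{dn^{d/(2\beta+d)}}$, $m = \bigO{dn^{d/(2\beta+d)}}$, $\kappa = \bigO{d^2 n^{2d/(2\beta+d)}}$ stated in the theorem. Substituting into the oracle inequality produces the advertised rate $\tildeO{Dd^2 n^{-2\beta/(2\beta+d)}}$: the factor $D$ enters through the parameters of the embedding $\encoder = E' \circ U$ with $U \in \R^{\clen \times D}$, and the $d^2$ arises from the product of the $d$-factors in $\clen$ and $m$ in the parameter count $P$.

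\textbf{Main obstacle.} The delicate part is the Lipschitz analysis of $\tnn_\theta$ in its weights: attention is not globally Lipschitz on $\R^{\dhd \times \clen}$ because of the softmax, so one must establish a deterministic a priori bound on intermediate tokens before a local Lipschitz estimate can be applied, and then propagate this estimate through compositions without the constant blowing up super-polynomially in $\kappa$. Only when $\log L$ contributes a benign $\log n$ factor does the final exponent match the minimax-optimal $-2\beta/(2\beta+d)$; any slack in this step would either degrade the rate in $n$ or inflate the prefactor beyond the stated $Dd^2$.
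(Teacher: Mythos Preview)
Your proposal follows the same route as the paper: an oracle-type bias--variance decomposition, with the bias handled by Theorem~\ref{thm:approximation} and the variance by a sup-norm covering number established through a weight-Lipschitz analysis of the transformer (this is the paper's Lemma~\ref{lemma:covering}); balancing then yields $\varepsilon = n^{-\beta/(2\beta+d)}$.

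Two points need correction. First, the attention in this paper is ReLU-based, not softmax (Definition~\ref{def:attention}: $\attn_{Q,K,V}(H) = VH\sigma((KH)^TQH)$), so your stated ``main obstacle'' about softmax does not arise. The actual difficulty is different: ReLU attention is polynomial in its input, so the Lipschitz constant in the weights must be tracked alongside the growing intermediate token magnitudes $\|\tb_k\circ\cdots\circ\tb_1(H)\|_\infty$, which the paper controls recursively by an estimate of the form $\|\tb(H)\|_\infty \le \text{poly}(\dhd,\ffwidth,\kappa,m,\clen)\|H\|_\infty$ and then composes through $\tdepth$ blocks.

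Second, your parameter count $P = O\bigl(\tdepth(\ffdepth\ffwidth^2 + m\,\dhd^2\,\clen) + D\,\clen\bigr)$ is off: each attention head carries $Q,K,V\in\R^{\dhd\times\dhd}$ and hence only $3\dhd^2$ weights, independent of the sequence length $\clen$, so the per-block count is $O(m\dhd^2 + \ffdepth\ffwidth^2)$. With your overcount the variance term scales like $m\clen/n \asymp d^2 n^{(d-2\beta)/(2\beta+d)}$, which diverges for $d>2\beta$ and cannot produce the claimed rate. Dropping the spurious $\clen$ factor (and noting that the encoder and decoder are fixed in this construction, so the $D\clen$ term is also absent) recovers the paper's bound $|\theta|\le 4\dhd^2\ffwidth^2 D(m+\ffdepth)\tdepth$ and hence the rate $\tildeO{Dd^2 n^{-2\beta/(2\beta+d)}}$.
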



Theorem \ref{thm:statistical} is proved in Appendix \ref{appsec:proofgentheory}, via a bias-variance decomposition. The bias represents the approximation error of $f$ by transformer neural networks, and the variance represents the stochastic error in the parameter estimation of transformer neural networks. To quantify the bias, we explicitly construct a transformer neural network to universally approximate $\beta$-H\"older continuous functions on $\M$, to be detailed in Section \ref{subsec:approx}. The variance is bounded by a novel calculation of the covering number of the transformer network class used in Theorem \ref{thm:statistical}.


\subsection{Transformer Approximation Theory}
\label{subsec:approx}

\begin{theorem}\label{thm:approximation}
    Let $M,\tau, R, H_f > 0$, $0 < \beta \leq 1$, $d, D \in \mathbb{N}$ and $\M$ satisfy Assumption \ref{assumption:manifold_p}.  
    For any $\epsilon \in (0,1)$, there exists a transformer neural network class $\tclass$ with parameters
    \begin{align*}
        &\tdepth = \bigO{\log(d)}, \quad \ffdepth = \bigO{\log({\epsilon}^{-1})}, \quad \ffwidth = \bigO{1}, \quad l = \bigO{d\epsilon^{-\frac{d}{\beta}}}\\ 
        &\dhd = \bigO{1}, \quad m = \bigO{d\epsilon^{-\frac{d}{\beta}}}, \quad \kappa = \bigO{d^2\epsilon^{-\frac{2\din}{\beta}}},
    \end{align*} where $\bigO{\cdot}$ hides terms in $C_\M, D, H_f, \tau$, such that for any target function $f$ satisfying Assumption \ref{assumption:function_p}, if the network parameters $\theta$ are properly chosen, then the network yields a function $\tnn_\theta \in \tclass$ with the approximation error 
    \begin{align*}
        \|\tnn_\theta - f\|_{L^{\infty}(\M)} \le  \epsilon
    \end{align*}
\end{theorem}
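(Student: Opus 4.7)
My plan is to follow a local approximation scheme adapted to the transformer architecture of Definition~\ref{def:transformer}. First I would use Assumption~\ref{assumption:manifold_p} to cover $\M$ by $C_\M$ local charts obtained by projecting geodesic balls of radius $O(\tau)$ onto affine tangent planes (the positive reach controls $C_\M$), and build a smooth partition of unity $\{\rho_\alpha\}_{\alpha=1}^{C_\M}$ subordinate to this cover. On each chart I would approximate the pull-back $f\circ\phi_\alpha^{-1}$ by a piecewise constant function on a uniform $d$-dimensional grid of side length $\sim\epsilon^{1/\beta}$. By the $\beta$-H\"older assumption, each local approximation $\tilde f_\alpha$ uses $O(\epsilon^{-d/\beta})$ cells and has error $O(\epsilon)$, so the combined surrogate $\tilde f = \sum_\alpha \rho_\alpha \tilde f_\alpha$ inherits an $O(\epsilon)$ sup-norm accuracy on $\M$.

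I would then encode this surrogate into a transformer. Each of the $\clen = O(d\,\epsilon^{-d/\beta})$ hidden tokens is assigned to one (chart, grid-cell) pair: $\encoder$ loads the coordinates of $x$ into every token while $\PE$ stores the identity (chart index $\alpha$ and cell center $c_k$) in token $k$. Inside each transformer block the feed-forward subnetwork computes per-token quantities (the linear chart map $\phi_\alpha(x)$, a ReLU bump indicating that $\phi_\alpha(x)$ lies in $c_k$, and the value $\rho_\alpha(x)\tilde f_\alpha(c_k)$) up to accuracy $\epsilon$, which requires $\ffdepth = O(\log(1/\epsilon))$ ReLU depth for the approximate multiplications and bumps. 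The ReLU indicator for a $d$-dimensional cell is an AND of $d$ one-dimensional bumps; implementing that conjunction as a balanced binary tree of approximate products is the source of the $\tdepth = O(\log d)$ transformer depth.

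The main obstacle is that an attention layer only naturally mixes tokens through a softmax average, whereas the construction requires both token-wise products (to build the tree-AND) and a final exact-addition aggregation of all active tokens into the scalar read by $\decoder$. This is precisely what the \emph{Interaction Lemma}~\ref{lemma:interaction} is designed for: it allows addition, multiplication, and parallel routing between arbitrary subsets of tokens using $O(1)$ attention heads plus a feed-forward residual. Routing the tree-AND through $O(\log d)$ successive blocks, with the remaining heads used to process the $\epsilon^{-d/\beta}$ cell contributions of each chart in parallel, gives the head count $\numA = O(d\,\epsilon^{-d/\beta})$. The weight bound $\kappa = O(d^2\epsilon^{-2d/\beta})$ then emerges from the magnitudes required in the query/key matrices to sharpen softmax into a near-hard selection to accuracy $\epsilon$, compounded with the weights of the approximate multiplication gadgets.

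Finally I would close the argument by a triangle inequality
\begin{align*}
\|\tnn_\theta - f\|_{L^\infty(\M)} \le \|\tnn_\theta - \tilde f\|_{L^\infty(\M)} + \|\tilde f - f\|_{L^\infty(\M)},
\end{align*}
bounding the second term by the H\"older grid estimate and the first by a sum of softmax sharpening error, ReLU multiplication error, and partition-of-unity boundary error, each made $O(\epsilon)$ by the prescribed choices of $\ffdepth, \numA$, and $\kappa$. The hardest technical step is constructing and verifying the attention-based tree reduction through the Interaction Lemma so that a \emph{single} logarithmic-in-$d$ stack of blocks suffices for both the conjunctive indicator and the exact global sum; the rest reduces to careful bookkeeping of H\"older and chart estimates, in the same spirit as existing feedforward approximation theorems on manifolds but saving a $\log(1/\epsilon)$ factor in global depth thanks to the parallel token structure of attention.
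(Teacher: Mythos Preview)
Your high-level strategy (chart cover, partition of unity, piecewise-constant grid, $\log d$-depth tree product via the Interaction Lemma) matches the paper's proof. But two technical attributions are wrong, and one of them reveals a genuine gap. First, the attention in Definition~\ref{def:attention} is ReLU, not softmax: $\attn(H)=VH\sigma((KH)^TQH)$. There is no ``softmax sharpening error'', and products of nonnegative bump values are computed \emph{exactly} by a single attention head via $\sigma(ab)=ab$ for $a,b\ge 0$. Hence Lemma~\ref{lemma:approximation} for $[0,1]^d$ achieves $\ffdepth=O(1)$, not $O(\log(1/\epsilon))$; your claim that the feed-forward depth arises from approximate multiplications is incorrect for this architecture. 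Likewise, $\kappa=O(\clen^2)=O(d^2\epsilon^{-2d/\beta})$ comes from the constant $C$ in Lemma~\ref{lemma:interaction} needed to make ReLU kill all but the intended token pair, not from any softmax temperature.

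The substantive omission is the chart-level indicator. The paper decomposes $f=\sum_n(\tilde f_n\circ\phi_n)\,\textbf{1}_{U_n}$, not merely $\sum_\alpha\rho_\alpha\tilde f_\alpha$. The indicator is essential because the transformer realization of $\tilde f_n\circ\phi_n$ is controlled only on $U_n$; for $x\notin U_n$ the linear chart map $\phi_n(x)$ can leave $[0,1]^d$ and the local subnetwork outputs garbage that your partition of unity cannot suppress (since $\rho_n$ is itself only approximated locally). The $\ffdepth=O(\log(1/\epsilon))$ in the theorem arises \emph{entirely} from this indicator: an exact two-layer ReLU realization of the piecewise-linear cutoff $\hat{\textbf{1}}_{r^2,\Delta}$ would need a slope of order $1/\Delta$, violating the weight budget $\kappa$, so the paper spreads that slope across $O(\log(1/\Delta))$ feed-forward layers. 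Without introducing $\textbf{1}_{U_n}$, you have neither a mechanism to control off-chart error nor a correct source for the feed-forward depth.
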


Theorem \ref{thm:approximation} is proved in Appendix \ref{appsec:proofapproxtheory2}. 
In our proof, we decompose $f(x)$ as a sum of terms over local neighborhoods $U_1,...,U_{C_\M} \subseteq \M$ covering $\M$. Approximations on overlapping neighborhoods containing $x$ will then be combined via a partition of unity (\textbf{PoU}) $\{\rho_n\}_{n=1}^{C_\M}$ which subordinates $\{U_n\}_{n=1}^{C_\M}$. 
This will give us the expression
$
        f(x) = \sum_{n=1}^{C_\M}f_n(x) \textbf{1}_{U_n}(x)
    $
    with $f_n = f\rho_n: \M \to \R$. On each local neighborhood $U_n$, we  project the input $x \in \M \subseteq \R^{\dext}$ to the tangent coordinate in  $[0,1]^{\din}$. This will give us the following \textit{local decomposition} of the target function:
    \begin{align}
    \label{eq:fdecom_p}
    \textstyle    f(x) = \sum_{n=1}^{C_\M} \tilde{f}_n \circ \phi_n (x) \textbf{1}_{U_n}(x)
    \end{align} where $\tilde{f}_n = f_n \circ \phi_n^{-1} : [0,1]^{\din} \to \R$ and $\phi_n: \M \to [0,1]^d$ is a projection onto the local tangent space. 
    We then construct transformers to approximate the $\tilde{f}_n, \phi_n, \textbf{1}_{U_n}$ components in \eqref{eq:fdecom_p}.
    A diagram of the constructed transformer network approximating $f: \M \to \R$ is given in Figure \ref{fig:manifold_transformer}. The following key lemma is used to efficiently approximate each low-dimensional function $\tilde{f}_n$ on $d$-dimensional coordinates.

\begin{figure}
    \centering  \includegraphics[scale=0.135]{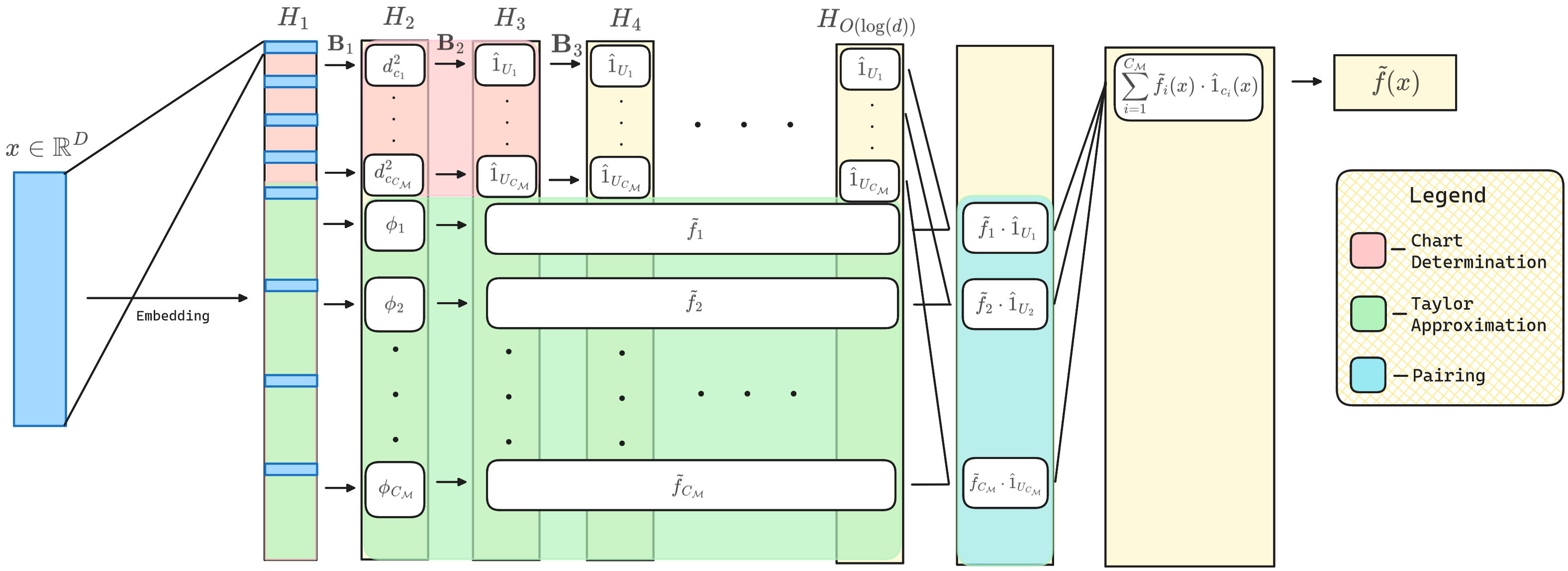}
    \caption{Diagram of the transformer architecture constructed in Theorem \ref{thm:approximation}. $\tnn$ computes approximations of $f(x)$ on each local chart $U_n \subseteq \M$ by first projecting $x$ to the tangent coordinates in $\R^d$ via $\phi_n(x)$ and then approximating $f(x)$ with local Taylor polynomials. A shallow sub-network computes indicators $\textbf{1}_{U_n}$ for each local chart in parallel. The results of the two sub-networks are then multiplied together and summed to produce the final result. 
    Here $H_i$ denotes the embedding matrix before the $i$th transformer block $\tb_i$.
    }
    \label{fig:manifold_transformer}
\end{figure}

\begin{lemma}\label{lemma:approximation}
     Let $H_f, R>0$, $d\in \mathbb{N}$ and $0 < \beta \leq 1$. For any  $\epsilon \in (0,1)$, there exists a  transformer neural network class $\tclass$ with parameters
     \begin{align*}
        &\tdepth = \bigO{\log(d)}, \quad \ffdepth = \bigO{1}, \quad \ffwidth = \bigO{1}, \quad l = \bigO{d\epsilon^{-\frac{d}{\beta}}}\\ 
        &\dhd = \bigO{1}, \quad m = \bigO{d\epsilon^{-\frac{d}{\beta}}}, \quad \kappa = \bigO{d^2\epsilon^{-\frac{2\din}{\beta}}},
    \end{align*}
    where $\bigO{\cdot}$ hides terms in $H_f$, such that, for any $\beta$-H\"older continuous function $f: [0,1]^d \to \R$, with H\"older constant no more than $H_f$ and $\|f\|_{L^{\infty}([0,1]^{\din})} \le R$, if the network parameters $\theta$ are properly chosen, this transformer network yields a function $\tnn_{\theta} \in \tclass$ such that 
    \begin{align*}
        \|\tnn_{\theta} - f\|_{L^{\infty}([0,1]^d)} \le \epsilon.
    \end{align*}
\end{lemma}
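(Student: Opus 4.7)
The plan is to approximate $f$ by a grid-based partition-of-unity expansion that factors across the $d$ coordinates, and then to realize this expansion with a transformer whose depth is driven only by the binary-tree combination of the $d$ factors. Because $\beta \le 1$, a piecewise-constant-on-overlapping-cells approximation suffices: I fix a uniform grid $\{y_j\}_{j=1}^{N} \subset [0,1]^d$ of spacing $\delta \sim \epsilon^{1/\beta}$, so $N = O(\epsilon^{-d/\beta})$, and I build trapezoidal bumps $B_j(x) = \prod_{i=1}^{d} B_j^{i}(x^{i})$, where each $B_j^{i}$ is a univariate trapezoid centered at $y_j^{i}$ that is exactly representable by two ReLUs and that gives a partition of unity subordinate to the grid. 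Setting $g(x) := \sum_{j=1}^{N} f(y_j)\, B_j(x)$ and using $\sum_j B_j \equiv 1$ together with the H\"older estimate $|f(x) - f(y_j)| \le H_f \|x - y_j\|_\infty^{\beta} \lesssim \delta^{\beta}$ on the support of $B_j$, a standard calculation yields $\|g - f\|_{L^{\infty}([0,1]^d)} \lesssim \epsilon$.

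Next I encode the input so that the transformer can evaluate $g$ in parallel. I spread $x^{1}, \ldots, x^{d}$ across $d$ tokens and allocate $N$ further groups of $d$ tokens each (one group per grid point), preloaded by the embedding $\encoder$ and the positional encoding $\PE$ with the grid-center entries $y_j^{i}$ and the value $f(y_j)$. This gives $l = O(dN) = O(d\epsilon^{-d/\beta})$ tokens of constant embedding dimension $\dhd$. A single transformer block then uses attention to route $x^{i}$ alongside $y_j^{i}$ into the $(j,i)$ slot, after which the block's constant-depth FFN computes the trapezoid $B_j^{i}(x^{i})$ exactly. All $dN$ univariate factors are thereby produced in parallel.

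The core step is to multiply the $d$ factors $B_j^{i}(x^{i})$ within each $j$-group using a balanced binary tree of $\lceil \log_2 d \rceil$ transformer blocks, invoking the Interaction Lemma \ref{lemma:interaction}: at each level, an attention head pairs sibling tokens within the same $j$-group and the following FFN realizes their product via $ab = \tfrac{1}{4}\bigl[(a+b)^{2} - (a-b)^{2}\bigr]$ with the squaring map approximated by a constant-depth, constant-width ReLU subnetwork whose accuracy is tuned by the weight magnitude $\kappa$ rather than by extra depth. After $O(\log d)$ blocks, each $j$-group holds an approximation of $B_j(x)$. A final block executes the weighted sum $\sum_{j=1}^{N} f(y_j)\, B_j(x)$ by a single attention operation that aggregates across the $j$-groups and writes the answer into the last column, where the fixed decoder $\decoder$ reads it off. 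This gives $\tdepth = O(\log d)$, $\ffdepth, \ffwidth, \dhd = O(1)$, and $l, m = O(d\epsilon^{-d/\beta})$.

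The main obstacle is the error accounting in the multiplicative binary tree. Each of the $\log d$ multiplication levels introduces a relative error tied to the ReLU approximation of $x^{2}$, and the final output is a sum of $N = \epsilon^{-d/\beta}$ terms, so a per-product error of $\eta$ produces a cumulative error of order $N\eta$ in $g$. To keep this below $\epsilon$, the squaring subnetworks (and hence the weight magnitudes) must scale polynomially in $N$, which is precisely the source of the claimed bound $\kappa = O(d^{2}\epsilon^{-2d/\beta})$; the factor $d^{2}$ absorbs the $\log d$ compounding and the normalization needed to keep intermediate products in a bounded range before multiplication. Once this accounting is made precise using the Interaction Lemma's quantitative guarantees, the stated architecture parameters follow by a direct block-by-block count.
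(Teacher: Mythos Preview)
Your overall scaffold---grid-based partition of unity with trapezoidal factors, spreading the $x^i$ and the grid data across $O(dN)$ tokens, a binary tree of $O(\log d)$ blocks to assemble the $d$-fold products, and a final aggregation---is exactly the paper's plan. The gap is in the multiplication step.

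You propose to realize each pairwise product via the polarization identity $ab=\tfrac14\bigl[(a+b)^2-(a-b)^2\bigr]$ with the squaring map approximated by a \emph{constant-depth, constant-width} ReLU FFN whose accuracy is ``tuned by the weight magnitude $\kappa$.'' This does not work for ReLU: a feed-forward ReLU network of fixed depth $L$ and width $w$ computes a piecewise-linear function with at most a fixed number of pieces (polynomial in $w^L$), so its $L^\infty$ error in approximating $x\mapsto x^2$ on a fixed interval is bounded below by a positive constant that is \emph{independent of the weight magnitudes}. Increasing $\kappa$ can rescale or shift the breakpoints but cannot add new ones, hence cannot drive the squaring error below that constant. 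Your subsequent error budget ($N\eta\le\epsilon$, forcing $\eta\lesssim\epsilon^{1+d/\beta}$) therefore cannot be met with $\ffdepth=O(1)$; Yarotsky-style squaring would need $\ffdepth=O(\log(1/\epsilon))$, contradicting the stated architecture.

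The paper avoids this entirely by observing that the attention score itself is a bilinear form followed by ReLU: with data kernels selecting the relevant coordinates, $\sigma(\langle Qh_{t_1},Kh_{t_2}\rangle)=\sigma(p\cdot p')=p\cdot p'$ exactly, since every partial product lies in $[0,1]$. Thus each level of the binary tree multiplies \emph{exactly} inside the attention head (the FFN is used only for bookkeeping/swapping rows), and the approximant $\hat f$ is represented \emph{exactly} by the transformer---there is no multiplicative error to accumulate. The large $\kappa=O(d^2\epsilon^{-2d/\beta})$ in the statement is not paying for squaring accuracy; it is the constant $C=O(\ell^2)$ from the Interaction Lemma needed to make each attention head interact with exactly one token pair when $\ell=O(d\epsilon^{-d/\beta})$. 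Once you replace your FFN-based multiplication by this attention-based exact product, the rest of your outline goes through as written.
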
 
Lemma \ref{lemma:approximation} is proved in Appendix \ref{appsec:proofapproxtheory1}. We develop a novel lemma - \textit{Interaction Lemma} \ref{lemma:interaction}, implementing a highly-sparse pairwise interaction between two arbitrary tokens $\token_{t_1}, \token_{t_2}$, as a crucial architectural building block allowing us to easily implement more complex functions, architecture serialization, and parallelization (\ref{lemma:transformer_parallelization}).
This result highlights a distinct advantage of transformer function approximation over ReLU function approximation \citep{Yarotsky2016ErrorBF}: \textbf{A transformer network only needs a constant $\bigO{\log(\din)}$ number of layers to approximate $f : [0,1]^{\din} \to \R$} independent of the desired accuracy $\epsilon$. In contrast, the depth of ReLU feed-forward networks is in the order of $\log(\epsilon^{-1})$ \citep{Yarotsky2016ErrorBF} 
This is  desirable from an empirical point of view, where wider  networks instead of deeper ones tend to achieve superior performance \citep{kaplan2020scaling, Lee_2020}.

\section{Predicting Empirical Scaling Laws and Validation on LLMs}
\label{sec:experiments}



Our theory provides practical insights by predicting neural scaling laws for transformers, as given in \eqref{eq:genscalingintro} and \eqref{eq:appscalingintro}, by explicitly quantifying the data scaling exponent $\alpha_D$ and the model scaling exponent $\alpha_N$ as a function of the intrinsic dimension (\textbf{ID}) $d$.
If we assume the language modeling objective has Lipschitz regularity such that $\beta = 1$ in Assumption \ref{assumption:function_p}, then Theorem \ref{thm:statistical} predicts the scaling law between the squared generalization error and the data size $n$, as given in \eqref{eq:genscalingintro}, with $\alpha_D  = \frac{2}{2+d}$, and the model scaling law with exponent given by $\alpha_N = \frac{2}{\din}$. For a full derivation refer to Section \ref{sec:exp_derivation}. We will observe how well our theory predicts these exponents both by pretraining small models from scratch and evaluating existing open-source model suites \citep{biderman2023pythia}.


In the following, we denote $\alpha_D$ and $\alpha_N$ as the  scaling exponents predicted by our theory, where we numerically estimate the intrinsic dimension of data, denoted by $d_\mathcal{D}$. The empirical exponents are denoted by $\hat{\alpha}_D$ and $\hat{\alpha}_N$. To obtain the data scaling exponent $\hat{\alpha}_D$, we plot the  test loss (comparable to squared error) versus the data size $n$ in log-log scale, fit the log-log curve with a line, and then obtain $\hat{\alpha}_D$ from the magnitude of the slope. The model scaling exponent $\hat{\alpha}_N$ is obtained similarly. 

\textbf{Estimating the ID of Text}\quad To predict scaling exponents, we must first estimate the intrinsic dimension of our pretraining dataset $\mathcal{D}$. While we can do this directly for image datasets \citep{imagenet, Pope2021TheID}, we cannot do this directly for textual datasets. Instead, we will estimate the intrinsic dimension of the input data by estimating the intrinsic dimension of token embeddings. Specifically, we will represent each input token with its corresponding final-layer token embedding. Given a pretraining test set $\mathcal{D}_{\textup{test}}$ we embed a random $\clen = 1024$ length subsequence from each document $D_k \in \mathcal{D}_{\textup{test}}$. We then randomly sub-sample $32$ final-layer tokens from the embedded subsequence and shuffle together all the embeddings. To estimate the ID of the embeddings we use the Maximum Likelihood Estimation ID algorithm \citep{NIPS2004_74934548, scikit-learn} with $K = 20$ neighbors. We split the sampled embedding tokens into batches of $4096$ and run the MLE estimator on each batch, averaging together for the final result. Unless otherwise specified, we embed each document $D_k \in \mathcal{D}_{\textup{test}}$ using a 125 million parameter model $M$ with $\tdepth = 12$ layers and embedding dimension $\dhd = 768$. We first pretrain $M$ on the full $\mathcal{D}_{\textup{train}}$ for $200,000$ steps or until convergence.


\begin{figure}[t!]
    \centering
    \hspace{-0.5cm}
\includegraphics[scale=0.31]{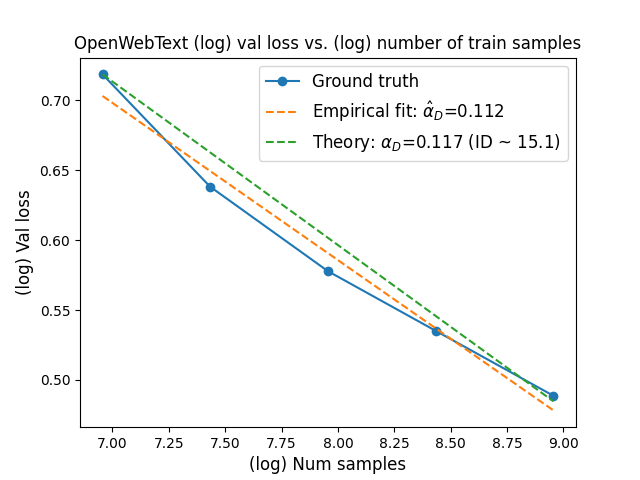}
    \hspace{-0.5cm}
\includegraphics[scale=0.31]{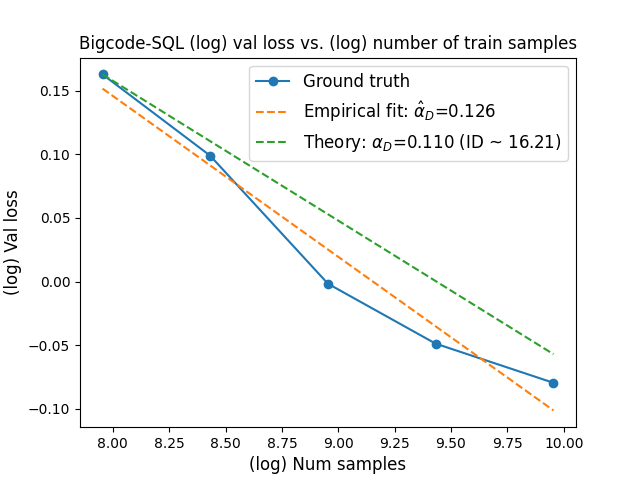}
        \hspace{-0.5cm}
\includegraphics[scale=0.31]{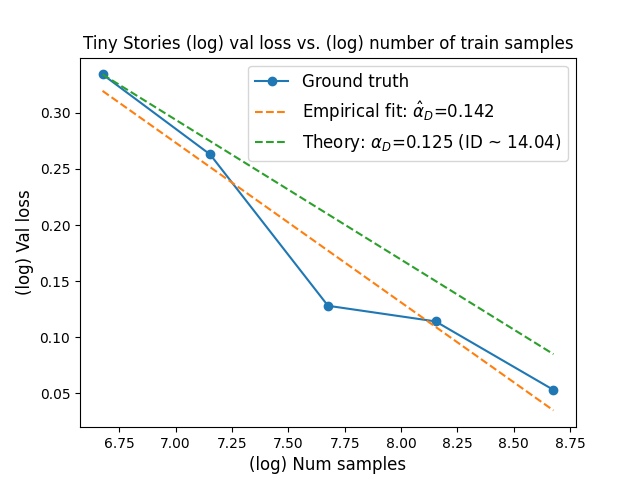}
        \hspace{-0.5cm}
    \caption{Observed and predicted data scaling laws on OpenWebText, The Stack-SQL, and Tiny Stories pretraining datasets. All estimates are close $(\pm 0.02)$ and appear to reflect varying levels of pretraining data complexity. \textbf{Note:} $\hat{\alpha}_D$ denotes the empirically observed data scaling exponent and ${\alpha}_D$ denotes the theoretically estimated exponent. }
\label{fig:data_scaling_laws}
\vspace{-0.4cm}
\end{figure}

\textbf{Intrinsic dimension predicts the empirical data scaling exponent $\hat{\alpha}_D$}\quad To validate our prediction of the dataset scaling exponent $\alpha_D$ we pretrain a series of 125 million parameter GPT-style LLMs on three different datasets: OpenWebText \citep{Gokaslan2019OpenWeb}, the SQL portion of The Stack \citep{Kocetkov2022TheStack}, and Tiny Stories \citep{eldan2023tinystories}. We train across three orders of dataset size to fit scaling laws. Detailed hyperparameters can be found in the Appendix \ref{sec:hparams}. We report the observed scaling laws in log scale in Figure \ref{fig:data_scaling_laws}. In addition, we plot our predicted test loss whose slope is given by $\alpha_D = \frac{2}{2+d_\mathcal{D}}$ where $d_\mathcal{D}$ is the our estimated intrinsic dimension.

Empirically, we find all three datasets produce nearly log-linear laws whose exponents lie between $0.1 < \hat{\alpha}_D < 0.15$. Tiny Stories has the largest exponent, indicating the fastest rate of convergence, followed by the SQL dataset, followed by OpenWebText. This matches our rough intuition, since Tiny Stories is a synthetically generated dataset with less complexity than the other two datasets and thus easier to learn. The predicted exponents $\alpha_D$ generally over-estimate $\hat{\alpha}_D$ but otherwise closely match up to $\pm 0.02$ absolute error. Additionally, the predicted $\alpha_D$ reflect the previouly mentioned differences in complexity of pretraining datasets. In particular, Tiny Stories has a smaller estimated ID than both OpenWebText and SQL, resulting in a larger predicted ${\alpha}_D$ as desired.

\begin{figure}
    \centering
    \includegraphics[scale=0.4]{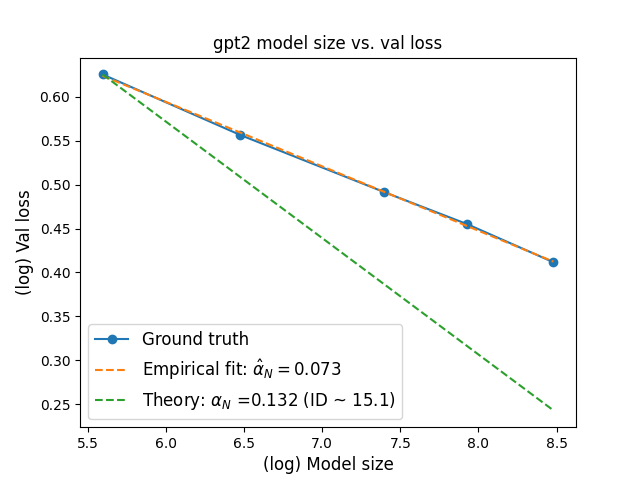}
    \includegraphics[scale=0.4]{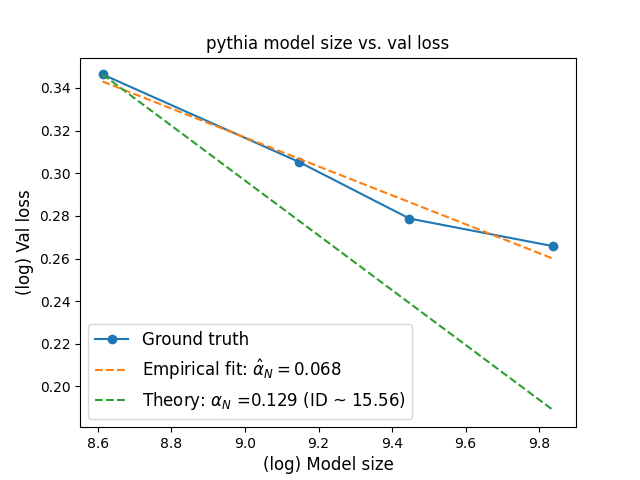}
    \caption{Observed and predicted model scaling laws in model size on GPT2 and Pythia scaling suites. $\alpha_N$ denotes the empirically observed scaling exponent, and $\hat{\alpha}_N$ denotes the theoretically predicted exponent. \textbf{Note:} we estimate $\alpha_N$ for GPT2 using OpenWebText.}
    \label{fig:model_scaling_laws}
\end{figure}

\textbf{Predicting empirical model scaling exponent $\hat{\alpha}_N$ with intrinsic dimension}\quad To validate our predictions of the model scaling exponent ${\alpha}_N = \frac{2}{d_\mathcal{D}}$, we evaluate two model scaling suites: GPT2 \citep{Radford2019LanguageMA} and Pythia \citep{biderman2023pythia}. We refer to \citet{kaplan2020scaling} for GPT2's $\hat{\alpha}_N$ and estimate $\hat{\alpha}_N$ using OpenWebText as a proxy for GPT2's pretraining data. We compute $\alpha_N$ for Pythia by evaluating each model on The Pile's test set \citep{thepile}. We also estimate $d_\mathcal{D}$ on the publicly available pretraining data to predict ${\alpha}_N$. The results are reported in Figure \ref{fig:model_scaling_laws}. Our predicted ${\alpha}_N$ under-estimates the empirical $\hat{\alpha}_N$. We conjecture this to be due to a number of factors including possible under-training of the largest models and the intrinsic entropy of the data distribution.

{\bf Ablating the impact of model architecture on the estimated ID} \quad
Practical application of our theory relies on a good estimate of the intrinsic dimension. However, there are many factors potentially biasing our estimate. Of particular interest is the embedding model's embedding dimension, depth, context length, and number of parameters. We ablate these factors in Figure \ref{fig:ID_vs_model_ablations}, plotting estimated ID against each factor.

\begin{figure}[t!]
    \centering
    \includegraphics[scale=0.35]{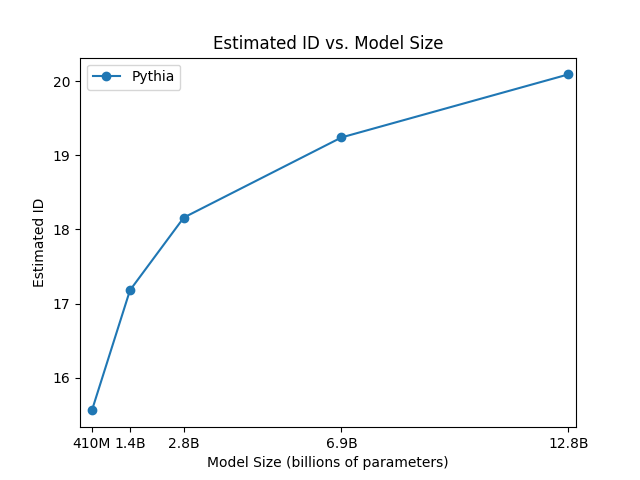}
    \includegraphics[scale=0.35]{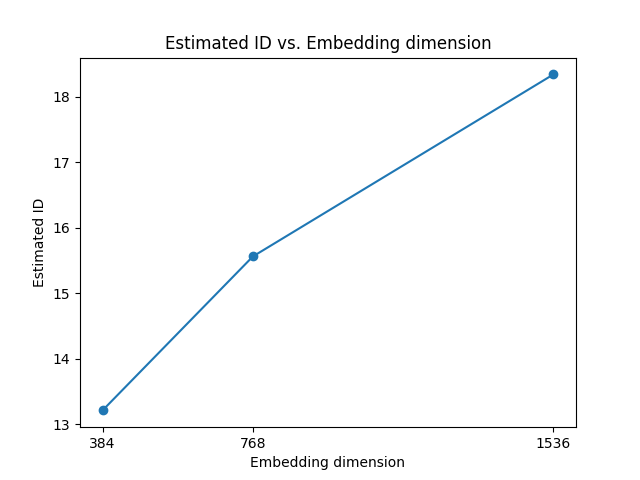}
    \includegraphics[scale=0.35]{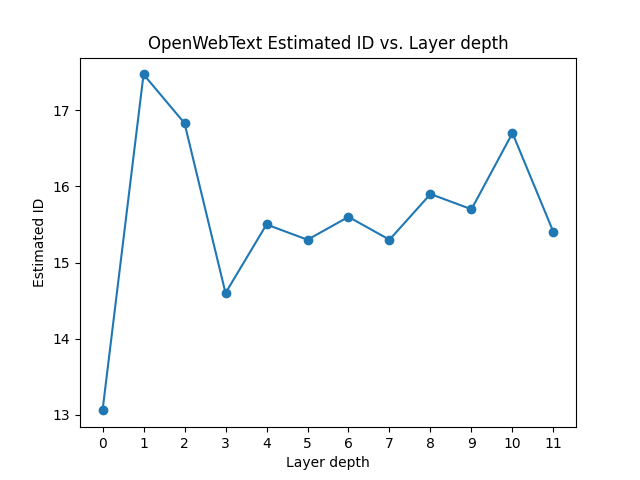}
    \includegraphics[scale=0.35]{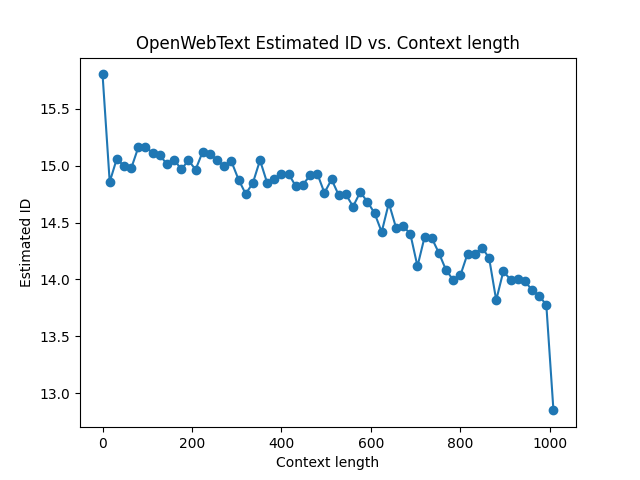}
    \caption{\textbf{Top left:} Estimated ID vs. number of parameters. \textbf{Top right:} Estimated ID vs. the embedding dimension. \textbf{Bottom left:} Variation of estimated ID across model layers. \textbf{Bottom right:} Variation of estimated ID across context position.}
    \label{fig:ID_vs_model_ablations}
\end{figure}

Overall, we find the estimated ID is fairly stable across each factor. As the number of parameters increases, the estimated ID of The Pile slightly increases from 15.56, via a 410 million parameter model, to 20.02 with a 12.8 billion parameter model. ID on OpenWebText behaves similarly when increasing the embedding dimension, increasing from 15.56 when $\dhd = 768$ to 18.68 when $\dhd = 1536$. When fixing a single model, we find the ID across intermediate embedding layers is small initially but then increases and decreases again, stabilizing around the ID of the final layer. We observe that the ID appears to inversely correlate with sequence length, decreasing from $15.86$ for very short sequences to $12.9$ for sequences around $1024$ tokens.

{\bf Predicting $\alpha_N$ from $\alpha_D$ (and vice versa) without estimating ID}\quad Above we estimated $\alpha_D$ and $\alpha_N$ by first estimating the intrinsic dimension $d$ for a model's pretraining dataset. However, estimating $d$ may not always be possible when pretraining data is not public. Alternatively, we can predict $\alpha_D$ in terms of $\alpha_N$ (and vice versa) without ever needing to estimate $d$:
\begin{align*}
    \alpha_D = \frac{2}{2+d} = \frac{2 \frac{1}{d}}{2\frac{1}{d}+1} = \frac{\alpha_N}{\alpha_N + 1},\quad \alpha_N = \frac{\alpha_D}{1 - \alpha_D}
\end{align*} See Table \ref{tab:id_free} for ID-free estimations of empirically observed exponents in the literature \citep{KaplanIDScalingLaws,hoffmann2022training}. 

\begin{table}[h]
\centering
\begin{tabular}{cc|cc}
\multicolumn{2}{c|}{GPT-2} & \multicolumn{2}{c}{Chinchilla} \\ \hline
$\hat{\alpha}_N = 0.076$ & $\alpha_D = 0.070$ & $\hat{\alpha}_N = 0.34$ & $\alpha_D = 0.25$ \\ 
$\hat{\alpha}_D = 0.095$ & $\alpha_N = 0.106$ & $\hat{\alpha}_D = 0.28$ & $\alpha_N = 0.33$ \\ \hline
\end{tabular}
\caption{ID-free estimation of scaling exponents for GPT-2 and Chincilla.}
\label{tab:id_free}
\end{table}

\section{Related Work}
\label{sec:relatedwork}

The theoretical properties and advantages of transformers have been studied from many different perspectives \citep{jelassi2022vision,zhang2022analysis,bai2023transformers,PerezTuringComplete, Sanford2024TransformersPC,hu2024statisticalratesprovablyefficient}. Most related to us are \citet{Yun2019AreTU,edelman2022inductive,StatisticallyMeaningfulApproximation,Takakura2023ApproximationAE,hu2024statisticalratesprovablyefficient} in which transformers were studied from an approximation viewpoint. The work in \citet{Yun2019AreTU} proved that transformer models are universal approximators of continuous permutation equivariant sequence-to-sequence functions with compact support, 
while the network size suffers from the curse of dimensionality (the number of entries in the input sequence).  \citet{Takakura2023ApproximationAE} studied the approximation and estimation ability of Transformers as seq-to-seq functions with infinite dimensional input, where anisotropic smoothness avoids
the curse of dimensionality. 


In the applications of Large Language Models (LLMs), empirical findings have demonstrated some correlation between the performance of transformers and the low-dimensional data structures \citep{Razzhigaev2023TheSO, min2023an, Aghajanyan2020IntrinsicDE, pandey2024gzippredictsdatadependentscaling}. \citet{Razzhigaev2023TheSO} investigated the intrinsic dimension of embeddings in transformer architectures, and suggested an encoder and decoder embedding property. 
Most similar to our work is \citet{KaplanIDScalingLaws} which demonstrates an empirical connection between neural scaling laws and the intrinsic dimension of data. While they briefly discuss predictions for LLMs, their theory works best for predicting student-teacher model setups and image classification tasks which seem to enjoy more regularity (and a faster rate of convergence) than language modeling. Despite these empirical findings, we are not aware of any rigorous theoretical justification connecting the scaling laws of transformers with the intrinsic dimension of data. Our paper complements this line of research with  statistical estimation and mathematical approximation theories which well-predict the behavior observed in practice.


\section{Conclusion}
\label{sec:conclusion}

\textbf{Conclusion}\quad This paper establishes statistical and approximation theory results for transformers approximating H\"older continuous functions on low-dimensional data manifolds. The resulting bound on the generalization error suffers from exponential dependence only in the intrinsic dimension $\din$. The constructed approximations of low-dimensional functions are shallow, requiring only $\bigO{\log(\din)}$ layers independent of the desired accuracy. We demonstrate this theory is accurate in practice by predicting scaling laws in both model size and data size for LLMs trained on natural language datasets. We pay careful attention to the sensitivity of the estimated intrinsic data dimension, finding it is relatively stable with respect to several relevant hyperparameters.

\textbf{Limitations and Broader Impact}\quad One important question unanswered by this work is how the intrinsic data dimension may affect the computational scaling exponent $\alpha_C$. Future work may investigate this direction. Additionally, our empirical experiments make the simplifying assumption that the underlying target function possesses Lipschitz regularity ($\beta = 1$). Better estimates of the correct regularity would likely improve the accuracy of our predictions. More broadly, our work improves fundamental understanding of transformer-based LLMs and improves our ability to theoretically and safely predict future capabilities. 



\section{Acknowledgements}

This research is partially supported by NSF DMS 2145167 and DOE SC0024348.

\bibliography{neurips2024_conference.bib}
\bibliographystyle{plainnat}

\newpage
\appendix

\section{Pretraining Hyperparameters}\label{sec:hparams}

\begin{table}[h!]
    \centering
    \begin{tabular}{@{}lr@{}}
        \toprule
        Architecture hparams \\
        \toprule
        num layers & 12 \\
        num attention heads & 12 \\
        embedding dimension & 768 \\
        context length & 1024 \\ 
        \toprule 
        Optimization hparams \\
        \toprule
        Optimizer & AdamW \\
        max lr & 6e-4 \\
        min lr & 6e-5 \\
        lr schedule & linear warmup + cosine decay \\
        warmup steps & 2000 \\
        max steps & 200,000 \\
        global batch size & 1920 \\
        \bottomrule
    \end{tabular}
    \caption{Default hyperparameters for all training jobs. All training was done on four RTX 6000s.}
    \label{tab:hparams}
\end{table}

Table \ref{tab:hparams} lists the default hyper-parameters we use for pretraining. All training was done on four RTX 6000s.

\section{Deriving Lanugage Model Scaling Laws from Statistical and Approximation Theory}\label{sec:exp_derivation}

\subsection{Squared Regression Error}
First we extract bounds on scaling laws in the case of regression squared error. We have the bound from the proof of Theorem \ref{thm:statistical}
\begin{align*}
    \int_{\M}\big(f(x) - \hat{\tnn}_n(x)\big)^2dQ(x) \leq \tildeO{\epsilon^2 + \frac{\dext \din^2 \epsilon^{-\frac{\din}{\beta}}}{n}}
\end{align*} where $\epsilon$ is the approximation error such that
\begin{align*}
    \inf_{\tnn \in \tnn} \Linf{f - \tnn}{\M} < \epsilon
\end{align*} Let the \textit{model size} $N$ of a transformer $\tnn \in \T$ be $N = \tdepth (\dhd^2 (3\numA + \ffdepth)) = \log(\din)(25(3\epsilon^{-\frac{\din}{\beta}} + \log(\epsilon^{-1}))) = \tildeO{\epsilon^{-\frac{\din}{\beta}}}$. Write the squared generalization error as $$L_{\text{sq}}(N,n) = \int_{\M}\big(f(x) - \hat{\tnn}_n(x)\big)^2dQ(x).$$ Then
\begin{align*}
    L_{\text{sq}}(N,n) \leq \tildeO{\epsilon^2 + \frac{\dext \din^2 \epsilon^{-\frac{\din}{\beta}}}{n}} = \tildeO{N^{-\frac{2\beta}{\din}} + \frac{N}{n}}
\end{align*} In the model scaling regime, when data is plentiful, we have $\frac{N}{n} << N^{-\frac{2\beta}{\din}}$ which implies the behavior of $L_{\text{sq}}(N,n)$ is dominated by $N^{-\frac{2\beta}{\din}}$. This gives us the model scaling exponent as
\begin{align*}
    \alpha_N = \frac{2\beta}{\din}
\end{align*} 
For the data scaling exponent $\alpha_D$ we will choose $N$ to balance both error terms. The will predict how data size and model size should scale together to achieve a minimal generalization error. We should have
\begin{align*}
    N^{-\frac{2\beta}{\din}} \asymp \frac{N}{n} \iff n \asymp  N^{1 + \frac{2\beta}{\din}} \asymp  N^{\frac{2\beta+\din}{\din}} \iff N \asymp  n^{\frac{\din}{2\beta+\din}}
\end{align*}
where $\asymp$ denotes in the same order.
Substituting this into the error bound gives
\begin{align*}
    L_{\text{sq}}(N,n) \leq \tildeO{2\frac{N}{n}} = \tildeO{n^{\frac{\din}{2\beta+\din}-1}} = \tildeO{n^{-\frac{2\beta}{2\beta+\din}}},
\end{align*} 
which gives rise to
\begin{align*}
    \alpha_D = \frac{2\beta}{2\beta+\din}.
\end{align*}

\subsection{From Regression to Classification}

After establishing statistical and approximation theory for transformers for regression, we now seek to apply our theory to classification. In language models,  the next-token prediction task is a multi-class classification problem, where transformers are trained to predict the probability of the next word out of a large dictionary. 

For simplicity, we consider binary classification here.
Let $(x,y)$ be a random couple taking values in $\mathcal{M} \times \{0,1\}$ with joint distribution $P$, where $\mathcal{M}$ is a manifold satisfying Assumption \ref{assumption:manifold_p}.
Let $P_x$ be the marginal distribution of $x$.
Here $x$ stands for the input feature and $y$ is the corresponding label.
The classification goal is to predict the label $y$ given the value of $x$. A decision rule is given by a function $f: \mathcal{M} \rightarrow \{0,1\}$. The performance of a decision rule $f$ is measured by the misclassification error 
$$R(f) := P(y\neq f(x)).$$
The Bayes decision rule  has the form $$f^*(x) = \textbf{1} \left\{\eta(x) \ge 1/2 \right\}$$ where \textbf{1} denotes the indicator function and \begin{equation}
\eta(x) : =P(y=1|x)
\label{eq:eta}
\end{equation}
is the regression function of $y$ on $x$. For binary classification, the goal is to estimate the probability function $\eta(x)$. If $\eta$ is a $\beta$-H\"older function satisfying Assumption \ref{assumption:function_p}, our approximation  theory in Theorem \ref{thm:approximation} gives rise to a transformer neural network $\eta_\theta$ such that 
\begin{equation}
\|\eta_\theta -\eta\|_{L^\infty(\mathcal{M})} \le \epsilon
\label{eq:etaerror}
\end{equation}
for an arbitrary small $\epsilon>0$.

We next consider the plug-in estimate of $\eta_\theta$ \citep{audibert2007fast}:
$f_\theta(x) = \textbf{1}\left\{ \eta_\theta \ge 1/2\right\}.$
The excess risk of the plug-in estimate $f_\theta$ is
\begin{align*}
\mathcal{E}(f_\theta) &= R(f_\theta) -R(f^*)
\\
&=  P(y \neq f_\theta(x)) - P(y \neq f^*(x)) 
\\
& =   P(y = f^*(x)) - P(y = f_\theta(x)) 
\\
& = \mathbb{E}_x {\large[} \textbf{1}\left\{f^*(x) = 1 \right\} \eta(x) + 
\textbf{1}\left\{f^*(x) = 0 \right\} (1-\eta(x))
\\
& \quad \ -\textbf{1}\left\{f_\theta(x) = 1 \right\} \eta(x)
-\textbf{1}\left\{f_\theta(x) = 0 \right\} (1-\eta(x)) {\large]}
\\
& = \mathbb{E}_x \left[ |2\eta(x)-1| \mathbf{1}\{f_\theta(x) \neq f^*(x)\}\right].
\end{align*}

We next discuss how the regression error in \eqref{eq:etaerror} impacts the excess risk $\mathcal{E}(f_\theta)$. 
For classification problems, the classification error depends on how well the conditional probability $\eta(x)$ is estimated, and how many points are close to the decision boundary.
Following \citet{audibert2007fast}, we assume the following margin condition: There exist $c_M >0$ and $\gamma \ge 0$, such that for any $t>0$, 
\begin{align}
    P_x(0<|\eta(x) -1/2| \le t) \leq c_M t^{\gamma}.
    \label{eq:margin}
\end{align}  
A smaller $\gamma$ means more samples cluster around the decision boundary with a higher likelihood of falling into either class. This is not expected to be the case for natural data, where for most samples it is easy to tell to which class they belong \citep[Figure 2]{kim2019fastconvergenceratesdeep}. As a result, we assume $\gamma \ge 1$. 

Under the margin condition in \eqref{eq:margin}, we follow \citet{audibert2007fast} to decompose the excess risk $\mathcal{E}(f_\theta)$ such that for any $\delta>0$:
\begin{align}
\mathcal{E}(f_\theta)
& = \mathbb{E}_x \left[ |2\eta(x)-1| \mathbf{1}\{f_\theta(x) \neq f^*(x)\}\mathbf{1}\{|\eta(x)-1/2| \le \delta\}\right]
\nonumber
\\
& \quad + \mathbb{E}_x \left[ |2\eta(x)-1| \mathbf{1}\{f_\theta(x) \neq f^*(x)\}\mathbf{1}\{|\eta(x)-1/2| > \delta\}\right]
\nonumber
\\
& \le 2\delta P_x(|\eta(x)-1/2|  \le \delta) + 2
\mathbb{E}_x \left[ |\eta_\theta(x) -\eta(x) |\mathbf{1} \left\{ |\eta_\theta(x) -\eta(x)|>\delta \right\} \right]
\nonumber
\\
& \le 2c_M\delta^{1+\gamma} + 2
\mathbb{E}_x \left[ |\eta_\theta(x) -\eta(x) |\mathbf{1} \left\{ |\eta_\theta(x) -\eta(x)|>\delta \right\} \right].
\label{eq:excessbound}
\end{align}
Plugging the regression error in \eqref{eq:etaerror} to the excess risk bound in \eqref{eq:excessbound} with $\delta =\epsilon$, we obtain
\begin{align*}
\mathcal{E}(f_\theta) \le 2c_M \epsilon^{1+\gamma}.
\end{align*}
When the margin assumption satisfies $\gamma =1$, the excess risk bound becomes
\begin{align*}
\mathcal{E}(f_\theta) \le 2c_M \epsilon^{2},
\end{align*}
which demonstrates a connection between the classification risk bound and the squared regression error for the $\eta$ function in \eqref{eq:eta}. This argument partially justifies the connection between empirical neural scaling laws in large language models and the squared regression error. We will leave a rigorous mathematical argument as future work.

\subsection{Cross-Entropy Based Language Model}

In practice, language models are trained and evaluated using cross-entropy loss. 
We consider the next-token prediction in language models. Per-sample (token), language models are trained to minimize the multi-class cross-entropy loss over a large number of classes/tokens:
\begin{align}
    L_{\text{cr}}(n) = -\frac{1}{n}\sum_{i=1}^n \sum_{y=1}^V \textbf{1}_{y=y^*_i}\ln(P(y|x_i))
    \label{eq:entropymulti}
\end{align} where $V$ is the vocabulary size of the model (number of classes), $y^*_i$ is the ground-truth label of $x_i$,  $\textbf{1}_{y=y^*_i}$ is the indicator function for the event $y=y^*_i$, and $P(y|x)$ is the conditional probability of labels given $x$. For the next-token prediction, transformers are trained  to estimate the probability function $P(y|x)$, and the next token is predicted to the word with the highest probability. The test loss is evaluated using cross-entropy on test data. We will leave the error bound on the cross-entropy loss as future work.

\commentout{
In the simple case with binary labels, i.e. $y \in \{0,1\}$, the multi-class cross-entropy in \eqref{eq:entropymulti} reduces to the binary cross-entropy:
\begin{align*}
    L_{\text{cr}}(n) = -\frac{1}{n}\sum_{i=1}^n \left(y_i\ln(p(x_i))+(1-y_i)\ln(1-p(x_i))\right)
\end{align*}
where $p(x_i) = P(y=1|x_i)$.

\begin{align*}
    D_\epsilon = \{x: {\rm dist}(x, D^*) \leq \epsilon\}.
\end{align*}
For binary classification, the classification error depends on how well the conditional probability $P(y=1|x)$ is estimated, and how many points are close to the decision boundary.
Following \citet{steinwart2008support,kim2019fastconvergenceratesdeep}, we assume the following margin condition: There exist $c_M >0$, $\epsilon_0>0$ and $\gamma \in [1,\infty]$, such that for any $\epsilon \in (0,\epsilon_0$, 
\begin{align*}
    P(D_\epsilon) \leq c_M\epsilon^{\gamma}.
\end{align*}  
A smaller $\gamma$ means more samples cluster around the decision boundary with a higher likelihood of falling into either class. This is not expected to be the case for natural data, where for most samples it is easy to tell to which class they belong. 
For $\gamma \geq 2$, the cross-entropy error for binary classification is connected with the squared regression error for the conditional probability function $P(y=1|x)$. Our argument follows from the work in \citet{kim2019fastconvergenceratesdeep}, which shows that, under a an assumption on the well-behavedness of $D_\epsilon$, we have
\begin{align*}
    L_{\text{cr}}(n) \leq \tilde{O}(n^{-\frac{\beta \gamma}{\beta \gamma + d - 1}})
\end{align*} where $\gamma$ controls the probability that $x$ falls into $D_\epsilon$. 
This result recovers the same bounds computed for the squared-error loss $L_{\text{sq}}$, further motivating our predictions for the scaling exponents $\alpha_D, \alpha_N$ in language modeling. }



\section{Notation}
\label{appsec:notation}

$H$ will represent the \textit{embedding matrix} of a transformer and $h_i$ will represent the $i$th \textit{token} (column) of the embedding matrix. $h_i^j$ will denote the $j$th component of the vector $h_i$. $\|\cdot\|_p$, $p > 1$, will denote the $p$th norm of vectors, and $\|\cdot\|_{p,q}$ will denote the component-wise matrix norm. For a matrix $A \in \R^{m \times n}$, $\|A\|_{p,q} = (\sum_{j=1}^n (\sum_{i=1}^m |A_{ij}|^p)^{\frac q p})^{\frac 1 q}$.  In particular, we denote $\|A\|_{\infty} =  \|A\|_{\infty,\infty}$ and $\|A\|_{1} =  \|A\|_{1,1}$ for matrices. $\|\cdot\|_{L^p(U)}$ will denote the $L^p$ norm of a function on $U \subseteq \R^{\dext}$. Neural network blocks will always be \textbf{bolded}. Sometimes neural network blocks will also be sub-scripted with their corresponding vector of \textit{weights} $\theta$. We use $\|\theta\|_{\infty}$ to denote the largest magnitude of the weight parameters.   However we will often omit $\theta$ and leave the dependence as implicit. In many places, we will also write $\theta_{\textbf{NN}}$ to denote the weights for a neural network $\textbf{NN}$. $e_i$ will denote the $i$th standard basis vector where the $i$th component is $1$ and all other entries are $0$. $\sigma$ will always denote the ReLU activation. $B(x, r)$ denotes the Euclidean ball of radius $r$ centered at $x \in \R^D$. For a vector $\token_t$, $\token_t^{i:j} \in \R^{j-i}$ will denote the vector such that $(\token_t^{i:j})^k = \token_t^k$ i.e. the contiguous components of $\token_t$ from $i,...,j-1$.


\section{Details about Transformer Neural Networks}
\label{appsec:transformer}

We have defined transformer neural networks in Definition \ref{def:transformer}. Some detailed definitions are given below.

\commentout{
\begin{definition}[Transformer Neural Network]\label{def:transformer}
    We define a transformer  neural network (\textbf{NN}) $\tnn$ as a composition of functions of the form 
    \begin{align}
    \tnn(x) = \decoder \circ \tb_{\tdepth} \circ ... \circ \tb_1 \circ (\PE + \encoder (x))
\end{align} which is parameterized by
\begin{itemize}
    \item $\tdepth$: The number of \textit{transformer blocks} $\tb_i$ in $\tnn$.
    \item $\numA$: The maximum number of attention heads per transformer block.
    \item $\ffdepth$: The max depth of the feed-forward layers per block.
    \item $\dhd$: The token embedding dimension.
    \item $\dext$: The input dimension.
    \item $\clen$: The number of hidden tokens.
    \item $\kappa$: A bound on the magnitude of network parameters.
\end{itemize}.

 We can then define each component of the composition as
\begin{itemize}
    \item $x \in \R^{\dext}$ is the input.
    \item A linear embedding layer $\encoder : \R^{\dext} \to \R^{\dhd \times \clen}$. In this work we will always take $\encoder = E' \circ U$ where $U \in \R^{\clen \times \dext}$ and $E' \in \R^{\dhd \times 1}$ applied columnwise is fixed. We call embedded output $H = \encoder (x)$ the first \textit{embedding matrix} whose column are referred to as \textit{tokens}.
    \item $\PE \in \R^{\dhd \times l}$ is a fixed matrix implementing the transformer \textit{positional encoding}. 
    \item Transformer blocks $\tb_i: \R^{d \times l} \to \R^{d \times l}$ for $i \in \{1,...,\tdepth\}$ which are residual compositions of \textit{multi-headed attention} (\textbf{MHA}) layers $\mha$ and feed-forward layers $\ff$. See Figure \ref{fig:transformer_block} for a diagram of a transformer block.
    \item A linear decoding layer $\decoder : \R^{\dhd} \to \R$ applied columnwise to each token embedding.
\end{itemize}.

\end{definition} 
}

\begin{figure}
    \centering
    \includegraphics[scale=0.25]{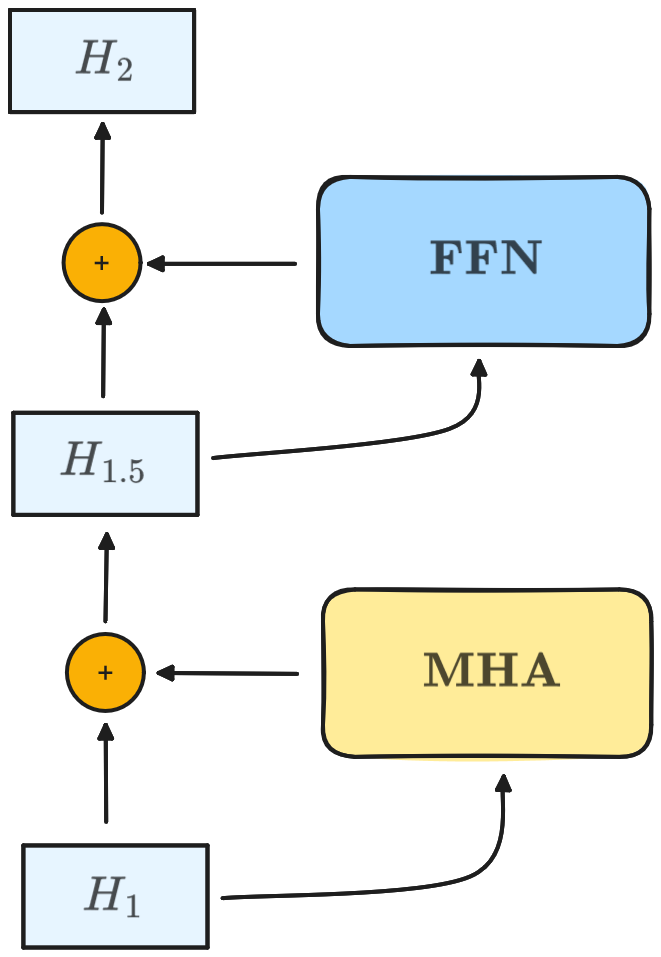}
    \caption{Diagram of transformer block.}
    \label{fig:transformer_block}
\end{figure}

\begin{definition}[Transformer Block]\label{def:transformer_block}
    We define a transformer block $\tb$ as a residual composition of the form 

\begin{align}
    \tb(H) = \ff(\mha(H) + H) + \mha(H) + H
\end{align} taking as input an embedding matrix $H \in \R^{\dhd \times \clen}$ and $\mha$ is a multi-headed attention layer and $\ff$ is a feed-forward layer. To define the multi-headed attention layer we first define the \textit{attention} mechanism.
\end{definition}

\begin{definition}[Attention]\label{def:attention}

    \begin{align}
    \attn_{Q,K,V}(H) = VH\sigma((KH)^TQH)
\end{align} where $Q,K,V \in \R^{\dhd \times \dhd}$ are referred to as the query, key, and value matrices and $\sigma(x) = \max(0,x)$ is the ReLU activation function. Often we will omit the dependence of $A$ on $Q,K,V$. We note it will be convenient to write the action of $A$ on the $i$th column $\token_i$ of $H$ as 

\begin{align}
    \attn(\token_i) = \sum_{j=1}^{\clen} \sigma(\langle Q\token_i, K\token_j \rangle) V\token_j
\end{align}. This allows us to interpret the $i$th column of the output of $A$ as a linear combination of the values weighted by the interaction of $\token_i$th query and the $\token_j$th key. Multi-headed attention can then be defined as
\begin{align}
   \mha(H) = W_O(\textup{concat}_j(V_jH\sigma((K_jH)^TQ_jH))
\end{align} where the output of each of $j\in \{1,...,M\}$ attention heads is concatenated and $W_O \in \R^{\dhd \times \numA \dhd}$. Frequently we will simply take $W_O$ to be a sum so that
\begin{align}
   \mha(H) = \sum_{j=1}^{\numA} V_jH\sigma((K_jH)^TQ_jH)).
\end{align} 
\end{definition}

We also formally define the feed-forward layer $\ff$:
\begin{definition}[Feed-forward Layer]\label{def:ffn}
    A feed-forward layer of depth $\ffdepth$ and width $\ffwidth$ is of the form
\begin{align*}
    \ff(\token) = W_{\ffdepth}\sigma(W_{\ffdepth-1}...\sigma(W_1\token + b_1)...+b_{\ffdepth-1}) + b_{\ffdepth}
\end{align*} where $W_2,...,W_{\ffdepth-1} \in \R^{\dhd \times \dhd}$, $W_1 \in \R^{\ffwidth \times \dhd}, W_{\ffdepth} \in \R^{\dhd \times \ffwidth}$, $b_1,...,b_{\ffdepth} \in \R^{\dhd}$ and the activation is ReLU. Note, each feed-forward layer is applied tokenwise to an embedding matrix $H$.
\end{definition}

 We can also define a \textit{class} $\ffclass$ of feed-forward networks:
 \begin{definition}[Feed-forward Network Class]
\begin{align*}
    \ffclass(\ffdepth, \ffwidth) = \big\{ \ff_\theta \hspace{3pt} |\hspace{3pt} &\ff_\theta \textup{ is a feed-forward network with weights } \theta \\ &\textup{ with at most } \ffdepth \textup{ layers with width } \ffwidth \big\}.
\end{align*}
\end{definition}
Sometimes we may omit the dependence of $\mathcal{F}$ on $\ffwidth$. In these cases we assume $\ffwidth = \dhd$. We similarly define multi-headed attention and transformer block classes:
\begin{definition}[Multi-headed Attention and Transformer Block Classes]
\begin{align*}
    \mhaclass(\numA) = \big\{ \mha_\theta | & \hspace{3pt} \mha_\theta \textup{ is a multi-headed attention network with weights } \theta 
    \\ & \hspace{3pt} \textup{ with at most } \numA \textup{ attention heads.} \big\}
\end{align*}
\begin{align*}
    \tbclass(\numA, \ffdepth, \ffwidth) = \big\{ \tb_\theta |\hspace{3pt} &\tb_\theta \textup{ is a transformer block with weights } \theta \textup{, a } \numA \textup{-headed multi-headed attention layer}, 
    \\   
    & \textup{ and a }  \ffdepth \textup{ deep feed-forward layer with width } \ffwidth \big\}.
\end{align*}
\end{definition}
As with $\mathcal{F}$, we may sometimes omit the dependence of $\tb$ on $\ffwidth$. In these cases we take $\ffwidth = \dhd$. We can parameterize the class of transformer neural networks $\T$ as Definition \ref{def:transformerclass}.
\commentout{
\begin{definition}[Transformer Network Class]
\begin{align*}
   \tclass = \Big\{ \tnn_\theta \hspace{3pt}|\hspace{3pt} &\tnn_\theta \textup{ is a transformer with transformer blocks } \tb_1,...,\tb_{\tdepth} \in \tbclass(\numA, \ffdepth), \\
    &\dhd \textup{ token dimension}, \\
    &\clen \textup{ hidden tokens}, \\
    &\textup{and } \|\tnn_\theta\|_{L^{\infty}(\R^D)}\leq R, \|\theta\|_{\infty} \leq \kappa \Big\}
\end{align*} where $\|\tnn_\theta\|_{L^{\infty}(\R^D)} \leq R$ bounds the output of $\tnn$ and $\|\theta\|_\infty$ bounds the magnitude of the weights of $\tnn_\theta$. 
\end{definition}
} 

Lastly, our constructions will rely heavily on a particular structuring of the tokens/columns in the transformer's hidden states. The first two rows will contain mutable data used to compute the target function. The remaining rows will be immutable, containing positional/interaction
vectors allowing us to uniquely identify each row and a constant term serving as a kind of ``scratchpad''. See Figure \ref{fig:structured_token} for a diagram.

\begin{figure}
    \centering  \includegraphics[scale=0.24]{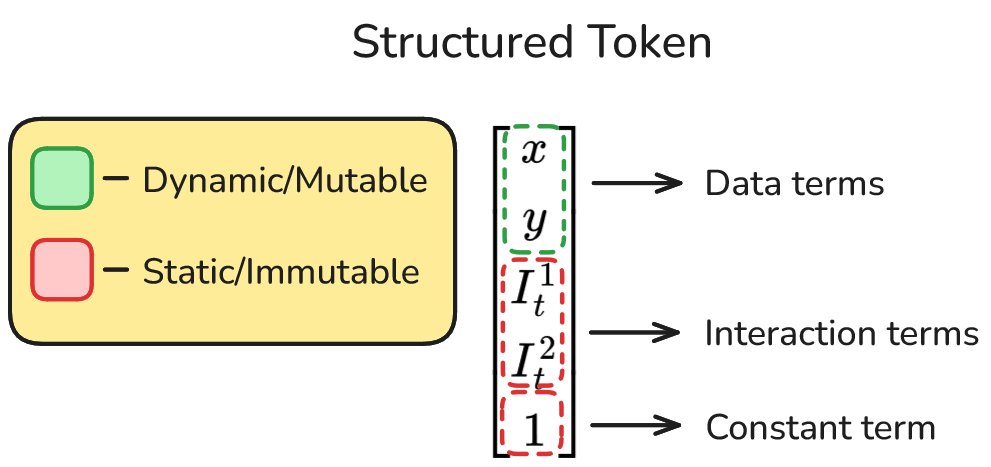}
    \caption{Diagram of a structured token. The first two rows contain mutable data used to compute the target function. The remaining rows are never changed after initialization.
    }
    \label{fig:structured_token}
\end{figure}

\section{Proof of Transformer Approximation Theory}
\label{appsec:proofapproxtheory}

We first prove the approximation theories in Section \ref{subsec:approx}. We first prove Lemma \ref{lemma:approximation} in Section \ref{appsec:proofapproxtheory1}, and then prove Theorem \ref{thm:approximation} in Section \ref{appsec:proofapproxtheory2}. 

\subsection{Proof of Lemma \ref{lemma:approximation}}
\label{appsec:proofapproxtheory1}


\commentout{
\begin{assumption}[Manifold]
\label{assumption:manifold} 
Let $\M$ be a compact Riemannian manifold with intrinsic dimension $\din$ isometrically embedded in $\R^{\dext}$. Let $M > 0$ such that $\|x\|_\infty \leq M$ for $x \in \M$. Additionally, $\M$ has positive reach $\tau > 0$
\end{assumption}

\begin{assumption}[Target function]
\label{assumption:function}
The target function $f: \M \to \R$ is $\beta$-H\"older continuous on $\M$ with $\|f\|_{L^{\infty}(\M)} \leq R$ for some $R > 0$.
\end{assumption}


Given samples $(x_1, f(x_1),...,(x_n, f(x_n))$ distributed on $\M$ by a distribution $Q$, our goal is then to \textit{learn} an approximation $\hat{\tnn}_n$ to $f$ by minimizing the empirical risk \ref{eq:erm} over a class of a transformer neural networks $\tclass$. 
 The corresponding generalization error is then given by
\begin{align}\label{eq:generlization_error}
    \E \int_\M \big(\hat{\tnn}_n(x) - f(x)\big)^2dQ.
\end{align} Then, for any target function $f$ satisfying assumptions 1 and 2 and $n$ samples $(x_1,f(x_1)),...,(x_n,f(x_n))$ we have the following result bounding the generalization error of empirical risk minimizers from $\mathcal{T}$.

\begin{theorem}\label{thm:statistical}
    Fix constants $M, R > 0$ and $d, D \in \mathbb{N}$. Let $\M$ be a manifold satisfying assumption 1  and let $f$ be a target function satsifying assumption 2. Given $n$ i.i.d samples $(x_1,f(x_1)),...,(x_n,f(x_n))$, construct a transformer neural network class $\tclass$ with parameters

    \begin{align*}
        &\tdepth \leq \bigO{\log(d)}, \quad \ffdepth \leq \bigO{1}, \quad l \leq \bigO{dn^{2+d}}\\ 
        &\dhd \leq \bigO{1}, \quad m \leq \bigO{dn^{2+d}}, \quad \kappa \leq \bigO{dn^{2+d}M}
    \end{align*}. The empirical risk minimizer  $\hat{\tnn}_n \in \tclass$ is such that
    \begin{align*}
        \E \int_\M \big(\hat{\tnn}_n(x) - f(x)\big)^2dQ \leq \tildeO{Dd^2n^{\frac{-2\beta}{2\beta+d}}}
    \end{align*} where $\tilde{O}$ hides logarithmic terms in $n,d,M$ and linear terms in $C_\M, L$.
\end{theorem}


Via a bias-variance decomposition, the main difficulties in showing \ref{thm:statistical} will be 1. constructing a transformer neural network $T \in \T$ which approximates $f$ and 2. bounding the covering number of $\T$ to control statistical error. Constructing the approximation is done via the following quantitative universal approximation result:

\begin{theorem}\label{thm:approximation}
    Fix constants $M, R > 0$ and $d, D \in \mathbb{N}$. Let $\epsilon > 0$. Now let $\M$ be a manifold satisfying assumption 1. There exists a transformer neural network class $\tclass$ with parameters

    \begin{align*}
        &\tdepth \leq \bigO{\log(d)}, \quad \ffdepth \leq \bigO{1},\quad l \leq \bigO{d\epsilon^{-d}}\\ 
        &\dhd \leq \bigO{1}, \quad m \leq \bigO{d\epsilon^{-d}}, \quad \kappa \leq \bigO{d\epsilon^{-\din}M}
    \end{align*} such that for any target function $f$ satsifying assumption 2 if the network parameters $\theta$ are properly chosen then $T_\theta \in \tclass$ satisfies 
    \begin{align*}
        \|T_\theta - f\|_{L^{\infty}(\M)} < \epsilon
    \end{align*}. Note, $\bigO{\cdot}$ hides linear terms in $C_\M, D, L$.
\end{theorem}

\begin{figure}
    \centering
    \includegraphics[scale=0.15]{figs/manifold_transformer(3).png}
    \caption{Diagram of manifold learning transformer architecture constructed in Theorem \ref{thm:approximation}. }
    \label{fig:manifold_transformer}
\end{figure}

\alex{write some sentences highlighting advantages}
- rate of convergence exponential in d
- network architecture (very wide, log(d) depth, removes log terms from statistical convergence rate)

To establish the above approximation theory we will utilize a key lemma showing transformers with fixed depth (logarithmic in the dimension) can efficiently approximate functions $g:[0,1]^d \to \R$.

\begin{lemma}\label{lemma:approximation}
     Let $L, R>0$. For any $\epsilon \in (0,1)$, there exists a  transformer neural network class $\tclass$ with parameters
     \begin{align*}
        &\tdepth = \bigO{\log(d)}, \quad \ffdepth = \bigO{1}, \quad l = \bigO{d\epsilon^{-d}}\\ 
        &\dhd = \bigO{1}, \quad m = \bigO{d\epsilon^{-d}}, \quad \kappa = \bigO{d\epsilon^{-\din}}
    \end{align*} such that, for any L-Lipschitz function $f: [0,1]^d \to \R$ satisfying $\|f\|_{L^{\infty}([0,1]^{\din})} \le R$, if the network parameters $\theta$ are properly chosen, this transformer network yields a function $\tnn_\theta \in \tclass$ such that 
    \begin{align*}
        \|\tnn_\theta - f\|_{L^{\infty}([0,1]^d)} \le \epsilon
    \end{align*}. Note $\bigO{\cdot}$ hides linear terms in $L$.
\end{lemma}
}

\begin{proof}[Proof of Lemma \ref{lemma:approximation}]
Fix $f: [0,1]^d \to \R$ and $\epsilon > 0$. We aim to approximate $f$ efficiently with a transformer $\tnn \in \tclass$. We will proceed similarly to \citet{Yarotsky2016ErrorBF} by approximating $f$ locally with piecewise constant functions. First, we partition the domain uniformly into blocks indexed by $n \in \{1,...,N\}^d$ for some $N \geq 1$ with $U_n = \{x \in [0,1]^d:\forall  i \in \{1,...,d\}, \big| x^i -\frac{n^i}{N-1}\big| \leq \frac{1}{N-1} \}$. In this proof, we use $x^i$ and $n^i$ to denote the $i$th coordinate of $x$ and $n$ respectively.   Then we construct a partition of unity (\textbf{PoU}) via
\begin{align}
    \phi_n(x) = \prod_{i=1}^d \psi\left(3N(x^i - \frac{n^i-1}{N-1})\right)
    \label{eq:phinx}
\end{align} with the trapezoid function $\psi:\R^d \to \R$ as
\begin{align*}
    \psi(x) = \begin{cases}
        1 & |x| \leq 1 \\
        1 - |x| & 1 \leq |x| \leq 2 \\
        0 & |x| \geq 2
    \end{cases}.
\end{align*}

We can now write our approximation for $f$ as
\begin{align*}
    \hat{f}(x) = \sum_{n} \phi_n(x) f_n
\end{align*}  
where $f_n = f(x_n)$ with $x_n$ being the center point of $U_n$.  
This yields the following approximation  error:
\begin{align*}
    \Linf{f - \hat{f}}{[0,1]^\din} &= \sup_{x \in [0,1]^\din}\big| f(x) - \hat{f}(x)\big| \\
    &= \sup_{x \in [0,1]^d}\big|f(x)-\sum_{n \in \{1,...,N\}^d} f_n\phi_n(x)\big| \\
     &= \sup_{x \in [0,1]^d} \big|\sum_{n \in \{1,...,N\}^d}f(x)\phi_n(x)-\sum_{n \in \{1,...,N\}^d} f_n\phi_n(x)\big| \\
     &= \sup_{x \in [0,1]^d} \sum_{n \in \{1,...,N\}^d}|f(x) - f_n| \phi_n(x) \\
     &= \sup_{x \in [0,1]^\din} \sum_{|x^i-\frac{n^i}{N-1}| \leq \frac{1}{N-1}} |f(x) - f_n| \\
     &\leq \sup_{x \in [0,1]^\din} \sum_{|x^i-\frac{n^i}{N-1}| \leq \frac{1}{N-1}} H_f \|x-x_n\|_2^\beta \\
     &\leq \sup_{x \in [0,1]^\din} 2^\din \max_{|x^i-\frac{n^i}{N-1}| \leq \frac{1}{N-1}} H_f \|x-x_n\|_2^\beta \\
     &\leq \sup_{x \in [0,1]^\din} 2^\din H_f \frac{d^\beta}{(N-1)^\beta} = \frac{2^dd^\beta H_f}{(N-1)^\beta}
\end{align*} Thus we can control this error simply by increasing $N$. It suffices to pick $N = \din (\frac{2^\din H_f}{\epsilon})^{\frac{1}{\beta}} + 1$ to obtain an $\epsilon$ error bound.

We next construct an approximation to $\hat{f}$ with a transformer neural network $\tnn$.  In fact we can represent $\hat{f}$ \textbf{exactly} as a transformer neural network. 

At a high level, we will proceed by building up each $\phi_n$ as an accumulated partial product $p_n$ in parallel over $\din$ transformer blocks. Each block will use at most $\dhd N^\dhd$ attention heads. Each  head will be responsible for multiplying two partial products. Conveniently $\phi_n$ can be exactly implemented with a two-layer FFN and applied column-wise to each term. The constant terms $f_n$ will then be multiplied in and summed at the final layer to compute the output.

\textbf{Step 1: Embed the input}\quad  Now fix input $x \in \R^{1 \times \din}$. First we augment the input, via concatenation, with a sequence of 0s: $\textbf{0}_{Nd + N^d}$ forming input $x' \in \R^{1 \times d + Nd + N^d}$ (note this is a linear operation). We can then construct a linear embedding layer $[1, 0, 0 , 0, 0]^T = E \in \R^{\dhd \times 1}$ resulting in the input embedding matrix $H \in \R^{\dhd \times (\din + N\din + N^{\din}\din)}$ of the form
\begin{align*}
    \begin{bmatrix}
        x^1 & ... & x^{\din} & \textbf{0}_{N\din} & ... & \textbf{0}_{N^dd} \\
        0 & ... & 0 & \textbf{0}_{N\din} & ... & \textbf{0}_{N^\din\din} \\
        0 & ... & 0 & \textbf{0}_{N\din} & ... & \textbf{0}_{N^\din} \\
        0 & ... & 0 & \textbf{0}_{N\din} & ... & \textbf{0}_{N^\din\din} \\
        0 & ... & 0 & \textbf{0}_{N\din} & ... & \textbf{0}_{N^\din\din} \\
    \end{bmatrix}
\end{align*} where $\dhd = 5$. To this we add a fixed positional encoding $\PE \in  \R^{\dhd \times (\din + N\din + N^\din\din)}$, producing the first hidden embedding matrix
\begin{align*}
    H_1 = \begin{bmatrix}
        x^1 & ... & x^\din & \textbf{0}_{N\din} & ... & \textbf{0}_{N^\din\din} \\
        0 & ... & 0 & \textbf{0}_{N\din} & ... & \textbf{0}_{N^\din\din} \\
        \iterm_1 & ... & ... & ... & ... & \iterm_{d+\din N+N^\din\din} \\
        1  & ... & ... & ... & ... & 1
    \end{bmatrix}
\end{align*} where  $I_i \in \R^2$ represents an \textit{interaction term} determining when each token embedding will interact with another in the attention mechanism. We can set $I_i = (\cos(\frac{i}{\clen}\frac{\pi}{2}), \sin(\frac{i}{\clen}\frac{\pi}{2}))$ where $\clen$ is the number of hidden tokens. Note, this type of positional embedding is commonly known as the sinusoidal positional encoding and can be visualized as rotations of the unit vector $e_1$ around the first quadrant of the unit circle.

\textbf{Step 2: Pre-compute $\psi(3N(x^i - \frac{j-1}{N-1}))$}\quad  Now we pre-compute the terms $\psi(3N(x^i - \frac{j-1}{N-1}))$ from which we can construct our partition of unity. We define $s_{i,j} = \psi(3(N-1)(x^i - \frac{j-1}{N-1}))$ for $1 \leq i \leq d, 1 \leq j \leq N$.  We know $\psi(3(N-1)\cdot)$ can be computed exactly with a two-layer FFN. So all we must do is prepare the input $x^i - \frac{j-1}{N-1}$. We will do this with one transformer block $\tb_1$. The results will be stored in token columns $h_{\din + 1},...,h_{\din + N \din}$.

Define $\tb_1 = \tbclass(N\din, 7)$ i.e. $\tb_1$ has a multi-headed attention layer with $N\din$ attention heads and a feed-forward layer with depth $7$. We will construct each of the $ij$th blocks using the Interaction Lemma \ref{lemma:interaction}. We may choose \textit{data kernels} 

\begin{align*}
    Q_{ij}^{\datakernel} = \begin{bmatrix}
        0 & 0 & 0 & 0 & 1 \\
        0 & 0 & 0 & 0 & 1
        \end{bmatrix} \in \R^{2 \times \dhd},\quad 
    K_{ij}^{\datakernel} = \begin{bmatrix}
        1 & 0 & 0 & 0 & 0 \\
        0 & 0 & 0 & 0 & -\frac{j}{N-1}+1
        \end{bmatrix} \in \R^{2 \times \dhd}
\end{align*} so that for the token embedding $h_{ij} = h_{d + i*d + j}$, we have
\begin{align*}
    \attn_{ij}(h_{ij}) &= \sum_{k=1}^{\clen}\sigma(\langle Q_{ij}h_{ij}, K_{ij}h_{k}\rangle) V_{ij}h_k \\
    &= \sum_{k=1}^{\clen}\sigma(\langle Q^{\datakernel}_{ij}h_{ij}, K_{ij}^{\datakernel}h_{k}\rangle + \langle Q^{\iterm}_{ij}h_{ij}, K_{ij}^{\iterm}h_{k}\rangle - C) e_2 \\
    &= \sum_{k=1}^{\clen}\sigma(h_k^1 - \frac{j-1}{N-1} + 1 + \langle Q^{\iterm}_{ij}h_{ij}, K_{ij}^{\iterm}h_{k}\rangle - C) e_2 \\
    &= \sigma(x^i - \frac{j-1}{N-1} + 1)e_2 = (x^i - \frac{j-1}{N-1} + 1)e_2
\end{align*} where we can choose token $\token_{ij}$ to only interact with token $\token_i$. Otherwise we have $\attn_{ij}(\token_t) = 0$ for $\token_t \neq \token_{ij}$. Further, the weights of $\theta_{\attn_{ij}}$ of $\attn_{ij}$ are bounded such that $\linf{\theta_{\attn_{ij}}} = \bigO{\clen^2}$ according to Lemma \ref{lemma:interaction}.  

The added output of the multi-headed attention layer is then $H_{1.5} = \mha_1(H) + H$ and is given by
\begin{align*}
    H_{1.5} = \begin{bmatrix}
        x^1 & ... & x^{\din} & \textbf{0}_{N\din} & ... & ... & \textbf{0}_{N^{\din}\din} \\
        0 & ... & 0 &  x^1 +c & ... & x^{\din} - 1+c & \textbf{0}_{N^\din \din} \\
        \iterm_1 & ... & ... & ... & ... & ... & \iterm_{\din+N\din+N^\din \din} \\
        1  & ... & ... & ... & ... & ... & 1
    \end{bmatrix}.
\end{align*} It remains to subtract the positive term $c = 1$ and then apply $\psi$ for the tokens $\din +1,...,\din+\din N$ while leaving the other tokens untouched. Define 
\begin{align*}
    &W_1 = \begin{bmatrix}
        0 & 1 & 0 & 0 & -1+M \\
        0 & -1 & 0 & 0 & M \\
        0 & 0 & 1 & 0& 0 \\
        0 & 0 & 0 & 1 & 0 \\
        0 & 0 & 0 &  0& 1
\end{bmatrix},\quad b_1 = 0 \\
    &W_2 = \begin{bmatrix}
        1 & 0 & 0 & 0 & -M \\
        0 & 1 & 0 & 0 & -M \\
        0 & 0 & 1 & 0& 0 \\
        0 & 0 & 0 & 1 & 0 \\
        0 & 0 & 0 &  0& 1
\end{bmatrix},\quad b_2 = 0.
\end{align*} 



Here $W_1$  shifts values in the second component of each token to the first and subtracts $1$. The second component is negated to be subtracted from $H_{1.5}$. $I_t$ and the last component are preserved. A large positive number $M > 2\|x\|_\infty$ is added to prevent negative terms from getting erased by ReLU. $W_2$ removes $M$ after the activation is applied. We can construct a two-layer network $\ff_1$ to apply $\psi$ to the first component of each token $\token_t$. In tokens $\token_{\din},...,\token_{\din + N \din}$ this produces the desired $s_{ij}$ terms.

Now we simply must ensure we zero-out changes to tokens outside the range $\din+1,...,\din + N\din$. Via Lemma \ref{lemma:gating_ffn} we can construct a two-layer feed-forward \textit{gating network} $\ff_2$ such that $\ff_2(\token_t) = \token_t$ for $t \leq \din$ and $\ff_2(\token_t)$ is zero except for the last three rows when $t > \din + \din N$. We can again invoke the same lemma to produce $\ff_3 \in \ffclass(2)$ such that $\ff_3(\token_t) = \token_t$ when $t > \din$ and zero except for the last three rows otherwise. Applying $\ff_2$ and $\ff_3$ zeroes out tokens outside $\din+1,...,\din+\din N$ while preserving the rest. Finally, we define the last layer

\begin{align*}
    W_7 = \begin{bmatrix}
        1 & 0 & 0 & 0 & 0 \\
        0 & 1 & 0 & 0 & 0 \\
        0 & 0 & 0 & 0& 0 \\
        0 & 0 & 0 & 0 & 0 \\
        0 & 0 & 0 &  0 & 0
\end{bmatrix},\quad b_7 = 0
\end{align*} which zeros out all except the first two components. This produces the next embedding matrix $H_2$:

\begin{align*}
    H_2 = \begin{bmatrix}
        x^1 & ... & x^{\din} & s_{1,1} & ... & s_{d,N} &  \textbf{0}_{N^dd} \\
        0 & ... & 0 & \textbf{0}_{Nd} & ... & ... & \textbf{0}_{N^dd} \\
        \iterm_1 & ... & ... & ... & ... & ... & \iterm_{d+Nd+N^dd} \\
        1  & ... & ... & ... & ... & ... & 1
    \end{bmatrix}
\end{align*}.

\textbf{Step 3: Building up partial products}\quad Now we can start construction of the partial products for each patch $\phi_n$, $n \in \{1, ...., N\}^d$.   For a fixed patch $n$, we write $p_{n,k, i}$ to indicate the $i$th partial product of $2^{k-1}$ terms. Formally, $p_{n,k,i} = p_{n,k-1,2i-1}\cdot p_{n,k-1,2i}$ with $p_{n,1,i} = s_{i,n^i}$ where $s_{i,n^i} = \psi(3(N-1)(x^i - \frac{n^i-1}{N-1}))$ and $n \in \{1,...,N\}^d$, $1 \leq k \leq \log_2(\din)$, $1 \leq i \leq \frac{d}{2^k}$. See Figure \ref{fig:product} for a diagram of how each partial product is assembled. Architecturally, this will be done in $\log(\din)+1$ transformer blocks. The first block will copy the product terms into the last $N^{\din}\din$ tokens. The remaining $\log(\din)$ blocks compute the recursively assembled partial products by multiplying $p_{n,k-1,2i-1}\cdot p_{n,k-1,2i}$. Storage and assembly will be done in the last $\din + N \din + 1,...,\din + N \din + N^d\din$ tokens.

\begin{figure}[t!]
    \centering
\includegraphics[scale=0.25]{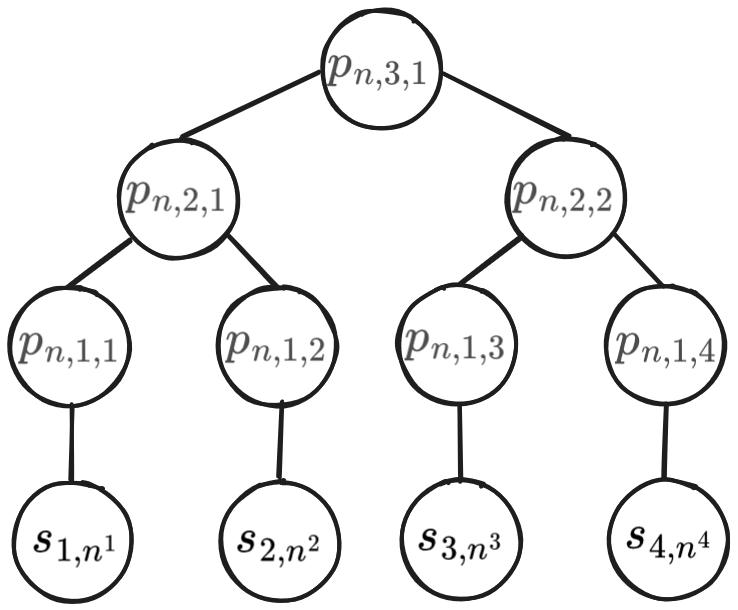}
    \caption{Recursive assembly of partial products from constituent terms. Formally, $p_{n, k, i} = p_{n,k-1, 2i-1}\cdot p_{n,k-1, 2i}$ with $p_{n,1, i} = s_{i,n^i}$ for $n \in \{1,...,N\}^d$, $1 \leq k \leq \log_2(\din)$, $1 \leq i \leq \frac{d}{2^k}$.}
    \label{fig:product}
\end{figure}

For the first transformer block we define $\tb_2 = \tbclass(N^{\din}\din, 0)$. We index each attention head as $A_{n, i}$ where $n \in \{1, ...., N\}^d$ and $1 \leq i \leq d$. Fix a patch $n$ and the corresponding product $\prod_{i=1}^{\din}s_{i,n^i}$. These terms have been pre-computed and stored in tokens $\token_{\din +1},...,\token_{\din+N\din}$. We will then use each attention head $A_{n,i}$ to copy $s_{i,n^i}$ into its corresponding token $\token_{n,i} = h_{\din + N\din + d\sum_{p=1}^d (n^p-1)N^{d-p} + i}$. Concretely, for a fixed attention head $A_{n,i}$ define the data kernels
\begin{align*}
    Q_{n,i}^{\datakernel} = \begin{bmatrix}
        0 & 0 & 0 & 0 & 1\\
        0 & 0 & 0 & 0 & 0
    \end{bmatrix} \quad
    K_{n,i}^{\datakernel} = \begin{bmatrix}
        1 & 0 & 0 & 0 & 0\\
        0 & 0 & 0 & 0 & 0\
    \end{bmatrix}
\end{align*} via the Interaction Lemma \ref{lemma:interaction} we may pick $\attn_{n,i}$ so that
\begin{align*}
    A_{n,i}\token_{n,i} &= \sum_{k=1}^{\clen}\sigma(\langle Q_{n,i}\token_{n,i}, K_{n,i}\token_{k}\rangle) V_{n,i}\token_k \\
    &= \sum_{k=1}^{\clen}\sigma(\token_{n,i}^5 \token_k^1 + Q_{n,i}^I\iterm_{n,i} \cdot K_{n,i}^II_k - C) e_1 \\
    &= \sigma(1 \cdot s_{i, n^i})e_1 = s_{i,n^i}e_1
\end{align*} where the interaction terms zero-out all terms in the sum except when $k = \din + i\din + n^i$ (which is the index of the token containing $s_{i,n^i}$). Similarly to as in $\tb_1$ we have $A_{n,i}\token_k = 0$ when $k \neq \din + N \din + \sum_{p=1}^d (n^p - 1)N^{d-p} + i$. Set the feed-forward layer $\ff = 0$. Then the next token embedding matrix can be written as 
\begin{align*}
    H_3 = \begin{bmatrix}
        x^1 & ... & x^{\din} & s_{1,1} & ... & s_{d,N} &  P_1 \\
        0 & ... & 0 & \textbf{0}_{Nd} & ... & ... & \textbf{0}_{N^dd} \\
        \iterm_1 & ... & ... & ... & ... & ... & \iterm_{d+Nd+N^dd} \\
        1  & ... & ... & ... & ... & ... & 1
    \end{bmatrix}
\end{align*} where $P_1 \in \R^{N^dd}$ is
\begin{align*}
    P_1  &= \begin{bmatrix}
        p_{\{1\}^d, 1, 1} & p_{\{1\}^d,2, 2} & ... & p_{\{1\}^d, 1, \din} & p_{\{2\} \times \{1\}^{d-1}, 1, 1} & ... & p_{\{N\}^{d}, 1, \din}
    \end{bmatrix} \\
    &=\begin{bmatrix}
        s_{1,1} & s_{2,1} & ... & s_{d,1} & s_{1,2} & ... & s_{d,N}
    \end{bmatrix}
\end{align*} where we recall $p_{\{1\}^d, 1, 1}$ denotes the first term in the base level product for for the patch $n=\{1,...,1\}$. Now define the next block $\tb_3 \in \tbclass(N^d\frac{d}{2}, 1)$. Index the corresponding attention heads as $A_{n,i}$ for $n \in \{1,...,N\}^d$, $1 \leq i \leq \frac{d}{2}$. Define data kernels
\begin{align*}
    Q_{n,i}^{\datakernel} = \begin{bmatrix}
        1 & 0 & 0 & 0 & 0\\
        0 & 0 & 0 & 0 & 0\\
    \end{bmatrix} \quad
    K_{n,i}^{\datakernel} = \begin{bmatrix}
        1 & 0 & 0 & 0 & 0\\
        0 & 0 & 0 & 0 & 0\\
    \end{bmatrix}
\end{align*} where we choose the interaction terms so that the token containing $p_{n,1,2i-1}$ only interacts with the token containing $p_{n,1,2i}$ so that they multiply to form $p_{n,2,i}$ as shown in Figure \ref{fig:product}. Note, in addition to computing the product, we must also subtract $p_{n,1,2i-1}$ from each corresponding token so that the output of the residual from $\tb_3$ cancels with the existing terms in the embedding matrix $H_3$. Write 
\begin{align*}
    A_{n,i}\token_{n,i} &= \sum_{k=1}^{\clen}\sigma(\langle Q_{n,i}\token_{n,i}, K_{n,i}\token_{k}\rangle) V_{n,i}\token_k = \sigma(p_{n,1,2i-1} p_{n,1,2i})e_2 = \sigma(p_{n,2,i})e_2 = p_{n,2,i}e_2.
\end{align*} 
Adding the result onto $H_3$ gives the intermediate embedding matrix 
\begin{align*}
    H_{3.5} = \begin{bmatrix}
        x^1 & ... & x^{\din} & s_{1,1} & ... & s_{d,N} &  P_1 \\
        0 & ... & 0 & \textbf{0}_{Nd} & ... & ... & P_{1.5} \\
        \iterm_1 & ... & ... & ... & ... & ... & \iterm_{d+Nd+N^dd} \\
        1  & ... & ... & ... & ... & ... & 1
    \end{bmatrix}
\end{align*} where $P_{1.5} \in \R^{N^dd}$ is
\begin{align*}
    P_{1.5}  = &\begin{bmatrix}
       p_{\{1\}^d,2,1} & * & p_{\{1\}^d,2,2} & * & ... & p_{\{N\}^{d}, 2, \frac{\din}{2}} & *
    \end{bmatrix}.
\end{align*} Note: $*$ denotes a non-essential entry which can take any value. We then design our FFN network as $W_1x$ where
\begin{align*}
    W_1 = \begin{bmatrix}
            -1 & 1 & 0 & 0 & 0 \\
             0 & -1 & 0 & 0 & 0 \\
             0 & 0 & 0 & 0 & 0 \\
             0 & 0 & 0 & 0 & 0 \\
             0 & 0 & 0 & 0 & 0
          \end{bmatrix},\quad  b_1 = 0
\end{align*} 
which when added to $H_{3.5}$ swaps the first component for the second component of each token and zeros out the second component. Then the next embedding matrix $H_4$ gives
\begin{align*}
    H_4 = \begin{bmatrix}
        * & ... & P_2 \\
        \textbf{0}_{d + Nd + N^dd} & ... & 0 \\
        \iterm_1 & ... & \iterm_{d+Nd+N^dd} \\
        1  & ... & 1
    \end{bmatrix}
\end{align*} where $P_2 = P_{1.5} \in \R^{N^dd}$. More generally define the $k$th block implementing the $k$th level of the parallel product as $B_{k+2} = \tbclass(N^d\frac{d}{2^k}, \ffclass(1))$ with attention heads $A_{n,k, i}$ where $n \in \{1,...,N\}^d$, $1 \leq i \leq \frac{d}{2^k}$. We can construct each attention head $A_{n,k,i}$ so that the token containing $p_{n,k-1,2i-1}$ only interacts with the token containing $p_{n,k-1,2i}$ to produce $p_{n,k,i}$:
\begin{align*}
    A_{n,k,i}\token_{n,i} &= \sum_{t=1}^{\clen}\sigma(\langle Q_{n,i}\token_{n,i}, K_{n,i}\token_{t}\rangle) V_{n,i}\token_t = \sigma(p_{n,k-1,2i-1} p_{n,k-1,2i})e_2 = p_{n,k,i} e_2.
\end{align*} In the FFN set $W_1$ as before, yielding
\begin{align*}
    H_{k+2} = \begin{bmatrix}
        * & ... & P_k \\
        \textbf{0}_{d + Nd + N^dd} & ... & 0 \\
        \iterm_1 & ... & \iterm_{d+Nd+N^dd} \\
        1  & ... & 1
    \end{bmatrix}
\end{align*} where
\begin{align*}
    P_k = \begin{bmatrix}
            p_{\{1\}^{\din}, k, 1} & * & ... & p_{\{N\}^{\din}, k, 1} & * & ...
          \end{bmatrix}.
\end{align*} At $k = \log(d)$ we have
\begin{align*}
    H_{{\log(d)+2}} = \begin{bmatrix}
        * & ... & P_{\log(d)} \\
        \textbf{0}_{d + Nd + N^dd} & ... & 0 \\
        \iterm_1 & ... & \iterm_{d+Nd+N^dd} \\
        1  & ... & 1
    \end{bmatrix}
\end{align*} where component $P_{\log(d)+2}^{\din + N\din + \sum_{p=1}^d (n^p-1)N^p + i} = p_{n,\log(d),1} = \phi_n(x)$ as desired in \eqref{eq:phinx}. It simply now remains to multiply each $\phi_n(x)$ by the corresponding local average $f_n$ and sum the result. This can be implemented with a single attention block $\tb_{\log(d)+3} \in \tbclass(N^d, 1)$. For $n \in \{1,...,N\}^d$ define $A_n$ with the data kernels

\begin{align*}
    Q_{n} = \begin{bmatrix}
        1 & 0 & 0 & 0 & 0\\
        0 & 0 & 0 & 0 & 1\\
    \end{bmatrix} \quad
    K_{n} = \begin{bmatrix}
        0 & 0 & 0 & 0 & f_n\\
        0 & 0 & 0 & 0 & R\\
    \end{bmatrix}
\end{align*} where $\|f\|_{L^\infty([0,1]^d)} \leq R$ and the token $\token_{\din + N\din + \sum_{p=1}^d (n^p-1)N^p + 1} = \token_{n,1}$ only interacts with itself so that
\begin{align*}
    A_{n}\token_{n,1} &= \sum_{k=1}^{\clen}\sigma(\token_{n,1}^1 \cdot f_n + R + Q_{n}^I\iterm_{n,1} \cdot K_{n}^II_k - C) e_2 \\
    &= \sigma(f_n\phi_n(x) + R)e_2 = (f_n \phi_n(x) + R) e_2
\end{align*} where the last equality comes from $|f_n\phi_n(x)| \leq |f_n| \leq R$. We implement $\ff$ to subtract $R$ via 
\begin{align*}
    W_1 = \begin{bmatrix}
            0 & 0 & 0 & 0 & 0 \\
             0 & 1 & 0 & 0 & -R \\
             0 & 0 & 0 & 0 & 0 \\
             0 & 0 & 0 & 0 & 0 \\
             0 & 0 & 0 & 0 & 0
          \end{bmatrix}
\end{align*} so that 
\begin{align*}
    H_{\log(d)+4} = \begin{bmatrix}
        * & ... &  ...\\
        \textbf{0}_{d + Nd} & ... & P_{\log(d)+4}  \\
        \iterm_1 & ... & \iterm_{d+Nd+N^dd} \\
        1  & ... & 1
    \end{bmatrix}
\end{align*} where $P_{\log(d)+4}^{n,1} = f_n\phi_n$ for $n \in \{1,...,N\}^d$ and is $0$ elsewhere. To sum up the resulting terms we define the final transformer block $\tb_{\log(d)+4} \in \tbclass(1, 0)$ containing a single attention head $A_1$ as 
\begin{align*}
    Q = \begin{bmatrix}
        0 & 0 & 0 & 0 & 1\\
        0 & 0 & 0 & 0 & 0\\
        0 & 0 & 0 & 0 & 0 \\
        0 & 0 & 0 & 0 & 0\\
        0 & 0 & 0 & 0 & R
    \end{bmatrix} \quad
    K = \begin{bmatrix}
        0 & 1 & 0 & 0 & 0\\
        0 & 0 & 0 & 0 & 0\\
        0 & 0 & 0 & 0 & 0\\
        0 & 0 & 0 & 0 & 0\\
        0 & 0 & 0 & 0 & 1
    \end{bmatrix} \quad
    V = \begin{bmatrix}
        0 & 0 & 0 & 0 & 1\\
        0 & 0 & 0 & 0 & 0\\
        0 & 0 & 0 & 0& 0\\
        0 & 0 & 0 & 0 & 0\\
        0 & 0 & 0 &  0& 0
    \end{bmatrix}
\end{align*} which when applied to each token simply sums up the second components of all tokens. The result when applied to $\token_{t}$ where $t = \clen$ gives
\begin{align*}
    A_1\token_t &= \sum_{k=1}^{\clen}\sigma(\langle Q\token_t, K\token_k\rangle) V\token_k \\
    &= \sum_{k=1}^{\clen}\sigma(\token_k^2 + R)e_1 = (\clen R + \sum_{n \in \{1,...,N\}^d}f_n\phi_n)e_1.
\end{align*} We can subtract $\clen R$ with the FFN as 
\begin{align*}
    W_1 = \begin{bmatrix}
            1 & 0 & 0 & 0 & -\clen R \\
             0 & 0 & 0 & 0 & 0 \\
             0 & 0 & 0 & 0 & 0 \\
             0 & 0 & 0 & 0 & 0 \\
             0 & 0 & 0 & 0 & 0
          \end{bmatrix}
\end{align*} yielding the final result $\token_1^2 = \sum_n f_n \phi_n$. Applying the decoding head $D$ returns the desired result. This shows we can construct the approximation $\hat{f}$ exactly with an $O(\log(d))$ layer transformer. 


\end{proof}

\subsection{Proof of Theorem \ref{thm:approximation}}
\label{appsec:proofapproxtheory2}
With Lemma \ref{lemma:approximation}, we can now move on to the proof of Theorem \ref{thm:approximation}.

\begin{proof}[Proof of Theorem \ref{thm:approximation}]
     Our goal is to approximate $f: \M \to \R$ in $L^{\infty}(\M)$ with a transformer neural network. Our construction proceeds similarly to the construction in \citet{chen2019efficient} with feedforward neural networks.

    First, we will decompose $f(x)$ as a sum of terms over local neighborhoods $U_1,...,U_{C_\M} \subseteq \M$ covering $\M$. Approximations on overlapping neighborhoods containing $x$ will then be combined via a partition of unity (\textbf{PoU}) $\{\rho_n\}_{n=1}^{C_\M}$ which subordinates $\{U_n\}_{n=1}^{C_\M}$,  while contributions from neighborhoods not containing $x$ will be zeroed-out via an indicator $\textbf{1}_{U_n}$. This will give us the expression
    \begin{align}\label{eq:local_sum}
        f(x) = \sum_{n=1}^{C_\M}f_n(x) \textbf{1}_{U_n}(x)
    \end{align} where $f_n: \M \to \R$ agrees with $f$ on $U_n$. Next, on each local neighborhood we will project the input $x \in \M \subseteq \R^{\dext}$ to $[0,1]^{\din}$. This will give us the following \textit{local decomposition} of the target function:
    \begin{align*}
        f(x) = \sum_{n=1}^{C_\M} \tilde{f}_n \circ \phi_n (x) \textbf{1}_{U_n}(x)
    \end{align*} where $\tilde{f}_n : \R^{\din} \to \R$ and $\phi_n: \M \to \R^{\din}$ is a projection. This step is critical: we now must only approximate a low-dimensional function $\tilde{f}_n$ and a simple projection $\phi_n$ instead of the high-dimensional function $f_n$. After this, we must simply approximate each $\tilde{f}_n, \phi_n, \textbf{1}_{U_n}$ using transformer neural networks.
    
    \textbf{Step 1: Local Decomposition}\quad Denote the open Euclidean ball with center $c$ and radius $r$
in $\R^D$ by $B(c, r)$. 
Given a manifold $\M$ satisfying Assumption \ref{assumption:manifold_p}, the collection of open balls $\{B(c, r)\}_{c\in \M}$ with some $r \le \tau/4$ is an open cover of $\M$. Since $\M$ is compact, there are a finite number of points, denoted by $c_n$, for $ n=1,\ldots, C_{\M}$ such that $\M \subset \cup_{n=1}^{C_{\M}} B(c_n, r)$. 
    We can then decompose $\M$ into $C_{\M}$ charts $(U_n, \phi_n)_{n=1}^{C_\M}$ where $U_n = \M \cap B(c_n,r)$ and  each $\phi_n: U_n \to \R^d$ is an orthogonal projection of the local neighborhood $U_n $ onto the local tangent space $T_{c_n}(\M)$ . 
    Via Lemma \ref{lemma:local_diffeomorphism}, as long as $r\le \tau/4$, each $\phi_n$ is a diffeomorphism between $U_n$ and a subset in $T_{c_n}(\M)$.
    Here the number of charts $C_{\M}$ is a constant depending on the complexity of the manifold $\M$. We can  bound it as $C_\M \lesssim \frac{SA(\M)}{r^{\din}}d\log(d)$ where $SA(\M)$ is the surface area of $\M$ \citep{conway}.
    
    For the open covering $\{U_n\}_{n=1}^{C_\M}$, there exists a smooth partition of unity $\{\rho_n\}_{n=1}^{C_\M}$ such that $f(x) = \sum_{n=1}^{C_\M} \rho_n(x) f(x)$ for $x \in \M$ \citep{manifoldintro}. Each $\rho_n$ is supported only in its corresponding chart $U_n$. We additionally include an indicator function $\textbf{1}_{U_n}$ for each neighborhood $U_n$, resulting the decomposition $$f(x) = \sum_{n=1}^{C_\M} \underbrace{f(x)\rho_n(x)}_{f_n(x)}\textbf{1}_{U_n}(x).$$ We then aim to approximate each term in the sum on its corresponding local neighborhood $U_n$ and then sum the result. Because we are only locally approximating each $\rho_n$, the indicator is critical in the approximation as it will be used to suppress contributions from $U_n$ such that the input $x \notin U_n$.


    Fix a local neighborhood $U_n$. We will now project the input $x \in \M \subseteq \R^{\dext}$ to the local tangent space $T_{c_n}(\M) \subseteq [0,1]^{\din}$. We can write 
    \begin{align*}
        f_n(x) = ({f}_n \circ \phi_n^{-1}) \circ \phi_n (x) = \tilde{f}_n \circ \phi_n(x)
    \end{align*} where $\tilde{f}_n: \R^{\din} \to \R$ and again $\phi_n: \M \subseteq \R^{\dext} \to \R^{\din}$ is an orthogonal projection. Then we can write $f$ as 
    \begin{align}
        f(x) = \sum_{n=1}^{C_\M} \tilde{f}_n \circ \phi_n (x) \textbf{1}_{U_n}(x).
        \label{eq:proofthmfdecomp}
    \end{align} 

    \textbf{Step 2: Local Approximation}\quad In \eqref{eq:proofthmfdecomp}, we have rewritten $f$ as a sum of functions which can be approximated locally on each $U_n$ and indicators $\textbf{1}_{U_n}$. We will approximate each component with transformer neural networks. Specifically, let $\tnn_{\tilde{f}_n}$ approximate $\tilde{f}_n$, $\tnn_{\phi_n}$ approximate $\phi_n$, and $\tnn_{\textbf{1}_{U_n}}$ approximate $\textbf{1}_{U_n}$, $1 \leq n \leq C_\M$. Then we will have the overall approximation
    \begin{align*}
        f(x) \approx \tnn_f(x) = \sum_{n=1}^{C_\M} \tnn_{\tilde{f}_n} \circ \tnn_{\phi_n} (x) \tnn_{\textbf{1}_{U_n}} (x). 
    \end{align*} This yields the error decomposition 
    \begin{align*}
        \Linf{f - \tnn_f}{\M} &= \sum_{n=1}^{C_\M}\Linf{\tilde{f}_n \circ \phi_n \textbf{1}_{U_n} -  \tnn_{\tilde{f}_n} \circ \tnn_{\phi_n} \tnn_{\textbf{1}_{U_n}}}{\M} \\
        &\leq \sum_{n=1}^{C_\M}  \Linf{\tilde{f}_n \circ \phi_n \tnn_{\textbf{1}_{U_n}} -  \tnn_{\tilde{f}_n} \circ \tnn_{\phi_n} \tnn_{\textbf{1}_{U_n}}}{\M} + \Linf{\tilde{f}_n \circ \phi_n\textbf{1}_{U_n} - \tilde{f}_n \circ \phi_n\tnn_{\textbf{1}_{U_n}}}{\M} \\
        &\leq \sum_{n=1}^{C_\M}\Linf{\tilde{f}_n \circ \phi_n -  \tnn_{\tilde{f}_n} \circ \tnn_{\phi_n}}{U_n} + E_{n,3} \\
        &\leq \sum_{n=1}^{C_\M}\Linf{\tilde{f}_n \circ \phi_n -  \tnn_{\tilde{f}_n} \circ \phi_n}{U_n} + \Linf{\tnn_{\tilde{f}_n} \circ \phi_n - \tnn_{\tilde{f}_n} \circ \tnn_{\phi_n}}{U_n} + E_{n,3} \\
        &= \sum_{n=1}^{C_\M}E_{n,1} + E_{n,2} + E_{n,3}
    \end{align*} where the error terms $E_{n,1},E_{n,2},E_{n,3}$ denote 
    \begin{align}
E_{n,1} & := \Linf{\tilde{f}_n \circ \phi_n -  \tnn_{\tilde{f}_n} \circ \phi_n}{U_n}
\label{eq:en1}
\\
E_{n,2} &:= \Linf{\tnn_{\tilde{f}_n} \circ \phi_n - \tnn_{\tilde{f}_n} \circ \tnn_{\phi_n}}{U_n}
\label{eq:en2}
\\
E_{n,3} &:= \Linf{\tilde{f}_n \circ \phi_n\textbf{1}_{U_n} - \tilde{f}_n \circ \phi_n\tnn_{\textbf{1}_{U_n}}}{\M}
\label{eq:en3}
    \end{align}
     respectively. It remains to construct transformer approximations for each term.

\textbf{Step 2.1: Approximating $\tilde{f}_n$ and Bounding $E_{n,1}$}\quad We first handle the $E_{n,1}$ error in \eqref{eq:en1}. Fix $1 \leq n \leq \numCharts$. We can write
\begin{align*}
    \Linf{\tilde{f}_n \circ \phi_n -  \tnn_{\tilde{f}_n} \circ \phi_n}{U_n} \leq \Linf{\tilde{f}_n - \tnn_{\tilde{f}_n}}{[0,1]^{\din}}
\end{align*} so it suffices to approximate $\tilde{f}_n: [0,1]^{\din} \to \R$. Now choose the desired accuracy $\delta_{n,1} > 0$. Via Lemma \ref{lemma:approximation} we can construct an approximation such that $$\Linf{\tilde{f}_n - \tnn_{\tilde{f}_n}}{[0,1]^{\din}} < \delta_{n,1}.$$ This construction $\tnn_{\tilde{f}_n}$ has $O(\log(\din))$ transformer block layers, $O(1)$ feed-forward layers, $O(\din \delta_{n,1}^{-\frac{\din}{\beta}})$ tokens, $O(1)$ embedding dimension, $O(\din \delta_{n,1}^{-\frac{\din}{\beta}})$ attention heads per-layer, and weights with magnitude at most $O(\din \delta_{n,1}^{-\frac{2\din}{\beta}})$. Via the Parallelization Lemma \ref{lemma:transformer_parallelization} we can compute each approximation of $\tilde{f}_n$ in parallel using a transformer with $\bigO{C_\M \din \delta_1^{-\frac{\din}{\beta}}}$ attention heads and $\bigO{C_\M\dhd}$ FFN width, where we set $\delta_1 = \delta_{1,1} = ... = \delta_{C_\M,1}$, and $\bigO{\log(\din)}$ layers.

\textbf{Step 2.2: Approximating $\phi_n$ Exactly Such That $E_{n,2} = 0$} \quad Fix $1 \leq n \leq \numCharts$. We must now control the $E_{n,2}$ error in \eqref{eq:en2}:
\begin{align*}
   E_{n,2} = \Linf{\tnn_{\tilde{f}_n} \circ \phi_n - \tnn_{\tilde{f}_n} \circ \tnn_{\phi_n}}{U_n}
\end{align*}

We will implement $\phi_n$ using a transformer block $\tb \in \tbclass(\din \dext, 5)$.  We can identify the tangent space at the point $c_n$ as 
    \begin{align*}
        T_{c_n}(\M) = \textup{span}(v_{n,1},...,v_{n,\din})
    \end{align*} where the vectors $v_{n,i} \in \R^D$, $1 \leq i \leq d$, form an orthonormal basis of $T_{c_n}(\M)$. Then we can write the projection map $\phi_n$ as 
    \begin{align*}
        \phi_n(x) = s_n(V_n^T(x - c_n) + u_n)
    \end{align*} where $s_n \in (0,1]$ is a scaling factor, $u_n \in \R^{\din}$ is a translation, and $V_n = \begin{bmatrix}
        v_{n,1} & ... & v_{n,\din}
    \end{bmatrix} \in \R^{D \times d}$ so that $\phi_n(x) \in [0,1]^{\din}$. Suppose we have an input hidden embedding matrix of the form
\begin{align*}
    H = \begin{bmatrix}
            x^1 & ... & x^{\dext} & \textbf{0}_d\\
            0 & ... & ... & 0 \\
            \iterm_1 & ... & ... & \iterm_{\clen} \\
            1 & ... & ... &  1
        \end{bmatrix} \in \R^{\dhd \times \clen}
\end{align*} 

We will implement the operation $v_{n,i}^j\cdot (x^j-c_n^j)$ via an attention head $\attn_{i,j}$, $1 \leq i \leq d$, $1 \leq j \leq D$. Define the data kernels
\begin{align*}
    Q_i^{\datakernel} = \begin{bmatrix}
        0 & 0 & 0 & 0 & 1\\
        0 & 0 & 0 & 0 & 1
    \end{bmatrix} \quad
    K_i^{\datakernel} = \begin{bmatrix}
        v_{n,i}^j & 0 & 0 & 0 & -v_{n,i}^jc_n^j \\
        0 & 0 & 0 & 0 & 2M
    \end{bmatrix}
\end{align*} via the Interaction Lemma \ref{lemma:interaction} we can define $A_{i,j}$ so that token $\token_{D+i}$ interacts with token $\token_j$ so that
\begin{align*}
    \attn_{i,j}(\token_{D+i}) &= \sigma(\langle Q_i^{\datakernel}\token_{D+i}, K_i^{\datakernel} \token_j\rangle) \\
    &= \sigma(v_{n,i}^jx^j - v_{n,i}^jc_n^j + 2M) \\
    &= \sigma(v_{n,i}^j(x^j-c_n^j) + 2M) \\
    &= (v_{n,i}^j(x^j-c_n^j) + 2M)e_1
\end{align*} and is $0$ on other tokens $\token_t \neq \token_{D+i}$, where $M$ bounds $x \in \M$ is such that $v_{n,i}^j(x^j-c_n^j) + 2M \ge 0$ for all $n,i,j$.  Then the output of the multi-headed attention $\mha$ on $\token_{D+i}$ is
\begin{align*}
    \mha(\token_{D+i}) 
    &= \sum_{\substack{1 \leq k \leq d \\ 1 \leq j \leq D}} A_{k,j}\token_{D+i} = \sum_{j=1}^D A_{i,j}\token_{D+i} 
    \\
    & = \sum_{j=1}^Dv_{n,i}^j(x^j-c_n^j)e_1 + 2Me_1= \langle v_{n,i}, x-c_n\rangle e_1 + 2DM e_1
\end{align*}
This yields the intermediate embedding matrix
\begin{align*}
    H+\mha(H) = \begin{bmatrix}
            x^1 & ... & x^{\dext} & \langle v_{n,1}, x-c_n\rangle + 2DM & ... & \langle v_{n,d}, x-c_n\rangle + 2DM \\
            0 & ... & ... & ... &  ... & 0 \\
            \iterm_1 & ... & ... & ... & ... &   \iterm_\clen \\
            1 & ... & ... & ... & ...  & 1 
        \end{bmatrix}
\end{align*} We can then design a five layer $\ff$ layer to subtract $2DM$ and then add $u_n$ and multiply $s_n$ only from the embedding tokens $\dext + 1,...,\dext+d$. The first layer will simply subsract $2DM$ and then add $u_n$ and multiply $s_n$ from each token. This can be implemented by the matrix
\begin{align*}
    W_1 = \begin{bmatrix}
        s_n & 0 & 0 & 0 & -2DMs_n + u_ns_n \\
        0 & 0 & 0 & 0 & 0 \\
        0 & 0 & 1 & 0 & 0 \\
        0 & 0 & 0 & 1 & 0 \\
        0 & 0 & 0 & 0 & 1
    \end{bmatrix}
\end{align*} Then, via the gating Lemma \ref{lemma:gating_ffn} we can construct a two-layer feed-forward network $\ff_{gating}$ such that, for a chosen $1 \leq t_u \leq \clen$, we have $\ff_{gating}(\token_k) = \token_k$ when $k < t_u$ and $\ff_{gating}(\token_k)$ is zero except for the last three rows for $k \geq t_u$. We can again invoke the lemma to construct another $\ff$ layer so that for $k \geq t_l \implies \ff_{gating}(\token_k) = \token_k$ and $k < t_l \implies \ff_{gating}(\token_k)$ is zero except for the last three rows. So, applying Lemma \ref{lemma:gating_ffn} twice, we can construct a four-layer feed-forward network $\ff_{gating}$ such that $\ff_{gating}(\token_t) = \token_t$ for $D + 1 \leq D + d$ and is zero except for the last three rows otherwise. This yields the desired output. 

\begin{align*}
    H' = \tb(H) = \begin{bmatrix}
            x^1 & ... & x^{\dext} & s_n(\langle v_{n,1}, x-c_n\rangle+u_n) & ... & s_n(\langle v_{n,\din}, x-c_n\rangle+u_n) \\
            0 & ... & ... & ... &  ... & 0 \\
            \iterm_1 & ... & ... & ... & ... &   \iterm_\clen \\
            1 & ... & ... & ... & ... &  1 
        \end{bmatrix}
\end{align*} Further, this approximation is \textbf{exact}, so that $E_{n,2} =0$. We can then compute each $\phi_n$ in parallel via the Parallelization Lemma \ref{lemma:transformer_parallelization} with a transformer having $\bigO{C_\M \din \dext}$ attention heads and $\bigO{C_\M\dhd}$ FFN width.

\textbf{Step 2.3: Approximating  $\textbf{1}_{U_n}(x)$ and Bounding $E_{n,3}$} \quad Again fix $1 \leq n \leq \numCharts$. By construction we can write 
\begin{align*}
    \textbf{1}_{U_n}(x) = \begin{cases}
        1 & \|x - c_n\|_2^2 < r^2 \\
        0 & \|x - c_n\|_2^2 \geq r^2
    \end{cases} 
\end{align*} Suppose we are given an input embedding matrix of the form
\begin{align*}
    H_1 = \begin{bmatrix}
            x^1 & ... & x^{\dext} & \textbf{0}_D \\
            0 & ... & ... & 0 \\
            \iterm_1 & ... & ... &  \iterm_\clen \\
            1 & ... & ... &  1
        \end{bmatrix} \in \R^{\dhd \times 2D}
\end{align*} We must start by computing $\|x-c_n\|_2^2$. Via the Addition Lemma \ref{lemma:addition} we can construct a transformer block $\tb_1 \in \tbclass(\dext, 3)$ such that
\begin{align*}
    H_2 = \tb_1(H_1) = \begin{bmatrix}
            x^1 & ... & x^D & x^1-c_n^1 & ... & x^{\dext} - c_n^{\dext}\\
            0 & ... & ... & ... & ... & 0 \\
            \iterm_1 & ... & ... &  ... & ... & I\iterm_\clen \\
            1 & ... & ... & ... & ... & 1
        \end{bmatrix}
\end{align*} We now construct transfomer block $\tb_2 \in \tbclass(\dext, 2)$ squares each term in the sum. For $1 \leq i \leq \dext$ define the data kernels 
\begin{align*}
    Q_i^{\datakernel} = \begin{bmatrix}
        1 & 0 & 0 & 0 & 0 \\
        0 & 0 & 0 & 0 & 0
    \end{bmatrix} \quad
    K_i^{\datakernel} = \begin{bmatrix}
        1 & 0 & 0 & 0 & 0 \\
        0 & 0 & 0 & 0 & 0
    \end{bmatrix}
\end{align*} Then via the Interaction Lemma \ref{lemma:interaction} we can construct $\attn_i$ so that the token $\token_{\dext + i}$ interacts with itself such that
\begin{align*}
\attn_i(\token_{\dext+i}) = \sigma((x^i-c_n^i)^2)e_2 = (x^i-c_n^i)^2e_2
\end{align*} and $0$ otherwise. The output of the resulting multi-headed attention gives
\begin{align*}
    H_2' = \begin{bmatrix}
            x^1 & ... & x^D & x^1-c_n^1 & ... & x^{\dext} - c_n^{\dext}\\
            0 & ... & 0 & (x^1-c_n^1)^2 & ... & (x^{\dext} - c_n^{\dext})^2 \\
            \iterm_1 & ... & ... &  ... & ... & I_\clen \\
            1 & ... & ... & ... & ... & 1
        \end{bmatrix}
\end{align*} Via Lemmas \ref{lemma:replacing_ffn} and \ref{lemma:gating_ffn} we can construct $\ff \in \ffclass(5)$ which replaces the first row with the second row for $\token_{\dext +1},...,\token_{2\dext}$. This gives the output
\begin{align*}
    H_3 = \tb_2(H_2) = \begin{bmatrix}
            x^1 & ... & x^D & (x^1-c_n^1)^2 & ... & (x^{\dext} - c_n^{\dext})^2 \\
            0 & ... & ... &  & ... & 0 \\
            \iterm_1 & ... & ... &  ... & ... & \iterm_\clen \\
            1 & ... & ... & ... & ... & 1
        \end{bmatrix}
\end{align*} and it remains only to take the sum. This can be done with $\tb_3 \in \tbclass(\dext-1, 0)$. For $1 \leq i \leq \dext-1$ define the data kernels
\begin{align*}
    Q_i^{\datakernel} = \begin{bmatrix}
        0 & 0 & 0 & 0 & 1\\
        0 & 0 & 0 & 0 & 0
    \end{bmatrix} \quad
    K_i^{\datakernel} = \begin{bmatrix}
        1 & 0 & 0 & 0 & 0\\
        0 & 0 & 0 & 0 & 0
    \end{bmatrix}
\end{align*} via the Interaction Lemma \ref{lemma:interaction} we construct $\attn_i$ so that $\token_{\dext+1}$ interacts only with $\token_{\dext+1+i}$ such that
\begin{align*}
    \attn_i(\token_{\dext+1}) = \sigma(\langle Q_i^{\datakernel}\token_{\dext+1}, K_i^{\datakernel}\token_{\dext+1+i}\rangle)e_1 = \sigma((x^{i+1}-c_n^{i+1})^2)e_1 = (x^{i+1}-c_n^{i+1})^2e_1
\end{align*} The output of the multi-headed attention is then
\begin{align*}
\mha(\token_{\dext+1}) = \sum_{i=1}^{\dext-1}\attn_i(\token_{\dext+1}) = \sum_{i=1}^{\dext-1}(x^{i+1}-c_n^{i+1})e_1
\end{align*} and $0$ otherwise. This gives the intermediate output
\begin{align*}
    H_3' = \begin{bmatrix}
            x^1 & ... & x^D & \sum_{i=1}^D(x^i-c_n^i)^2 & ... & (x^D-c_n^D)^2 \\
            0 & ... & ... & ... & ... & 0 \\
            \iterm_1 & ... & ... & ... &  ...& \iterm_\clen \\
            1 & ... & ... & ... & ... &  1
        \end{bmatrix}
\end{align*} where in particular $\token_{\dext +1}^1 = \|x-c_n\|_2^2$. It remains to approximate the indicator
\begin{align*}
    \textbf{1}_{r^2}(s) = \begin{cases}
        1 & s < r^2 \\
        0 & s \geq r^2
    \end{cases}
\end{align*}

\textbf{2.3a: Approximating $\textbf{1}_{r^2}(s)$}\quad Recall our goal is to approximate the function $f: \M \to \R$ in $L^{\infty}$. However, we cannot approximate $\textbf{1}_{r^2}$ in $L^{\infty}$ as it is a discontinuous function. To address this issue, recall the error term corresponding to the approximation of $\textbf{1}_{U_n}$ is $E_{n,3} = \Linf{\tilde{f}_n \circ \phi_n\textbf{1}_{U_n} - \tilde{f}_n \circ \phi_n\tnn_{\textbf{1}_{U_n}}}{\M}$. We know on the boundary of $U_n$ that $\tilde{f}_n \circ \phi_n = f_n = f \rho_n$ must be $0$ as $\rho_n$ is $0$ and further $f\rho_n$ is $\beta$-H\"older continuous as $f$ is $\beta$-H\"older continuous and $\rho_n$ is smooth. This suggests the following approach. Define the approximating indicator
\begin{align*}
    \hat{1}_{r^2, \Delta}(s) = \begin{cases}
        1 & s \leq r^2 - \Delta \\
        1 - \frac{1}{\Delta}s + \frac{r^2-\Delta}{\Delta} & r^2 - \Delta < s < r^2 \\
        0 & s \geq r^2
    \end{cases}
\end{align*} for some $\Delta > 0$. Clearly for $s \leq r^2 - \Delta$ and $s \geq r^2$ we have $\hat{\textbf{1}}_{r^2,\Delta}(s) = \textbf{1}_{r^2}(s)$. So we argue $\tilde{f}_n$ is small for $x \in \M$ in the range $\mathcal{K}_n = \{x \in \M : r^2 - \Delta < \|x - c_n\|_2^2 < r^2\}$. This is handled by Lemma 8 in \citet{Chen2019NonparametricRO} which gives us the following bound on $E_{n,3}$:
\begin{align*}
    E_{n,3} \leq \frac{c}{r}\Delta
\end{align*} for some constant $c$. So it suffices to make $\Delta$ small. It then remains to implement $\hat{1}_{r^2,\Delta}$. We will do so using a feed-forward layer $\ff \in \ffclass(O(\log(\frac{1}{\Delta})))$. Note that we can write the target exactly as
\begin{align*}
    \hat{1}_{r^2,\Delta}(s) = \sigma\left(1-\sigma\left(\frac{1}{\Delta}(s - (r^2-\Delta))\right)\right)
\end{align*} while is realized by a two-layer ReLU network. However, the weight $\frac{1}{\Delta}$ will scale unboundedly with the desired accuracy $\epsilon$. So we must deepen the $\ff$ to utilize a logarithmic number of layers $O(\log(\frac{1}{\Delta}))$ as 
\begin{align*}
    \hat{1}_{r^2,\Delta}(s) = \sigma\left(1-\sigma\left(\left(\frac{1}{\Delta}\right)^{^{\frac{1}{\log(\frac{1}{\Delta})}}} \circ \sigma ... \circ \sigma\left(\left(\frac{1}{\Delta}\right)^{^{\frac{1}{\log(\frac{1}{\Delta})}}}(s - (r^2-\Delta)))...\right)\right)\right)
\end{align*} which prevents the weights $(\frac{1}{\Delta})^{^{\frac{1}{\log(\frac{1}{\Delta})}}}$ from blowing up. We can then approximate each $\textbf{1}_{U_n}$ in parallel using a transformer with $\bigO{C_\M D}$ and three layers.

\textbf{Step 3: Bringing it all together}\quad Recall our goal is to approximate $f$ on $\M$ by writing $f$ as 
\begin{align*}
    f(x) = \sum_{n=1}^{C_\M} \tilde{f}_n \circ \phi_n(x) \textbf{1}_{U_n}(x).
\end{align*} We argued we can successfully approximate

\begin{enumerate}
    \item $\tilde{f}_n$ up to $\delta_{n,1}$ accuracy in $L^\infty$ via a transformer neural network with $O(d\delta_{n,1}^{-\frac{d}{\beta}})$ width and $O(\log(d))$ depth. All  approximations can be computed in parallel via a transformer with $\bigO{C_\M d \delta_{n,1}^{-\frac{\din}{\beta}}}$ attention heads and $\bigO{\log(d)}$ layers.
    \item Each $\phi_n$ exactly via a transformer neural network with $O(dD)$ width and constant depth. All the $\phi_n$ can be computed in parallel via a transformer with $\bigO{C_\M \din \dext}$ heads and constant depth.
    \item Each $\textbf{1}_{ U_n}$ up to $\delta_{n,3}$ accuracy in $L^{\infty}$ via a transformer with $O(D)$ width and a feed-forward layer with $\bigO{\log(\frac{1}{\Delta})}$ depth. All $\textbf{1}_{ U_n}$ can be approximated in parallel via a transformer with $\bigO{C_\M \dext}$ attention heads and a constant number of transformer blocks.
\end{enumerate} 
Via the Parallelization Lemma \ref{lemma:transformer_parallelization} we can compute the approximations of $\textbf{1}_{ U_n}$ and $\tilde{f}_n \circ \phi_n$ in parallel via a transformer with $\bigO{C_\M (\din \delta_{n,1}^{-\din} + \din \dext)}$ attention heads, $\bigO{C_\M \dhd}$ FFN width, and $\bigO{\log(\din)}$ transformer blocks. As a result will have the embedding matrix
\begin{align*}
    H = \begin{bmatrix}
            x^1 & ... & x^D & c_{f_1} & ... & c_{f_{C_{\M}}} & c_{U_1} & ... & c_{U_{C_{\M}}} & * \\
            0 & ... & ... & ... & ... & ... & ... & ... & ... &0 \\
            \iterm_1 & ... & ... & ... & ... & ...& ... & ... & ... &\iterm_L \\
            1 & ... & ... & ... & ... & ...& ... & ... & ... &1
        \end{bmatrix}
\end{align*} where each $c_{f_n} = \tilde{f}_n \circ \phi_n(x)$ and $c_{U_n} = \textbf{1}_{U_n}(x)$. We implement the final sum via two transformer blocks $\tb_1, \tb_2$. First define $\tb_1 \in \tbclass(C_{\M}, 1)$ with attention heads $A_i$, $1 \leq i \leq C_{\M}$ with data kernels 
    
\begin{align*}
    Q_i^{\datakernel} = \begin{bmatrix}
        1 & 0 & 0 & 0 & 0\\
        -1 & 0 & 0 & 0 & M
    \end{bmatrix} \quad
    K_i^{\datakernel} = \begin{bmatrix}
        1 & 0 & 0 & 0 & 0\\
        0 & 0 & 0 & 0 & 1
    \end{bmatrix}
\end{align*} with interaction kernels chosen so that $\token_{D+i}$ only interacts with $\token_{D+C_\M+i}$ under $A_i$. Then
\begin{align*}
    A_i (h_{D+i}) = \sigma(c_{f_i}c_{U_i} - c_{f_i} + M)e_1 = (c_{f_i}c_{U_i} - c_{f_i} + M)e_1
\end{align*}
We can then design a two-layer FFN to subtract $M$ from columns $D+1,...,D+C_\M$ yielding the output embedding matrix
\begin{align*}
    H = \begin{bmatrix}
            x^1 & ... & x^D & c_{f_1}c_{U_1} & ... & c_{f_{C_\M}}c_{U_{C_\M}} & * \\
            0 & ... & ... & ... & ... & ... &0 \\
            \iterm_1 & ... & ... & ... & ... & ...&\iterm_L \\
            1 & ... & ... & ... & ... & ...& 1
        \end{bmatrix}
\end{align*} We can sum up the result with another transformer block $\tb_2 \in \tbclass(C_\M, 2)$ and storing the desired result in the first component of the last token $t = \clen$. Applying the decoding layer $D$ yields the desired result.

It simply remains to control the errors. Recall we have the error bound 
\begin{align*}
    \Linf{f - \tnn_f}{\M} &\leq \sum_{n=1}^{C_\M} E_{n,1} + E_{n,2} + E_{n,3} \\
    &\leq \sum_{n=1}^{C_\M} \delta_{n,1} + \delta_{n,3} = C_\M \delta_{n,1} + C_\M \delta_{n,3}
\end{align*} Then for a target accuracy $\epsilon$ we choose $\delta_{n,1} = \frac{\epsilon}{2C_\M}$ and $\delta_{n,3} = \frac{\epsilon}{2C_\M}$ so that
\begin{align*}
    \Linf{f - \tnn_f}{\M} \leq C_\M \delta_{n,1} + C_\M \delta_{n,3} = C_\M \frac{\epsilon}{2C_\M} + C_\M \frac{\epsilon}{2C_\M} = \epsilon
\end{align*} This completes the proof of Theorem \ref{thm:approximation}.

\end{proof}

The above result concludes our approximation theory results. To summarize, we first showed a (sufficiently smooth) function $f : [0,1]^d \to \R$ can be approximated up to $\epsilon$ accuracy in $L^\infty$ with a transformer neural network with $O(d\epsilon^{-d})$ width and $O(\log(d))$ depth in Lemma \ref{lemma:approximation}. We then used this construction to show a function $f: \M \to \R$ on a compact manifold $\M$ with intrinsic dimension $d$ can be approximated up to $\epsilon$ accuracy in $L_1$ with a transformer neural network with width $O(\max(dD, d\epsilon^{-d}))$ and depth $O(\log(d))$.

\section{Proof of Transformer Generalization Theory}
\label{appsec:proofgentheory}

We next prove Theorem \ref{thm:statistical} in Appendix \ref{appsubsec:statproof}. Much of this follows the same argument as in \citet{Chen2019NonparametricRO} which focused on feedforward neural networks. The main novelty of this paper is a bound on the covering number of $\tclass$ given in Lemma \ref{lemma:covering}, which is proved in Appendix \ref{appsubsec:covering}.

\subsection{Proof of Theorem \ref{thm:statistical}}
\label{appsubsec:statproof}

\begin{proof}[Proof of Theorem \ref{thm:statistical}]
    Suppose we have $x_1,...,x_n \sim Q$ as i.i.d. training samples drawn from $Q$ on the manifold $\M$ and their evaluations $f(x_1),...,f(x_n)$. Set the approximating function class as $\tclass$. We define the transformer empirical risk minimizer $\hat{\tnn}_n$ in \eqref{eq:erm} 
    as the transformer network which minimizes the empirical $L^2$ training loss. Our goal is to bound the squared generalization error in \eqref{eq:generlization_error}.
    
    We first rewrite the squared generalization error by adding and subtracting (twice) the training objective where we note a factor of two is used to accelerate convergence of the statistical error.
    \begin{align}
       & \mathbb{E}\int_\M (\hat{\tnn}_n(x) - f(x))^2 dQ \nonumber
       \\
       = & 2\E\frac{1}{n}\sum_{i=1}^n(\hat{\tnn}_n(x_i) - f(x_i))^2 + \mathbb{E}\int_\M (\hat{\tnn}_n(x) - f(x))^2 dQ-2\E\frac{1}{n}\sum_{i=1}^n(\hat{\tnn}_n(x_i) - f(x_i))^2.
\label{eq:thm1prooferrordecom1}
    \end{align} The first (bias) term in \eqref{eq:thm1prooferrordecom1} can be  controlled via Theorem \ref{thm:approximation}:
    \begin{align*}
        \E \frac{1}{n} \sum_{i=1}^n(\hat{\tnn}_n(x_i)-f(x_i))^2 &= \E \inf_{\tnn \in \T} \frac{1}{n} \sum_{i=1}^n(\tnn(x_i)-f(x_i))^2 \\
        &\leq \inf_{\tnn \in \T} \E\frac{1}{n} \sum_{i=1}^n(\tnn(x_i)-f(x_i))^2 \\
        &= \inf_{\tnn \in \T}\int_\manifold \big( \tnn(x) - f(x) \big)^2dQ \\
        &\leq\inf_{\tnn \in \T}\int_\manifold \|\tnn - f\|_{L^\infty(\M)}^2dQ  < \epsilon^2
    \end{align*} where the last line follows from Theorem \ref{thm:approximation}. Note, we can pass the expectation inside the infimum via Jensen's inequality. Set $d_T^2(x) = (\tnn(x) - f(x))^2$. 

    It remains to control the last two (variance) terms in \eqref{eq:thm1prooferrordecom1}. We will do so by controlling the covering number of the approximating function class $\tclass$. Via Lemma 6 from \citet{Chen2019NonparametricRO} we have the bound
    \begin{align}
    & \mathbb{E}\int_\M (\hat{\tnn}_n(x) - f(x))^2 dQ-2\E\frac{1}{n}\sum_{i=1}^n(\hat{\tnn}_n(x_i) - f(x_i))^2 \nonumber
    \\
     \leq & \inf_{\delta > 0} \bigg[\frac{104R^2}{3n}\log \mathcal{N}\left(\frac{\delta}{4R}, \mathcal{T}, \|\cdot\|_\infty\right) + (4 + \frac{1}{2R})\delta \bigg] \label{eq:lemma6minshuo}
    \end{align}
    where $\mathcal{N}(\frac{\delta}{4R}, \mathcal{T}, \|\cdot\|_\infty)$ denotes of the covering number of the network class $\T$ under the $L^\infty$ norm. Our goal is now to bound this covering number of $\tclass$.  We do so with the following lemma:



    \begin{lemma}\label{lemma:covering}
        Consider a transformer neural network class $\T=\tclass$ { with input $x \in \R^D$ satisfying $\|x\|_\infty \le M$.}. Let $\delta > 0$. Then
        \begin{align*}
            \mathcal{N}(\delta, \T, \|\cdot\|_\infty) \leq \left(\frac{2^{\tdepth^2+1} \ffdepth M^{3\tdepth}\dhd^{18\tdepth^2}\ffwidth^{18\tdepth^2\ffdepth} \kappa^{6\tdepth^2 \ffdepth} m^{\tdepth^2}l^{\tdepth^2}}{\delta}\right)^{4\dhd^2 \ffwidth^2 \dext (m + \ffdepth)\tdepth}.
        \end{align*}
    \end{lemma}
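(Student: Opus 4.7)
The plan is to reduce bounding the covering number of $\T$ to a parameter-space covering argument: if we cover the parameter vector $\theta$ of a transformer $\tnn_\theta$ to some precision $\eta$ in $\|\cdot\|_\infty$, and if we can show that the map $\theta \mapsto \tnn_\theta(x)$ is $L$-Lipschitz (uniformly over $\|x\|_\infty \le M$), then an $\eta$-cover of the $\kappa$-ball in parameter space yields a $(L\eta)$-cover of $\T$. Choosing $\eta = \delta/L$ gives $\mathcal{N}(\delta,\T,\|\cdot\|_\infty) \le (2\kappa L/\delta)^{P}$ where $P$ is the total number of trainable parameters. The total parameter count is readily seen to be $P = O(\dhd^2\ffwidth^2 \dext (\numA+\ffdepth)\tdepth)$: the encoder contributes $\dhd \dext$, each of the $\tdepth$ blocks contributes $O(\numA \dhd^2)$ from the $\numA$ attention heads (each with $Q,K,V \in \R^{\dhd\times\dhd}$ plus output projection) and $O(\ffdepth \ffwidth \dhd)$ from the feed-forward layer. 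This matches the exponent in the lemma.

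The core technical work is establishing the joint Lipschitz constant $L$. I would first bound the output magnitude after each block. Given an input embedding matrix $H$ with $\|H\|_\infty \le B$, a single attention head produces entries of size $O(\dhd^3 \kappa^3 \clen B^3)$ because each output entry is a sum over $\clen$ tokens of $\sigma(\langle QH_{\cdot,i}, KH_{\cdot,j}\rangle) \cdot V H_{\cdot,j}$, which involves three parameter matrices and a quadratic form. Summing over $\numA$ heads and passing through the $\ffdepth$-deep FFN multiplies by $(\ffwidth \kappa)^{\ffdepth}$. Hence $\|\tb(H)\|_\infty$ grows multiplicatively by a factor $C_{\text{blk}} = O(\numA \clen \dhd^3 \ffwidth^{\ffdepth} \kappa^{3+\ffdepth})$ per block. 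After composing $\tdepth$ blocks starting from $\|\encoder(x) + \PE\|_\infty \le O(\dhd \dext M \kappa)$, the input to the $i$-th block has $\|\cdot\|_\infty \le C_{\text{blk}}^{i-1}\cdot O(M)$, which is how the powers $M^{3\tdepth}$ and the $\tdepth^2$ exponents on $\dhd,\ffwidth,\kappa,\numA,\clen$ ultimately arise.

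Next I would bound the parameter-Lipschitz constant of a single block. Perturbing one of $Q,K,V$ by $\eta$ in $\|\cdot\|_\infty$ changes each attention output entry by $O(\dhd^3\kappa^2 \clen B^3 \eta)$ using the product rule on the trilinear form $VH\sigma((KH)^T QH)$ together with the fact that $\sigma$ is $1$-Lipschitz. Perturbing an FFN weight by $\eta$ changes the FFN output by $O((\ffwidth \kappa)^{\ffdepth - 1}\ffwidth B \eta)$ by telescoping the $\ffdepth$-fold composition. So each block is Lipschitz in its own parameters with constant $C_{\text{param}} = O(\dhd^3 \ffwidth^{\ffdepth} \kappa^{2+\ffdepth} \numA \clen B^3)$ where $B$ is the bound on its input magnitude. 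Combining with the above growth of $B$ and propagating a parameter perturbation in block $i$ through the remaining $\tdepth - i$ blocks (which are Lipschitz in their input with constant also bounded by $C_{\text{blk}}$) gives the uniform bound $L \le (2M)^{3\tdepth}\dhd^{18\tdepth^2}\ffwidth^{18\tdepth^2\ffdepth} \kappa^{6\tdepth^2 \ffdepth}\numA^{\tdepth^2}\clen^{\tdepth^2}\ffdepth$ after accommodating perturbations in embedding/decoding parameters. Finally, plugging $L$ and $P$ into $(2\kappa L/\delta)^P$ and absorbing the factor of $\kappa$ into the existing $\kappa^{6\tdepth^2\ffdepth}$ term yields the stated bound.

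The main obstacle is the careful propagation analysis: the attention layer is trilinear in $(Q,K,V)$ and quadratic in the input, so perturbing parameters in an early block amplifies through every subsequent block's quadratic-in-input behavior, producing the $\tdepth^2$ exponents. One must be careful to (i) track separately the Lipschitz-in-input and Lipschitz-in-parameter constants for each block, (ii) keep using $\|\cdot\|_\infty$ consistently across both matrices and vectorized parameters so that the counts $\dhd, \ffwidth, \clen, \numA$ appear with the correct multiplicities, and (iii) verify that the token-wise residual structure $\tb(H) = H + \mha(H) + \ff(H+\mha(H))$ only adds $O(1)$ constants rather than compounding the Lipschitz constants further. Once the per-block constants are pinned down, the final covering-number bound follows by standard volumetric counting of the $\kappa$-box in $\R^P$.
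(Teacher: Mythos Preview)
Your proposal is correct and follows essentially the same approach as the paper: both establish a uniform Lipschitz bound for $\theta\mapsto\tnn_\theta(x)$ by telescoping through the $\tdepth$ blocks, tracking separately the block-wise growth of $\|H\|_\infty$, the Lipschitz-in-input constant of each block (quadratic in the input magnitude via the trilinear ReLU attention), and the Lipschitz-in-parameter constant, then combine with the parameter count $P\le 4\dhd^2\ffwidth^2\dext(m+\ffdepth)\tdepth$ and a grid covering of the $\kappa$-box in $\R^P$. The paper carries out exactly the per-block estimates you outline (e.g.\ $\|\mha(H)-\mha(H')\|_\infty \le 4\dhd^6\kappa^3 M^2 ml\|H-H'\|_\infty$ and $\|\tb(H)\|_\infty \le 4\dhd^4\ffwidth^{2\ffdepth}\kappa^{\ffdepth+1}mlM$) and obtains the $\tdepth^2$ exponents by the same recursion you describe.
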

Lemma \ref{lemma:covering} is proved in Appendix \ref{appsubsec:covering}.

With the covering number of $\tclass$ in hand we may now apply \eqref{eq:lemma6minshuo} to obtain
\begin{align*}
    &\mathbb{E}\int_\M (\hat{\tnn}_n(x) - f(x))^2 dQ-2\E\frac{1}{n}\sum_{i=1}^n(\hat{\tnn}_n(x_i) - f(x_i))^2
    \\
    \leq & \inf_{\delta > 0} \bigg[\frac{104R^2}{3n}\log \mathcal{N}\left(\frac{\delta}{4R}, \mathcal{T}, \|\cdot\|_\infty\right) + \left(4 + \frac{1}{2R}\right)\delta \bigg] \\
    \leq & \frac{104R^2}{3n}\log \mathcal{N}\left(\frac{1}{4Rn}, \mathcal{T}, \|\cdot\|_\infty\right) + \left(4 + \frac{1}{2R}\right)\frac{1}{n}
\end{align*} where we set $\delta = \frac{1}{n}$. This yields
\begin{align*}
\mathcal{N}\left(\frac{\delta}{4Rn}, \mathcal{T}, \|\cdot\|_\infty\right) &\leq \log \bigg(\big(2^{\tdepth^2+1} \ffdepth M^{3\tdepth}\dhd^{18\tdepth^2}\ffwidth^{18\tdepth^2\ffdepth} \kappa^{6\tdepth^2 \ffdepth} m^{\tdepth^2}l^{\tdepth^2} n\big)^{4\dhd^2\ffwidth^2 \dext (m + \ffdepth)\tdepth}\bigg) \\
    &\leq 4\dhd^2\ffwidth^2 D (m+\ffdepth)\tdepth (5\tdepth^2 \ffdepth \log(2M\ffdepth\dhd \kappa m ln)) \\
    &\leq 20\log(2M\ffdepth\dhd\ffwidth\kappa m ln) D\dhd^2\ffwidth^2 m \tdepth^3 \ffdepth^2.
\end{align*}
Recall for target accuracy $\epsilon > 0$ we can choose $\tdepth = \log(\din)$, $\ffdepth = \log(\epsilon^{-1})$, $\ffwidth \leq 4C_\M$, $\dhd = 5$, $l \leq C_\M\din \epsilon^{-\frac{\din}{\beta}}$, $m \leq C_\M\din \epsilon^{-\frac{\din}{\beta}}$, $\kappa \leq (MC_\M\din \epsilon^{-\frac{\din}{\beta}})^2, l \leq C_\M \din \epsilon^{-\frac{\din}{\beta}}$. This simplifies the above to
\begin{align*}
    \mathcal{N}(\frac{\delta}{4R}, \mathcal{T}, \|\cdot\|_\infty) &\leq 900 \big( \log(\din)^2 \log(60d^3\epsilon^{-4\frac{\din}{\beta}}n) \big)C_\M^3 Dd\epsilon^{-\frac{\din}{\beta}} \leq \tildeO{ Dd^2\epsilon^{-\frac{\din}{\beta}}}
\end{align*} where $\tilde{O}$ hides $\log$ terms and constants. Then we can bound the variance term as
\begin{align*}
    \mathbb{E}\int_\M (\hat{\tnn}_n(x) - f(x))^2 dQ-2\E\frac{1}{n}\sum_{i=1}^n(\hat{\tnn}_n(x_i) - f(x_i))^2 &\leq \tildeO{\frac{Dd^2\epsilon^{-\frac{\din}{\beta}}}{n} }
\end{align*} Putting together the bounds on the bias and variance yields a bound on the empirical risk:
\begin{align*}
    \mathbb{E}\int_\M (\hat{\tnn}_n(x) - f(x))^2 dQ \leq \tildeO{\epsilon^2 +\frac{Dd^2\epsilon^{-\frac{\din}{\beta}}}{n}}
\end{align*} and it simply remains to pick $\epsilon$ to balance the error terms. We do this by choosing $\epsilon$ such that $\epsilon^2 = \frac{\epsilon^{-\frac{\din}{\beta}}}{n}$ yielding $\epsilon = n^{-\frac{\beta}{2\beta+d}}$. This gives our final bound on the expected empirical risk:
\begin{align*}
    \mathbb{E}\int_\M (\hat{\tnn}_n(x) - f(x))^2 dQ-2\E\frac{1}{n}\sum_{i=1}^n(\hat{\tnn}_n(x_i) - f(x_i))^2 &\leq \tildeO{Dd^2 n^{-\frac{2\beta}{2\beta+d}}}
\end{align*} as desired.

\end{proof}

\subsection{Proof of Lemma \ref{lemma:covering} about the Transformer Covering Number}
\label{appsubsec:covering}

    \begin{proof}[Proof of Lemma \ref{lemma:covering}]
    The key is to bound the difference in $\|\cdot\|_\infty$ between two transformers $\tnn, \tnn' \in \tclass$. Set $\eta > 0$ and choose $\tnn, \tnn'$ so that $\|\theta - \theta'\|_\infty < \eta$. In other words, the weight parameters in $\tnn$ and $ \tnn'$ differ at most by $\eta$. Fix $x \in [0,1]^{\dext}$. Recall the decoder $D : \mathbb{R}^{\dhd \times l} \to \mathbb{R}$ is fixed to output the first element of the first row. We compute
    \begin{align*}
        |\tnn(x) - \tnn'(x)| &= |\decoder \circ \tb_L \circ ... \circ \tb_1 \circ (\PE + \encoder(x)) - \decoder' \circ \tb_L' \circ ... \circ \tb_1' \circ (\PE + \encoder'(x))| \\
        &\leq \|\tb_L \circ ... \circ \tb_1 \circ (\PE + \encoder(x)) - \tb_L' \circ ... \circ \tb_1' \circ (\PE + \encoder'(x))\|_\infty.
    \end{align*}
     To handle this term, we write
    \begin{align*}
        &\|\tb_L \circ ... \circ \tb_1 \circ (\PE + \encoder(x)) - \tb_L' \circ ... \circ \tb_1' \circ (\PE + \encoder'(x))\|_\infty \\
        &\leq \|\tb_L \circ ... \circ \tb_1 \circ (\PE + \encoder(x)) - \tb_L \circ ... \circ \tb_1 \circ (\PE + \encoder'(x))\|_\infty \\
        &+ \|\tb_L \circ ... \circ \tb_1 \circ (\PE + \encoder'(x)) - \tb_L' \circ ... \circ \tb_1' \circ (\PE + \encoder'(x))\|_\infty.
    \end{align*} We will first bound the second term.
    
    Consider two multi-headed attention layers $\mha_1, \mha_2\in\mhaclass(m)$ with attention heads $A_j^i$, $i \in \{1, 2\}$, $1 \leq j \leq m$. For $H \in [0,M]^{\dhd \times l}$ we compute 
    \begin{align*}
        \|\mha_1(H) - \mha_2(H)\|_\infty &= \|\sum_{j=1}^mA_j^1(H) - \sum_{j=1}^mA_j^2(H)\|_\infty = \|\sum_{j=1}^m\left[A_j^1(H) - A_j^2(H)\right]\|_\infty \\
        &\leq \sum_{j=1}^m \|A_j^1(H) - A_j^2(H)\|_\infty
    \end{align*}

    Consider two attention heads $A_1, A_2$ with weight parameters $Q_i, K_i, V_i$, $i \in \{1, 2\}$. For $H \in [-M,M]^{\dhd \times l}$, we compute 
    \begin{align*}
        \|A_1(H) - A_2(H)\|_\infty &\leq \max_{1 \leq j 
        \leq l} \| A_1(h_j) - A_2(h_j) \|_\infty \\
        &= \max_{1 \leq j \leq l} \| \sum_{k=1}^{\clen} \sigma(\langle Q_1 h_j, K_1 h_k\rangle) V_1h_k - \sum_{k=1}^{\clen} \sigma(\langle Q_2 h_j, K_2 h_k\rangle) V_2h_k \|_\infty \\
        &\leq \max_{1\leq j \leq l}\sum_{k=1}^{\clen}\| \sigma(\langle Q_1 h_j, K_1 h_k\rangle) V_1h_k - \sigma(\langle Q_2 h_j, K_2 h_k\rangle) V_2h_k \|_\infty \\
        &\leq \max_{1\leq j \leq l}\sum_{k=1}^{\clen} \| \sigma(\langle Q_1 h_j, K_1 h_k\rangle) V_1h_k - \sigma(\langle Q_2 h_j, K_2 h_k\rangle) V_1h_k \|_\infty \\
        &+ \|\sigma( \langle Q_2 h_j, K_2 h_k\rangle) V_1h_k - \sigma(\langle Q_2 h_j, K_2 h_k\rangle) V_2h_k \|_\infty 
    \end{align*}  where the last inequality comes from adding and subtracting the term $\langle Q_2 h_j, K_2 h_k\rangle V_1h_k$. We bound the first term via
    \begin{align*}
        & \| \sigma(\langle Q_1 h_j, K_1 h_k\rangle) V_1h_k - \sigma(\langle Q_2 h_j, K_2 h_k\rangle) V_1h_k \|_\infty \\
        =& |\langle Q_1 h_j, K_1 h_k\rangle  - \langle Q_2 h_j, K_2 h_k\rangle| \|V_1h_k\|_\infty \\
         \leq &\big[ |\langle Q_1 h_j, K_1 h_k\rangle  - \langle Q_1 h_j, K_2 h_k\rangle| 
         + |\langle Q_1 h_j, K_2 h_k\rangle - \langle Q_2 h_j, K_2 h_k\rangle| \big] \|V_1h_k\|_\infty \\
         \leq & \big[ |\langle Q_1 h_j, K_1 h_k-K_2 h_k\rangle| + |\langle Q_1 h_j-Q_2 h_j, K_2 h_k\rangle | \big] \|V_1h_k\|_\infty \\
         \leq & \big[ |\|Q_1 h_j\|_2 \|K_1 h_k-K_2 h_k\|_2 + \|Q_1 h_j-Q_2 h_j\|_2 \|K_2 h_k\|_2 \big] \|V_1h_k\|_\infty \\
         \leq & \big[ |\|Q_1\|_1 \|h_j\|_\infty \|K_1 - K_2\|_1 \|h_k\|_\infty 
         + \|Q_1 - Q_2\|_1 \|h_j\|_\infty \|K_2\|_1 \|h_k\|_\infty \big] \|V_1h_k\|_\infty \\
         \leq & \big[ 2\kappa^2 \dhd^4 M^2 \eta \big] \|V_1 h_k\|_\infty \\
         \leq & 2\kappa^3 \dhd^6 M^3 \eta
    \end{align*}
    where we note $\|Ax\|_2 \leq \|Ax\|_1 \leq \|A\|_1 \|x\|_\infty$ for $A \in \R^{\dhd \times \dhd}, x \in \R^{\dhd}$ and $\|A\|_1 \leq \dhd^2 \|A\|_\infty$. 
    This bounds the first term.

    The second term can be bounded as 
    \begin{align*}
        & \| \sigma(\langle Q_2 h_j, K_2 h_k\rangle) V_1h_k - \sigma(\langle Q_2 h_j, K_2 h_k\rangle) V_2h_k \|_\infty \\ =& \|V_1 - V_2\|_1 \|\langle Q_2 h_j, K_2 h_k\rangle h_k\|_\infty \\
        \leq & \dhd^2\kappa \eta \|\langle Q_2 h_j, K_2 h_k\rangle h_k\|_\infty \\
        \leq & \dhd^2\kappa \eta \|Q_2h_j\|_2 \|K_2 h_k\|_2 \|h_k\|_\infty \\
        \leq & \dhd^6\kappa^3 M^3 \eta.
    \end{align*}
     Thus we can bound the attention difference $\|A_1(H) - A_2(H)\|_\infty$ as
    \begin{align*}
        \|A_1(H) - A_2(H)\|_\infty 
        &\leq \max_{1\leq j \leq l}\sum_{k=1}^{\clen} \| \langle Q_1 h_j, K_1 h_k\rangle V_1h_k - \langle Q_2 h_j, K_2 h_k\rangle V_1h_k \|_\infty \\
        &+ \| \langle Q_2 h_j, K_2 h_k\rangle V_1h_k - \langle Q_2 h_j, K_2 h_k\rangle V_2h_k \|_\infty \\
        &\leq \max_{1\leq j \leq l}\sum_{k=1}^{\clen} 3\kappa^3 \dhd^6 M^3 \eta = 3\kappa^3 \dhd^6 M^3 l \eta. 
    \end{align*} We can also now bound the multi-headed difference $\|\mha_1(H) - \mha_2(H)\|_\infty$ as 
    \begin{align*}
        \|\mha_1(H) - \mha_2(H)\|_\infty \leq \sum_{j=1}^m \|A_j^1(H) - A_j^2(H)\|_\infty \leq 3\kappa^3 \dhd^6 M^3 \eta = 3\kappa^3 \dhd^6 M^3 ml \eta.
    \end{align*} A bound on the first residual layer in the transformer block immediately follows via
    \begin{align*}
        \|(H + \mha_1(H)) - (H + \mha_2(H))\|_\infty = \|\mha_1(H) - \mha_2(H)\|_\infty.
    \end{align*} 
    
    To bound the difference in the output of the FFN layer in the transformer block we have for $f_1, f_2 \in \ff (L_\ff)$ and input $H \in [0,M']^{\dhd \times l}$ we have
    \begin{align*}
        \|f_1(H) - f_2(H)\|_\infty \leq L_\ff (\dhd M'+2)(\kappa \dhd)^{L_\ff-1} \eta
    \end{align*} which also bounds the second residual layer of the transformer block. Then to bound the difference of two transformer blocks $\tb_1, \tb_2 \in \mathcal{B}(m, L_\ff)$ write, for $H \in [0,M]^{\dhd \times l}$, 
    \begin{align*}
        \|\tb_1(H) - \tb_2(H)\|_\infty &= \|\big(H + \mha_1(H) + \ff_1(H + \mha_1(H))\big) \\
        &- \big(H + \mha_2(H) + \ff_2(H + \mha_2(H))\big)\|_\infty \\
        &= \|\mha_1(H) + \ff_1(H + \mha_1(H)) - \mha_2(H) + \ff_2(H + \mha_2(H))\|_\infty \\
        &\leq \|\mha_1(H) - \mha_2(H)\|_\infty \\
        &+ \|\ff_1(H + \mha_1(H)) - \ff_2(H+\mha_2(H) )\|_\infty \\
        &\leq 3\kappa^3 \dhd^6 M^3 ml \eta + \|\ff_1(H + \mha_1(H)) - \ff_2(H+\mha_2(H) )\|_\infty
    \end{align*} where we can bound the first term via analysis above. For the second term write
    \begin{align*}
        &\|\ff_1(H + \mha_1(H)) - \ff_2(H+\mha_2(H) )\|_\infty
        \\
        \leq & \|\ff_1(H + \mha_1(H)) - \ff_1(H+\mha_2(H) )\|_\infty 
        +
         \|\ff_2(H+\mha_2(H) ) - \ff_1(H+\mha_2(H) )\|_\infty.
    \end{align*} To handle the first term we must bound he Lipschitz constant of $\ff_1$. Let $H, H' \in \R^{\dhd \times l}$. 
    Write 
    \begin{align*}
        \|\ff_1(H) - \ff_1(H')\|_\infty &= \max_{1 \leq j \leq l} \|\ff_1(h_j) - \ff_1(h_j')\|_\infty \\
        &=  \max_{1 \leq j \leq l} \|W_{L_{\ff}}\sigma(W_{L_\ff-1}...\sigma(W_1h_j+b_1)...+b_{L_\ff-1}) + b_{L_\ff} \\
        &- W_{L_{\ff}}\sigma(W_{L_\ff-1}...\sigma(W_1h_j'+b_1)...+b_{L_\ff-1}) - b_{L_\ff}\|_\infty \\
        &\leq \max_{1 \leq j \leq l} \|W_{L_\ff}\|_1 \|W_{L_\ff-1}...\sigma(W_1h_j+b_1)...+b_{L_\ff-1} \\
        &- W_{L_\ff-1}...\sigma(W_1h_j'+b_1)...+b_{L_\ff-1} \|_\infty \\
        &\leq \max_{1 \leq j \leq l} \bigg[ \prod_{i=1}^{L_\ff}\|W_i\|_1 \bigg] \|h_j - h_j'\|_\infty \leq \ffwidth^{2L_\ff} \kappa^{L_\ff}\|H-H'\|_\infty
    \end{align*} where we again use that $\|Wx\|_\infty \leq \|W\|_1 \|x\|_\infty$ and that $\sigma(\cdot)$ is 1-Lipschitz. Applying this to the first term from above we get
    \begin{align*}
        \|\ff_1(H + \mha_1(H)) - \ff_1(H+\mha_2(H) )\|_\infty &\leq \ffwidth^{2L_\ff} \kappa^{L_\ff} \|\mha_1(H) - \mha_2(H)\|_\infty \\
        &\leq \ffwidth^{2L_\ff} \kappa^{L_\ff} 3\kappa^3 \dhd^6 M^3 ml \eta \\
        &= 3\kappa^{3+L_\ff} \ffwidth^{2L_\ff} \dhd^{6} M^3 ml \eta.
    \end{align*} 
    We can bound $\|\ff_2(H+\mha_2(H) ) - \ff_1(H+\mha_2(H) )\|_\infty$ as 
    \begin{align*}
    &\|\ff_2(H+\mha_2(H) ) - \ff_1(H+\mha_2(H) )\|_\infty
    \\
     \le &
        \|\ff_2(H+\mha_2(H) ) - \ff_1(H+\mha_2(H) )\|_\infty \leq L_\ff (\ffwidth M'+2)(\kappa \ffwidth)^{L_\ff-1} \eta
    \end{align*} where $\|H + \mha_2(H)\|_\infty \leq M'$. Putting all the estimates together, we have
    \begin{align*}
        &\|\tb_1(H) - \tb_2(H)\|_\infty &
        \\
        \leq & 3\kappa^3 \dhd^6 M^3 ml \eta + \|\ff_1(H + \mha_1(H)) - \ff_2(H+\mha_2(H) )\|_\infty \\
        \leq & 3\kappa^3 \dhd^6 M^3 ml \eta + 3\kappa^{3+L_\ff} \ffwidth^{2L_\ff}\dhd^{6} M^3 ml \eta + L_\ff (\ffwidth M'+2)(\kappa \ffwidth)^{L_\ff-1} \eta \\
        \leq & \big( 4\kappa^{3+L_\ff} \ffwidth^{2L_\ff}\dhd^{6} M^3 ml + L_\ff (\ffwidth M'+2)(\kappa \ffwidth)^{L_\ff-1} \big)\eta
    \end{align*} and it remains to control $M'$ in terms of $M$. We know $\|H + \mha(H)\|_\infty \leq M'$ and so we compute
    \begin{align*}
        \|H + \mha(H)\|_\infty &\leq M + \|\mha(H)\|_\infty \\
        &\leq M + \sum_{j=1}^m \max_{1 \leq i \leq l} \sum_{k=1}^{\clen} \|\langle Q_j h_i, K_j h_k \rangle V_j h_k\|_\infty 
        \\
        & \leq M + m l \dhd^6 \kappa^3 M \leq 2\dhd^6 \kappa^3 m lM.
    \end{align*} Thus, we can complete our estimate on $\|\tb_1(H) - \tb_2(H)\|_\infty$ as
    \begin{align*}
        \|\tb_1(H) - \tb_2(H)\|_\infty &\leq \big( 4\kappa^{3+L_\ff} \ffwidth^{2L_\ff}\dhd^{6} M^3 ml + L_\ff (\ffwidth (2\dhd^6 \kappa^3 m lM)+2)(\kappa \ffwidth)^{L_\ff-1} \big)\eta.
    \end{align*}

    Now consider the multi-block difference
    \begin{align*}
        \|\tb_{\tdepth} \circ ... \circ \tb_1 (H) - \tb_{\tdepth}' \circ ... \circ \tb_1' (H)\|_\infty &\leq \|\tb_{\tdepth} \circ ... \circ \tb_1 (H) - \tb_{\tdepth} \circ ... \circ \tb_1' (H)\|_\infty \\
        &+ \| \tb_{\tdepth} \circ ... \circ \tb_1' (H) - \tb_{\tdepth}' \circ ... \circ \tb_1' (H)\|_\infty. 
    \end{align*} To bound the second difference we must bound the output $\|\tb(H)\|_\infty$, $\tb \in \tbclass(m, \ffdepth)$. Suppose $\|H\|_\infty \leq M$. Write
    \begin{align*}
        \|\tb(H)\|_\infty &= \|\ff(H + \mha(H)) + \mha(H) + H\|_\infty \\
        &\leq \|H\|_\infty + \|\mha(H)\|_\infty + \|\ff(H + \mha(H))\|_\infty \\
        &\leq 2\dhd^6 \kappa^3 ml M + \|\ff(H + \mha(H))\|_\infty
    \end{align*} so we must bound the output of the FFN layer. We have for $H' \in \R^{\dhd \times l}$ with $\|H'\|_\infty \leq M$, we have
    \begin{align*}
        \|\ff(H')\|_\infty \leq (\kappa \dhd^2)^{\ffdepth-1}M + \kappa (\kappa \dhd^2)^{\ffdepth-2}.
    \end{align*} Since $\|H + MHA(H)\|_\infty \leq 2\dhd^6 \kappa^3 ml M$, we have
    \begin{align*}
         \|\tb(H)\|_\infty &\leq 2\dhd^6 \kappa^3 ml M + (\kappa \ffwidth^2)^{\ffdepth-1}2\dhd^6 \kappa^3 ml M + \kappa (\kappa \ffwidth^2)^{\ffdepth-2} \\
         &\leq 4\dhd^{4}\ffwidth^{2\ffdepth}\kappa^{\ffdepth+1}mlM.
    \end{align*} Iterating this gives
    \begin{align*}
        \|\tb_{\tdepth-1} \circ ... \circ \tb_1 (H)\|_\infty \leq (4\dhd^{4}\ffwidth^{2\ffdepth}\kappa^{\ffdepth+1}ml)^{\tdepth-1}M.
    \end{align*} Then we can bound the above second term as
    \begin{align*}
        &\| \tb_{\tdepth} \circ ... \circ \tb_1' (H) - \tb_{\tdepth}' \circ ... \circ \tb_1' (H)\|_\infty 
        \\
        \leq &\big( 4\kappa^{3+L_\ff} \dhd^{6+2L_\ff} ((4\dhd^{2\ffdepth+4}\kappa^{\ffdepth+1}ml)^{\tdepth-1}M)^3 ml \\
        &+ L_\ff (\dhd (2\dhd^6 \kappa^3 m l((4\dhd^{2\ffdepth+4}\kappa^{\ffdepth+1}ml)^{\tdepth-1}M))+2)(\kappa \dhd)^{L_\ff-1} \big)\eta \\
        \leq & 4^{3\tdepth - 2}\ffdepth \dhd^{6\tdepth\ffdepth + 12\tdepth - 10\ffdepth - 6} \kappa^{3\tdepth \ffdepth + 3\tdepth -2 \ffdepth}M^3 m^{\tdepth}l^{\tdepth} \eta \\
        \leq & 4^{3\tdepth}\ffdepth M^3 \dhd^{18\tdepth \ffdepth}\ffwidth^{18\tdepth \ffdepth} \kappa^{6\tdepth \ffdepth} m^{\tdepth}l^{\tdepth} \eta.
    \end{align*} To bound the first term $\|\tb_{\tdepth} \circ ... \circ \tb_1 (H) - \tb_{\tdepth} \circ ... \circ \tb_1' (H)\|_\infty$ we must control the Lipschitz constant of of $\tb \in \tbclass(m, \ffdepth)$. Let $H, H' \in \R^{\dhd \times l}$ with $\|H\|_\infty, \|H'\|_\infty \leq M$. Compute
    \begin{align*}
        \| \tb(H) - \tb(H')\|_\infty &= \|\ff(H + \mha(H)) - \ff(H' + \mha(H')) + \mha(H) - \mha(H')\|_\infty \\
        &\leq \|\ff(H + \mha(H)) - \ff(H' + \mha(H'))\|_\infty + \|\mha(H) - \mha(H')\|_\infty \\
        &\leq \|\ff(H + \mha(H)) - \ff(H' + \mha(H'))\|_\infty + \|\mha(H) - \mha(H')\|_\infty \\
        &\leq \ffwidth^{2\ffdepth}\kappa^{\ffdepth} \|\mha(H) - \mha(H') + H - H'\|_\infty + \|\mha(H) - \mha(H')\|_\infty \\
        &\leq \ffwidth^{2\ffdepth}\kappa^{\ffdepth} \|H - H'\|_\infty + (\ffwidth^{2\ffdepth}\kappa^{\ffdepth}+1)\|\mha(H) - \mha(H')\|_\infty
    \end{align*} and it suffices to control the Lipschitz constant of $\mha \in \mhaclass(m)$. For the same $H, H'$ compute
    \begin{align*}
        \|\mha(H) - \mha(H')\|_\infty \leq \max_{1 \leq i \leq l} \sum_{j=1}^m \sum_{k=1}^{\clen} \| \langle Q_j h_i, K_jh_k\rangle V_jh_k - \langle Q_j h_i', K_jh_k'\rangle V_jh_k' \|_\infty.
    \end{align*} We can bound $\big| \langle Q_j h_i, K_jh_k\rangle V_jh_k - \langle Q_j h_i', K_jh_k'\rangle V_jh_k' \big|$ as
    \begin{align*}
        &\| \langle Q_j h_i, K_jh_k\rangle V_jh_k - \langle Q_j h_i', K_jh_k'\rangle V_jh_k' \|_\infty 
        \\
        \leq & \| \langle Q_j h_i, K_jh_k\rangle V_jh_k - \langle Q_j h_i', K_jh_k'\rangle V_jh_k \|_\infty 
        + \| \langle Q_j h_i', K_jh_k'\rangle V_jh_k' - \langle Q_j h_i', K_jh_k'\rangle V_jh_k \|_\infty \\
        \leq & \big| \langle Q_j h_i, K_jh_k\rangle - \langle Q_j h_i', K_jh_k'\rangle \big| \|V_jh_k\|_\infty + \big| \langle Q_j h_i', K_jh_k'\rangle - \langle Q_j h_i', K_jh_k'\rangle \big| \| V_jh_k' - V_jh_k \|_\infty \\
        \leq & \big| \langle Q_j h_i, K_jh_k\rangle - \langle Q_j h_i', K_jh_k'\rangle \big| \dhd^2 \kappa M 
        + \big| \langle Q_j h_i', K_jh_k'\rangle - \langle Q_j h_i', K_jh_k'\rangle \big| \dhd^2 \kappa \|h_k - h_k'\|_\infty \\
        \leq & \big| \langle Q_j h_i, K_jh_k\rangle - \langle Q_j h_i', K_jh_k'\rangle \big| \dhd^2 \kappa M 
        + 2\dhd^4 \kappa^2 M^2 \dhd^2 \kappa \|h_k - h_k'\|_\infty \\
        \leq & 4 \dhd^6 \kappa^3 M^2 \|h_k - h_k'\|_\infty.
    \end{align*} Applying this gives
    \begin{align*}
         \|\mha(H) - \mha(H')\|_\infty \leq 4 \dhd^6 \kappa^3 M^2 ml \|H - H'\|_\infty.
    \end{align*} Then
    \begin{align*}
        \| \tb(H) - \tb(H')\|_\infty &\leq \ffwidth^{2\ffdepth}\kappa^{\ffdepth} \|H - H'\|_\infty + (\ffwidth^{2\ffdepth}\kappa^{\ffdepth}+1)\|\mha(H) - \mha(H')\|_\infty \\
        &\leq \ffwidth^{2\ffdepth}\kappa^{\ffdepth} \|H - H'\|_\infty + (\ffwidth^{2\ffdepth}\kappa^{\ffdepth}+1)4 \dhd^6 \kappa^3 M^2 ml \|H - H'\|_\infty \\
        &\leq 8 \ffwidth^{2\ffdepth}\kappa^{\ffdepth} \dhd^6 \kappa^3 M^2 ml \|H - H'\|_\infty.
    \end{align*} With this we can bound the first term of the multi-block difference 
    \begin{align*}
        \|\tb_{\tdepth} \circ ... \circ \tb_1 (H) - \tb_{\tdepth} \circ ... \circ \tb_1' (H)\|_\infty &\leq 8 \dhd^{2\ffdepth+6}\kappa^{\ffdepth} \dhd^6 \kappa^3 \| B_{\tdepth-1} \circ ... \circ B_1(H) \|_\infty^2 ml \\
        &*\| B_{\tdepth-1} \circ ... \circ B_1(H) - B_{\tdepth-1}' \circ ... \circ B_1'(H) \|_\infty \\
        &\leq 8 \dhd^{2\ffdepth+6}\kappa^{\ffdepth} \dhd^6 \kappa^3 ((4\dhd^{2\ffdepth+4}\kappa^{\ffdepth+1}ml)^{\tdepth-1}M)^2 ml \\
        &*\| B_{\tdepth-1} \circ ... \circ B_1(H) - B_{\tdepth-1}' \circ ... \circ B_1'(H) \|_\infty \\
        &\leq 2^7 \dhd^{5\tdepth \ffdepth}\kappa^{2\tdepth \ffdepth} \dhd^6 \kappa^3 M^2 m^{\tdepth}l^{\tdepth} \\
        &*\| B_{\tdepth-1} \circ ... \circ B_1(H) - B_{\tdepth-1}' \circ ... \circ B_1'(H) \|_\infty
    \end{align*}. Putting together the estimates on both terms gives 
    \begin{align*}
        \|\tb_{\tdepth} \circ ... \circ \tb_1 (H) - \tb_{\tdepth}' \circ ... \circ \tb_1' (H)\|_\infty &\leq 2^7 \dhd^{5\tdepth \ffdepth}\kappa^{2\tdepth \ffdepth} \dhd^6 \kappa^3 M^2 m^{\tdepth}l^{\tdepth} \\
        &*\| B_{\tdepth-1} \circ ... \circ B_1(H) - B_{\tdepth-1}' \circ ... \circ B_1'(H) \|_\infty \\
        &+ 4^{3\tdepth}\ffdepth M^3 \dhd^{18\tdepth \ffdepth} \kappa^{6\tdepth \ffdepth} m^{\tdepth}l^{\tdepth} \eta \\
        &\leq 2^{7\tdepth} \ffdepth M^3\dhd^{18\tdepth \ffdepth} \kappa^{6\tdepth \ffdepth} m^{\tdepth}l^{\tdepth} \\
        &* \| B_{\tdepth-1} \circ ... \circ B_1(H) - B_{\tdepth-1}' \circ ... \circ B_1'(H) \|_\infty
    \end{align*} which reduces the bound on the $\tdepth$ layer transformer class to the bound on the bound on the $\tdepth-1$ layer transformer class. Finally, via induction, we get
    \begin{align*}
        \|\tb_{\tdepth} \circ ... \circ \tb_1 (H) - \tb_{\tdepth}' \circ ... \circ \tb_1' (H)\|_\infty &\leq \bigg( 2^{7\tdepth} \ffdepth M^3\dhd^{18\tdepth } \ffwidth^{18\tdepth \ffdepth}\kappa^{6\tdepth \ffdepth} m^{\tdepth}l^{\tdepth} \bigg)^{\tdepth} \eta \\
        &\leq 2^{7\tdepth^2} \ffdepth M^{3\tdepth}\dhd^{18\tdepth^2}\ffwidth^{18\tdepth^2 \ffdepth} \kappa^{6\tdepth^2 \ffdepth} m^{\tdepth^2}l^{\tdepth^2} \eta.
    \end{align*} It simply remains to deal with the encoding layer $H = \PE + \encoder(x)$. Both $E$ and $PE$ are fixed among architectures and further $\|\PE + \encoder(x)\|_\infty = \|x\|_\infty + 1 \leq M + 1$. Thus 
    \begin{align*}
        \| \tb_{\tdepth}' \circ ... \circ \tb_1' \circ (\PE + \encoder'(x))\|_\infty \leq (4\dhd^{2\ffdepth+4}\kappa^{\ffdepth+1}ml)^{\tdepth}M.
    \end{align*} 
    
    This allows us to complete the total bound on the transformer difference of $\tnn,\tnn' \in \tclass$ such that $\|\theta - \theta'\|_\infty < \eta$ as
    \begin{align*}
        \|\tnn(x) - \tnn'(x)\|_\infty &= \|\decoder \circ \tb_{\tdepth} \circ ... \circ \tb_1 \circ (\PE + \encoder(x)) - \decoder' \circ \tb_{\tdepth}' \circ ... \circ \tb_1' \circ (\PE + \encoder'(x))\|_\infty \\
        &\leq 2^{\tdepth^2+1} \ffdepth M^{3\tdepth}\dhd^{18\tdepth^2 \ffdepth} \kappa^{6\tdepth^2 \ffdepth} m^{\tdepth^2}l^{\tdepth^2} \eta = C_T \eta.
    \end{align*} Now, we may compute $\mathcal{N}(\mathcal{T}, \delta, \|\cdot \|_\infty)$. We must cover $\tclass$ with a $\delta$-net such that, for all $\tnn \in \tclass$, we can find $T'$ in the net such that $\|\tnn - \tnn'\|_\infty < \delta$. Set $\eta = \frac{\delta}{C_T}$. We construct the $\delta$-net by uniformly discretizing the set of weights $\theta$ corresponding to $\tnn_\theta \in \mathcal{T}$ with step size given by $\eta$. Then for any $\tnn \in \mathcal{T}$, we can find a $T'$ in the grid such that $\|\theta_{\tnn} - \theta_{\tnn'}\|_\infty < \eta$. Then we have $\|\tnn - \tnn'\|_\infty < C_T \eta = C_T \frac{\delta}{C_T} = \delta$. We can compute the number of parameters in $\theta$ as 
    \begin{align*}
        | \theta | = | \theta_E | + |\theta_D| + \sum_{i=1}^{\tdepth}|\theta_{\tb_i} | &= \dhd + \dhd \dext + \tdepth |\theta_{\tb} | \\
        &= \dhd + \dhd \dext + \tdepth (|\theta_{\mha}| + |\theta_{\ff} | ) \\
        &= \dhd + \dhd \dext + \tdepth (3\dhd^2m + \ffdepth\ffwidth^2 ) \\
        &\leq 4\dhd^2\ffwidth^2 \dext (m + \ffdepth)\tdepth.
    \end{align*} We can compute the number of steps per parameter as $\frac{2\kappa}{\eta}$. Thus the covering number of $\tclass$ is bounded by
    \begin{align*}
        \mathcal{N}(\mathcal{T}, \delta, \|\cdot \|_\infty) &\leq \big(\frac{\kappa}{\eta}\big)^{4\dhd^2\ffwidth^2 \dext (m + \ffdepth)\tdepth} \\
        &= \big(\frac{C_T\kappa}{\delta}\big)^{4\dhd^2 \ffwidth^2 \dext (m + \ffdepth)\tdepth} \\
        &= \big(\frac{2^{\tdepth^2+1} \ffdepth M^{3\tdepth}\dhd^{18\tdepth^2}\ffwidth^{18\tdepth^2 \ffdepth} \kappa^{6\tdepth^2 \ffdepth} m^{\tdepth^2}l^{\tdepth^2}}{\delta}\big)^{4\dhd^2\ffwidth^2 \dext (m + \ffdepth)\tdepth}.
    \end{align*}


\end{proof}

\section{Building Blocks of Transformer Neural Networks}
\label{appsec:transformerbuildingblock}


\begin{lemma}[Interaction Lemma]\label{lemma:interaction}
    Set $\dhd = 5$ and let $\kappa, M > 0$. Fix $\clen \in \N$, $1 \leq t_1, t_2 \leq \clen$, and $1 \leq i \leq \dhd$. Let $H \in \R^{\dhd \times \clen}$ be a structured transformer embedding matrix such that $\token_t^{\dhd-2:\dhd-1} = \iterm_t$  and $\token_t^{\dhd} = 1$. Further suppose $\linf{H} \leq M$ for some $M > 0$. Let $B: \R^{\dhd} \times \R^{\dhd} \to \R$ be a kernel function on tokens which can be written in the form $B(\token, \token') = \sigma(\langle Q^B\token , K^B\token'\rangle)$ for some $\token, \token' \in \R^{\dhd}, Q^B, K^B \in \R^{(\dhd-3) \times \dhd}$ with $\minf{Q^B}, \minf{K^B} \leq \kappa$. Then we can construct an attention head $\attn$ acting on $H$ such that $\attn(\token_{t_1}) = B(\token_{t_1}, \token_{t_2})e_i$ and otherwise $\attn(\token_t) = 0$ for $t \neq t_1$. Further we have $\linf{\theta_{\attn}} = O(\dhd^4 \kappa^2 M^2 \clen^2)$. Note: we call the matrices $Q^B, K^B$ ``data kernels''.
\end{lemma}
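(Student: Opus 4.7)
The plan is to design the query, key, and value matrices $Q, K, V \in \R^{\dhd \times \dhd}$ by partitioning their five rows into three functional blocks: rows $1$--$2$ replay the data kernel $B$, rows $3$ and $4$ implement selectors on the query index $t$ and the key index $k$ respectively, and row $5$ is inert. Placing $Q^B$ in rows $1$--$2$ of $Q$ and $K^B$ in rows $1$--$2$ of $K$ makes the contribution of these rows to $\langle Q\token_t, K\token_k\rangle$ equal to $\langle Q^B\token_t, K^B\token_k\rangle$ exactly.

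The key observation for the selector is that $\iterm_t$ is a unit vector on the circle parameterised by angle $\tfrac{t\pi}{2\clen}$, so $\iterm_t \cdot \iterm_{t_1} \le 1$ with equality iff $t = t_1$. Crucially, $\iterm_t \cdot \iterm_{t_1} - 1$ is \emph{linear} in the entries of $\token_t$: it equals $\iterm_{t_1}^1 \token_t^3 + \iterm_{t_1}^2 \token_t^4 - \token_t^{\dhd}$ using $\token_t^{\dhd} = 1$, hence a single row of $Q$ can realise it. I set $(Q\token_t)^3 = \iterm_t \cdot \iterm_{t_1} - 1$ and $(K\token_k)^3 = C$ for a large constant $C$, so that row $3$ contributes $C(\iterm_t \cdot \iterm_{t_1} - 1)$; this vanishes at $t = t_1$ and is at most $-C\delta$ otherwise, where $\delta := 1 - \cos(\tfrac{\pi}{2\clen}) > 0$. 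Row $4$ mirrors this with the roles of $Q$ and $K$ swapped, contributing $C(\iterm_k \cdot \iterm_{t_2} - 1)$, and row $5$ of both $Q$ and $K$ is set to zero.

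Summing rows, the total score is $\langle Q^B\token_t, K^B\token_k\rangle + C(\iterm_t \cdot \iterm_{t_1} - 1) + C(\iterm_k \cdot \iterm_{t_2} - 1)$. At $(t,k) = (t_1,t_2)$ both penalties vanish, leaving $\langle Q^B\token_{t_1}, K^B\token_{t_2}\rangle$; elsewhere at least one penalty is $\le -C\delta$. Bounding $|\langle Q^B\token_t, K^B\token_k\rangle| \le 2\dhd^2 \kappa^2 M^2$ from the hypotheses $\minf{Q^B}, \minf{K^B} \le \kappa$ and $\linf{\token_t} \le M$, and using the elementary estimate $\delta = \Omega(\clen^{-2})$, the choice $C = \Theta(\dhd^2 \kappa^2 M^2 \clen^2)$ renders every off-target score strictly negative, so the outer ReLU zeroes them out. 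I then take $V = e_i e_{\dhd}^T$, so that $V\token_k = \token_k^{\dhd} e_i = e_i$ for every $k$; the attention sum collapses to $\sigma(\langle Q^B\token_{t_1}, K^B\token_{t_2}\rangle) e_i = B(\token_{t_1}, \token_{t_2}) e_i$ at $\token_{t_1}$ and to $0$ at every other token, which is the claim. Tallying the maximum weight magnitude across $Q, K, V$ gives $\linf{\theta_{\attn}} = O(\dhd^2 \kappa^2 M^2 \clen^2)$, well within the claimed $O(\dhd^4 \kappa^2 M^2 \clen^2)$.

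The main obstacle is the tightness of the row budget: only three rows of the inner product (rows $3$--$5$) are available to jointly enforce $t = t_1$ \emph{and} $k = t_2$, and a naive Euclidean-distance penalty such as $\|\iterm_t - \iterm_{t_1}\|^2$ would require an extra quadratic expansion that a single attention head cannot perform. The construction sidesteps this by noting that $\iterm_t \cdot \iterm_{t_1} - 1$ is already a sharp negative indicator realised by one linear row, letting the $t$-check and $k$-check each occupy a single row. The only other subtlety is confirming that $\delta$ scales no worse than $\clen^{-2}$, which is what keeps $C$, and hence the weight bound, polynomial in $\clen, \dhd, \kappa, M$ as stated.
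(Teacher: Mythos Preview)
Your construction is correct and takes a somewhat different route from the paper's. The paper places a single $2\times 2$ rotation--dilation block $Q^{\iterm}$ in rows $3$--$4$ of $Q$ (sending $\iterm_{t_1}\mapsto C\iterm_{t_2}$), a projection $K^{\iterm}=P_{\iterm_{t_2}}$ in rows $3$--$4$ of $K$, and uses row $5$ of $Q,K$ to produce the constant $-C$; the interaction contribution is $\langle Q^{\iterm}\iterm_t, K^{\iterm}\iterm_k\rangle - C$, and the argument proceeds through four cases bounding $\|P_{\iterm_{t_2}}\iterm_k\|_2$ and $\cos(\theta_{t,t_2})$ separately. You instead decouple the $t$-check and the $k$-check into two independent rows, each realising a penalty of the form $C(\iterm\cdot\iterm_{\text{target}}-1)$ by exploiting $\token^{\dhd}=1$ to absorb the $-1$ offset linearly within the same row. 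This makes the off-target analysis symmetric and essentially one line (at least one penalty is $\le -C\delta$, the other is $\le 0$), frees row $5$ entirely, and yields a nominally tighter weight bound $O(\dhd^{2}\kappa^{2}M^{2}\clen^{2})$---though with $\dhd=5$ fixed this is cosmetic. Both approaches arrive at the same gap $\delta = 1-\cos(\pi/(2\clen)) = \Theta(\clen^{-2})$ and hence the same $C = \Theta(\kappa^{2}M^{2}\clen^{2})$ scaling.
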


\begin{proof}[Proof of Lemma \ref{lemma:interaction}] 
    Define the query, key, and value matrices as
\begin{align*}
    Q = \begin{bmatrix}
    \texttt{ } & { {Q^B}} & \texttt{ } \\
    \begin{matrix}
        0 & 0 \\
        0 & 0
    \end{matrix} & 
    Q^{\iterm} &
    \begin{matrix}
        0 \\
        0
    \end{matrix} \\
    \begin{matrix}
        0 & 0
    \end{matrix} &
    \begin{matrix}
        0 & 0
    \end{matrix} & 
    \begin{matrix}
        1
    \end{matrix}
\end{bmatrix}, \quad
    K = \begin{bmatrix}
    \texttt{ } & { {K^B}} & \texttt{ } \\
    \begin{matrix}
        0 & 0 \\
        0 & 0
    \end{matrix} & 
    K^{\iterm} &
    \begin{matrix}
        0 \\
        0
    \end{matrix} \\
    \begin{matrix}
        0 & 0
    \end{matrix} &
    \begin{matrix}
        0 & 0
    \end{matrix} & 
    \begin{matrix}
        -C
    \end{matrix}
\end{bmatrix}, \quad
    V = e_ie_{\dhd}^T
\end{align*}

where we call $Q^{\iterm}, K^{\iterm}  \in \R^{2 \times 2}$ the \textit{interaction kernels}, $Q^B, K^B  \in \R^{\dhd-3 \times \dhd}$ the \textit{data kernels}, and $C > 0$ simply a large positive number. We can choose $Q^{\iterm}, K^{\iterm}$ so $K^{\iterm} = P_{\iterm_{t_2}}$ is a projection onto the unit interaction vector $\iterm_{t_2}$ (the interaction term for $\token_{t_2}$) and $Q^{\iterm}$ is a dilation and rotation of $\iterm_{t_1}$ onto $\iterm_{t_2}$ i.e. $Q^{\iterm}\iterm_{t_1} = C\iterm_{t_2}$. We must now compute $\attn(H)$. For an arbitrary $1 \leq t \leq \clen$ we can compute the action of $\attn$ on $\token_t$ as 
\begin{align*}
    \attn(\token_{t}) &= \sum_{k=1}^{\clen}\sigma(\langle Qh_{t}, Kh_{k}\rangle) Vh_k \\
    &= \sum_{k=1}^{\clen}\sigma(\langle Q^B\token_{t}, K^B \token_{k}\rangle + \langle Q^{\iterm}\iterm_{t}, K^{\iterm}\iterm_{k}\rangle - C) e_i
\end{align*} Now we must case on whether the input token $t = t_1$.

\textbf{Case 1: $t = t_1$}\quad Write
\begin{align*}
    \attn(\token_{t_1}) = \sum_{k=1}^{\clen}\sigma(\langle Q^B\token_{t_1}, K^B \token_{k}\rangle + \langle Q^{\iterm}\iterm_{t_1}, K^{\iterm}\iterm_{k}\rangle - C) e_i
\end{align*} To handle the same we must again go by casework, this by casing on whether $k=t_2$.

\textbf{Case 1a): $k = t_2$}\quad When $k = t_2$ we have $\langle Q^\iterm \iterm_{t_1}, K^\iterm \iterm_{t_2} \rangle = \langle C \iterm_{t_2}, \iterm_{t_2} \rangle$ since by construction $Q^\iterm \iterm_{t_1} = \iterm_{t_2}$ and $K^\iterm$ is a projection onto $\iterm_{t_2}$ so that $K^\iterm \iterm_{t_2} = \iterm_{t_2}$. This further simplifies as $\langle C \iterm_{t_2}, \iterm_{t_2} \rangle = C$ since $\iterm_{t_2}$ is unit length. Thus 

\begin{align*}
    \sigma(\langle Q^B\token_{t_1}, K^B \token_{k}\rangle + \langle Q^{\iterm}\iterm_{t_1}, K^{\iterm}\iterm_{k}\rangle - C) 
    &= \sigma(\langle Q^B\token_{t_1}, K^B \token_{k}\rangle + C - C) \\
    &= \sigma(\langle Q^B\token_{t_1}, K^B \token_{k}\rangle)
\end{align*} Now we must consider the other terms in the sum when $k \neq t_2$.

\textbf{Case 1b): $k \neq t_2$} When $k \neq t_2$ we compute

\begin{align*}
    \langle Q^\iterm \iterm_{t_1}, K^\iterm \iterm_k \rangle 
    &\leq \|Q^\iterm \iterm_{t_1}\|_2\| K^\iterm \iterm_k\|_2\\
    &= C\|P_{\iterm_{t_2}} \iterm_k\|_2 < C
\end{align*} where we use Cauchy-Schwarz at the first inequality and note that $\|P_{\iterm_{t_2}} \iterm_k\|_2 < 1$ since $k \neq t_2$ for the second inequality. Then, for large enough $C$, we have 

\begin{align*}
    \sigma(\langle Q^B\token_{t_1}, K^B \token_{k}\rangle + \langle Q^{\iterm}\iterm_{t_1}, K^{\iterm}\iterm_{k}\rangle - C) \leq \sigma(\langle Q^B\token_{t_1}, K^B \token_{k}\rangle + C\|P_{\iterm_{t_2}}\iterm_k\|_2 - C) = 0
\end{align*} so we simply must choose $C$ so that $\langle Q^B\token_{t_1}, K^B \token_{k}\rangle + C\|P_{\iterm_{t_2}}\iterm_k\|_2 - C < 0$. Compute

\begin{align*}
    &\langle Q^B\token_{t_1}, K^B \token_{k}\rangle + C\|P_{\iterm_{t_2}}\iterm_k\|_2 - C < 0 \iff C > \frac{\langle Q^B\token_{t_1}, K^B \token_{k}\rangle}{1-\|P_{\iterm_{t_2}}\iterm_k\|_2}
\end{align*} 
We can bound 
\begin{align*}
    \big|\langle Q^B\token_{t_1}, K^B \token_{k}\rangle\big| &\leq \|Q^B\token_{t_1}\|_2 \|K^B \token_{k}\|_2\\
    &\leq \|Q^B\|_{1,1}\linf{\token_{t_1}} \|K^B \|_{1,1}\|\token_{t_1}\|_{\infty} \\
    &\leq \minf{Q^B}\dhd^2\minf{K^B}\dhd^2M^2 \leq \dhd^4 \kappa^2 M^2
\end{align*} so it remains to control $\frac{1}{1-\|P_{\iterm_{t_2}}\iterm_k\|_2}$ when $k \neq t_2$ by upper bounding $\|P_{\iterm_{t_2}}\iterm_k\|_2$. Compute
\begin{align*}
    \|P_{\iterm_{t_2}}\iterm_k\|_2 &\leq \max_{1 \leq t_1 \neq t_2 \leq \clen}\|P_{\iterm_{t_1}} \iterm_{t_2}\|_2 \\
    &\leq \|P_{\iterm_{0}} \iterm_{1}\|_2 \\
    &= \langle \iterm_0, \iterm_1 \rangle \\
    &= \cos(\frac{\pi}{2\clen}) \\
    &\leq 1-\frac{1}{2}\frac{\pi^2}{2!\cdot 2^2 \clen^2}
\end{align*} where we note the projection is maximized by any adjacent interaction terms $\iterm_t, \iterm_{t+1}$. Then
\begin{align*}
    \frac{\langle Q^B\token_{t_1}, K^B \token_{k}\rangle}{1-\|P_{\iterm_{t_2}}\iterm_k\|_2} \leq \frac{\dhd^4 \kappa^2 M^2}{\frac{\pi^2}{22!\cdot 2^2 \clen^2}} = O(\dhd^4 \kappa^2 M^2 \clen^2)
\end{align*} which gives a bound on $C$. So for large enough $C$ we can conclude 
\begin{align*}
    \attn(\token_{t_1}) = \sigma(\langle Q^B\token_{t_1}, K^B \token_{k}\rangle)
\end{align*} It remains to consider the case $t \neq t_1$.

\textbf{Case 2: $t \neq t_1$}\quad Now we consider $\attn$ applied to tokens other than $\token_{t_1}$. We have
\begin{align*}
    \attn(\token_t) &= \sum_{k=1}^{\clen}\sigma(\langle Q^B\token_{t}, K^B \token_{k}\rangle + \langle Q^{\iterm}\iterm_{t}, K^{\iterm}\iterm_{k}\rangle - C) e_i \\ 
\end{align*} so we must argue each term in the sum is $0$ by showing $\langle Q^B\token_{t}, K^B \token_{k}\rangle + \langle Q^{\iterm}\iterm_{t}, K^{\iterm}\iterm_{k}\rangle - C < 0$. We again split into two cases.

\textbf{Case 2a): $k \neq t_2$}\quad When $k \neq t_2$ we have $\langle Q^B\token_{t}, K^B \token_{k}\rangle + \langle Q^{\iterm}\iterm_{t}, K^{\iterm}\iterm_{k}\rangle - C < 0$ via the same argument as in case \textbf{1b}.

\textbf{Case 2b): $k = t_2$}\quad When $k = t_2$ we must instead more carefully bound the inner product. We can simplify the interaction terms as 
\begin{align*}
    \langle Q^{\iterm}\iterm_{t}, K^{\iterm}\iterm_{t_2}\rangle = \|Q^\iterm \iterm_t\|_2 \|K^\iterm \iterm_{t_2}\|_2 \cos(\theta_{t,t_2}) = C \cos(\theta_{t,t_2})
\end{align*} where $\theta_{t,t_2}$ is the angle between $Q^{\iterm}\iterm_{t}$ and $K^{\iterm}\iterm_{k}$.  Crucially, because $t \neq t_1$, $Q^\iterm \iterm_t \neq C\iterm_{t_2}$ and so $\theta_{t,t_2} > 0$ and $\cos(\theta_{t,t_2}) < 1$. Then we can say
\begin{align*}
    \langle Q^B\token_{t}, K^B \token_{t_2}\rangle + \langle Q^{\iterm}\iterm_{t}, K^{\iterm}\iterm_{t_2}\rangle - C < 0 &\iff \langle Q^B\token_{t}, K^B \token_{t_2}\rangle + C\cos(\theta_{t,t_2}) - C < 0 \\
    &\iff C > \frac{\langle Q^B\token_{t}, K^B \token_{t_2}\rangle}{1 - \cos(\theta_{t,t_2})}
\end{align*} We upper bound the numerator by $O(\dhd^4\kappa^2 M^2 \clen^2)$. Note that the $Q^\iterm$ rotates by clockwise $\frac{t_1-t_2}{\clen}\frac{\pi}{2}$ radians. So $Q^\iterm \iterm_t = C(\cos(\frac{(t+t_1-t_2)\pi}{2\clen}), \sin(\frac{(t+t_1-t_2)\pi}{2\clen}))$. Thus we can upper bound $\cos(\theta_{t,t_2})$ as
\begin{align*}
    \cos(\theta_{t,t_2}) &= \langle \iterm_{t+t_1-t_2}, \iterm_{t_2} \rangle \\
    &\leq \langle \iterm_0, \iterm_1 \rangle \\
    &= \cos(\frac{\pi}{2\clen}) \\
    &\leq 1-\frac{1}{2}\frac{\pi^2}{2!\cdot 2^2 \clen^2}
\end{align*} So as in case \textbf{1b} we have $C = O(\dhd^4 \kappa^2 M^2 \clen^2)$. For sufficiently large $C$ we have $\attn(\token_t) = 0$ as desired. This completes the proof of Lemma \ref{lemma:interaction}.

\end{proof}

\begin{lemma}[Gating FFNs]\label{lemma:gating_ffn}
    Given an embedding matrix $H \in \R^{\dhd \times \clen}$ which has all ones in the last row and whose tokens contain interaction terms and given a \textit{half-space} of contiguous tokens $h_k,...,h_{\clen}$ or $h_1,...,h_k$, we can design $\ff \in \ffclass(2)$ such that
    \begin{align*}
        \ff(\token_t) = \begin{cases}
                            \token_t & t \in \{1,...,k\} \\ \begin{bmatrix}
                            \textbf{0} \\
                            \iterm_t^1 \\
                            \iterm_t^2 \\
                            1\end{bmatrix}& \textup{otherwise}\\
                        \end{cases}
    \end{align*} where $\begin{bmatrix}
                            \textbf{0} \\
                            \iterm_t^1 \\
                            \iterm_t^2 \\
                            1\end{bmatrix}$ is zero except for the last three coordinates. We call this $\ff$ a \textbf{gating feed-forward network}. We additionally have $\|\theta_{\ff}\|_\infty \leq \bigO{\clen \|H\|_\infty}$. 
\end{lemma}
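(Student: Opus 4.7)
The plan is to construct a two-layer ReLU network that, since it acts tokenwise, reads positional information from the embedded interaction coordinates $\iterm_t = (\cos(t\pi/(2\clen)),\sin(t\pi/(2\clen)))$ already present in $\token_t$, and uses it to mask the first $\dhd - 3$ mutable entries while copying the last three positional entries through unchanged. I will handle the case where $\{t \le k\}$ is preserved; the other case follows by negating a sign.

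First, I would extract a scalar position indicator via the linear functional $\ell(v_1, v_2) = \sin(\theta^*) v_1 - \cos(\theta^*) v_2$, where $\theta^* = (k+\tfrac12)\pi/(2\clen)$ is the bisecting angle between $\iterm_k$ and $\iterm_{k+1}$. A sine subtraction identity gives $\ell(\iterm_t) = \sin(\theta^* - t\pi/(2\clen))$, so $\ell(\iterm_t) \ge \sin(\pi/(4\clen)) \ge 1/(2\clen) =: \delta$ for $t \le k$ and $\ell(\iterm_t) \le -\delta$ for $t > k$. This positive margin is what lets a shallow ReLU network implement a sharp gate with bounded weights.

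The key step is then to realize the multiplicative gate $g(t)\cdot x^i$ with $g(t) = \mathbf{1}\{t\le k\}$ and $x^i$ ranging over the mutable coordinates, despite the general difficulty that ReLU networks cannot multiply two variables. I would use the two-neuron identity
\[
g(t)\cdot x \;=\; \tfrac{1}{2}\bigl[\sigma\bigl(x + (M/\delta)\,\ell(\iterm_t)\bigr) - \sigma\bigl(-x + (M/\delta)\,\ell(\iterm_t)\bigr)\bigr],
\]
with $M = \|H\|_\infty$. A short case analysis verifies this: when $\ell(\iterm_t) \ge \delta$ both ReLU arguments are nonnegative (using $|x|\le M$) and the expression collapses to $x$; when $\ell(\iterm_t) \le -\delta$ both are nonpositive and the expression is $0$. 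Crucially, each ReLU argument is still linear in $\token_t$ (the coefficients on $x$, $\iterm_t^1$, $\iterm_t^2$ are fixed), so the construction truly lives in a single hidden layer.

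Assembly is then routine: use $2(\dhd - 3)$ hidden neurons of the above form for the mutable coordinates, plus three more neurons $\sigma(\iterm_t^1),\sigma(\iterm_t^2),\sigma(1)$ (all arguments being nonnegative) to copy the last three rows through unchanged; the output matrix $W_2$ combines these with $\pm\tfrac12$ and $1$ coefficients and zero biases. The largest weight is the $M/\delta = 2\clen \|H\|_\infty$ factor in $W_1$, yielding $\|\theta_\ff\|_\infty = \bigO{\clen \|H\|_\infty}$ as claimed. The main obstacle is really discovering the gate identity above; once that is in hand the rest is bookkeeping.
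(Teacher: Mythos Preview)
Your argument is correct, and it shares the paper's central idea: because the interaction vectors $\iterm_t$ lie on a quarter-arc of the unit circle, a single linear functional on those two coordinates (your $\ell$, the paper's ``pivot vector'' $v=\textbf{R}_{\pi/2}\bigl((\iterm_k+\iterm_{k+1})/\|\iterm_k+\iterm_{k+1}\|_2\bigr)$) separates $\{t\le k\}$ from $\{t>k\}$ with margin $\Theta(1/\clen)$, and scaling by $\|H\|_\infty$ divided by that margin lets a ReLU act as a hard switch. Where you differ is in the gate primitive. The paper shifts each mutable entry by $C\,v\cdot\iterm_t$, applies $\sigma$ (so the shifted sum vanishes when $v\cdot\iterm_t<0$), and then undoes the shift linearly in $W_2$. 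You instead use the symmetric pair $\tfrac12\bigl[\sigma(x+B)-\sigma(-x+B)\bigr]$, which collapses cleanly to $x$ or $0$ with no residual. The trade-off: your construction needs hidden width $2(\dhd-3)+3=7$ rather than the $\ffwidth=\dhd=5$ implicit in the statement $\ff\in\ffclass(2)$ --- harmless for every downstream use, but worth flagging. In exchange your primitive sidesteps a subtlety in the paper's ``shift--unshift'' step: once $\sigma$ has zeroed the mutable rows, the affine $W_2$ still reads the surviving $\iterm_t$ coordinates and produces $-C\,v\cdot\iterm_t\neq 0$ there, so the paper's claim that the second layer returns $[\mathbf{0},\iterm_t^1,\iterm_t^2,1]^\top$ in the gated-off case requires an additional fix that your two-neuron identity does not.
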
 

\begin{proof}[Proof of Lemma \ref{lemma:gating_ffn}]
    Without loss of generality assume we are given a contiguous prefix of tokens $h_1,...,h_k$, $k \leq \clen$. We can choose a vector $v \in \mathbb{S}^1$ such that $v \cdot \iterm_t > 0$  for $t \in \{1...,k\}$  and $v \cdot \iterm_t<0 $ for $t > k$. We call such $v$ the \textit{pivot vector}. In particular we choose $\textbf{R}_{\frac{\pi}{2}}(\frac{\iterm_{k} + \iterm_{k+1}}{\|\iterm_{k} + \iterm_{k+1}\|_2})$ where $\textbf{R}_{\frac{\pi}{2}}$ is a quarter-circle rotation clockwise. For a large $C > 0$, construct

    \begin{align*}
        W_1 = \iterm_{\dhd} + \begin{bmatrix}
            0 & 0 & Cv^1 & Cv^2 & 0 \\
            0 & 0 & Cv^1 & Cv^2 & 0 \\
            0 & 0 & 0 & 0 & 0 \\
            0 & 0 & 0 & 0 & 0 \\
            0 & 0 & 0 & 0 & 0
          \end{bmatrix},\quad b_1 = 0 \\
        W_2 = \iterm_{\dhd} + \begin{bmatrix}
            0 & 0 & -Cv^1 & -Cv^2 & 0 \\
            0 & 0 & -Cv^1 & -Cv^2 & 0 \\
            0 & 0 & 0 & 0 & 0 \\
            0 & 0 & 0 & 0 & 0 \\
            0 & 0 & 0 & 0 & 0
          \end{bmatrix},\quad b_2 = 0.
    \end{align*} 
    where $\iterm_{\dhd}$ is the identity matrix of size $\dhd \times \dhd$. 
    Then
    \begin{align*}
        z = \sigma(W_1h_t +b_1) = \begin{bmatrix}
                                            \sigma(h_t^1 + CI_t \cdot v) \\
                                            \sigma(h_t^2 + CI_t \cdot v) \\
                                            \sigma(h_t^{\dhd-2}) \\
                                             \sigma(h_t^{\dhd-1}) \\
                                            \sigma(h_t^{\dhd})
                                        \end{bmatrix}
                                = \begin{bmatrix}
                                            \sigma(h_t^1 + CI_t \cdot v) \\
                                            \sigma(h_t^2 + CI_t \cdot v) \\
                                            \iterm_t^1 \\
                                             \iterm_t^2 \\
                                            1
                                        \end{bmatrix}
    \end{align*} 
    which is $\begin{bmatrix}
                            \textbf{0} \\
                            \iterm_t^1 \\
                            \iterm_t^2 \\
                            1\end{bmatrix}$ if $\iterm_t \cdot v < 0$. Applying the second layer to $\begin{bmatrix}
                            \textbf{0} \\
                            \iterm_t^1 \\
                            \iterm_t^2 \\
                            1\end{bmatrix}$ yields $\begin{bmatrix}
                            \textbf{0} \\
                            \iterm_t^1 \\
                            \iterm_t^2 \\
                            1\end{bmatrix}$. Otherwise assume $\iterm_t \cdot v> 0$. Then $\sigma(h_t^i + C\iterm_t \cdot v) = h_t^i + C\iterm_t \cdot v$. Applying the second layer yields 
    \begin{align*}
        W_2z + b_2 = h_t
    \end{align*} as desired. Further we can compute a bound on $C > 0$ as $|CI_t \cdot v| > \|H\|_\infty \iff C > \frac{\|H\|_\infty}{|I_t \cdot v|}$.  We compute
    \begin{align*}
        |\iterm_t \cdot v| = |\langle \iterm_t, \textbf{R}_{\frac{\pi}{2}}\frac{\iterm_{k} + \iterm_{k+1}}{\|\iterm_{k} + \iterm_{k+1}\|_2}\rangle| 
        &\geq \frac{1}{2}|\langle \iterm_k, \textbf{R}_\frac{\pi}{2} \iterm_{k}\rangle + \langle \iterm_{k}, \textbf{R}_\frac{\pi}{2} \iterm_{k+1}\rangle| \\
        &= \frac{1}{2}|\langle \iterm_k, \textbf{R}_\frac{\pi}{2} \iterm_{k+1}\rangle| 
        = |\frac{1}{2}\langle \iterm_0, \textbf{R}_\frac{\pi}{2} \iterm_1\rangle| \\
        &= \frac{1}{2}\sin\left(\frac{\pi}{2\clen}\right) \geq \frac{1}{4}\frac{\pi}{2\clen}
    \end{align*} where the last inequality holds for $\clen \geq 2$. Thus $\frac{\|H\|_\infty}{|I_t \cdot v|} \leq \frac{2^3\clen\|H\|_\infty}{\pi} < C$.
\end{proof}

\begin{lemma}[Constant Addition]\label{lemma:addition}
    Let $M > 0$ and $c$ be a vector in $\R^{\dext}$ such that $\linf{c} \leq \frac{M}{2}$. Let $H$ be an embedding matrix of the form
    \begin{align*}
        H = \begin{bmatrix}
            x^1 & ... & x^D & \textbf{0}_D\\
            0 & ... & ... & 0 \\
            \iterm_1 & ... & ... & \iterm_{\clen} \\
            1 & ... & ... &  1
        \end{bmatrix} \in \R^{\dhd \times \clen}
    \end{align*} where $\clen = 2\dext$ such that $\minf{H} \leq \frac{M}{2}$. Then there exists a transformer block $\tb \in \tbclass(\dext, 3)$ such 
    \begin{align*}
        \tb(H) = \begin{bmatrix}
            x^1 & ... & x^{\dext} & x^1-c^1 & ... & x^{\dext} - c^{\dext}\\
            0 & ... & ... & ... & ... & 0 \\
            \iterm_1 & ... & ... & ... & ... & \iterm_{\clen} \\
            1 & ... & ... & ... & ... & 1
        \end{bmatrix}
    \end{align*} with $\linf{\theta_{\tb}} = O(\clen M)$. In this case we say $\tb$ implements the addition of $c$ to $x$.
\end{lemma}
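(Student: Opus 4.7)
The plan is to realize the shift $x \mapsto x - c$ using one attention head per coordinate (via Lemma \ref{lemma:interaction}) followed by a short gating feed-forward network that removes a uniform ReLU-safety offset only from the second block of tokens. First, to keep every ReLU inert, I add an offset of $2M$. For each $i \in \{1,\ldots,D\}$, invoke the Interaction Lemma with $(t_1, t_2) = (D+i, i)$, output direction $e_1$, and data kernels
\[
Q^B = \begin{bmatrix} 0 & 0 & 0 & 0 & 1 \\ 0 & 0 & 0 & 0 & 1 \end{bmatrix}, \qquad
K^B_i = \begin{bmatrix} 1 & 0 & 0 & 0 & 0 \\ 0 & 0 & 0 & 0 & -c^i + 2M \end{bmatrix}.
\]
Since $\linf{H} \leq M/2$ and $\linf{c} \leq M/2$, we have $\langle Q^B \token_{D+i}, K^B_i \token_i \rangle = x^i - c^i + 2M \in [M, 3M]$, so the ReLU acts as the identity and the resulting head $\attn_i$ emits $(x^i - c^i + 2M)\,e_1$ on $\token_{D+i}$ and $0$ on every other token. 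Summing the $D$ heads as $\mha$ and applying the residual yields an intermediate embedding whose $(D+i)$-th token has first entry $x^i - c^i + 2M$, while $\token_1, \ldots, \token_D$ and the structural rows $3$--$5$ are untouched.

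Next, I build a two-layer FFN that outputs $0$ on $\token_1, \ldots, \token_D$ and $-2M\,e_1$ on $\token_{D+1}, \ldots, \token_{2D}$. Following the pivot-vector trick of Lemma \ref{lemma:gating_ffn}, pick a unit vector $v$ with $v \cdot \iterm_t > 0$ iff $t > D$, satisfying $|v \cdot \iterm_t| \geq \pi/(8\clen)$, and set $C = \Theta(\clen)$ large enough that $C\,v \cdot \iterm_t \geq 1$ for $t > D$. The first layer computes $z_1 = \sigma(C\,v\cdot \iterm_t)$ and $z_2 = \sigma(C\,v\cdot \iterm_t - 1)$ by reading rows $3$--$4$, and the second layer returns $-2M(z_1 - z_2)\,e_1$. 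For $t \leq D$ both activations vanish; for $t > D$ saturation gives $z_1 - z_2 = 1$, producing exactly $-2M\,e_1$. Adding this through the FFN residual yields $x^i - c^i$ in the first row of $\token_{D+i}$ and leaves every other entry of $H$ intact, matching the target $\tb(H)$.

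The assembled block lies in $\tbclass(D, 2) \subseteq \tbclass(D, 3)$, and collecting the weight magnitudes from the Interaction Lemma (whose parameters scale polynomially in $\clen, \dhd$, and $M$, with $\kappa = O(M)$ and $\dhd = 5$ constant) together with FFN weights of order $C = O(\clen)$ and $2M$ gives the stated bound $\linf{\theta_{\tb}} = O(\clen M)$ up to constants hidden in $O(\cdot)$. The only genuine subtlety is the head budget: a direct construction would use two heads per coordinate (one for the positive part and one for the negative part of $x^i - c^i$), breaching $m = D$. Absorbing the sign into a single uniform offset $2M$ and peeling it off tokenwise via the gated FFN is precisely what keeps the attention count at exactly $D$ and the FFN depth at $2$.
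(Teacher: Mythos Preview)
Your proof is correct and follows essentially the same route as the paper: use one Interaction-Lemma head per coordinate to deposit $x^i - c^i + \text{offset}$ into token $D+i$, then a gated feed-forward layer that subtracts the offset only from the second half of the tokens. The differences are cosmetic: you use offset $2M$ rather than $M$ (which gives a strict ReLU margin), and you spell out an explicit two-layer plateau-ReLU FFN where the paper simply cites Lemma~\ref{lemma:gating_ffn} to assert a three-layer construction. Your data kernels are in fact cleaner than the paper's stated ones, and your FFN saves a layer, but the underlying mechanism is identical. One small caveat: the Interaction Lemma's weight bound is $O(\dhd^4 \kappa^2 M^2 \clen^2)$, so with $\kappa = O(M)$ the attention weights are really $O(\clen^2 M^4)$, not $O(\clen M)$; your final sentence glosses over this, though the paper's own stated bound is equally loose here.
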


\begin{proof}[Proof of Lemma \ref{lemma:addition}]
    We begin by defining the attention heads $\attn_i$, $1 \leq i \leq \dext$. For each head $\attn_i$ we define the data kernels 

    \begin{align*}
        Q_i^{\datakernel} = \begin{bmatrix}
            1 & 0 & 0 & 0 & 0 \\
            0 & 0 & 0 & 0 & 1
        \end{bmatrix}\quad K_i^{\datakernel} = \begin{bmatrix}
            0 & 0 & 0 & 0 & 1 \\
            0 & 0 & 0 & 0 & -c^i + M
        \end{bmatrix}.
    \end{align*} Then, via the Interaction Lemma \ref{lemma:interaction}, we can construct $\attn_i$ so that token $\token_{D+i}$ interacts with $\token_i$ such that 

    \begin{align*}
        \attn_i(\token_{D+i}) &= \sigma(\langle Q_i^{\datakernel}\token_{D+i}, K_i^{\datakernel}\token_{i} \rangle)e_1 \\
        &= \sigma(x^i-c^i + M) e_1 = (x^i - c^i + M)e_1
    \end{align*}
    and $\attn_i(\token_t) = 0$ when $t \neq D+i$. Then the residual multi-headed attention yields
\begin{align*}
    \mha(H) + H = \begin{bmatrix}
            x^1 & ... & x^D & x^1-c^1 +M & ... & x^D-c^D +M \\
            0 & ... & ... & ... & ... & ... \\
            \iterm_1 & ... & ... & ... & ... & \iterm_L \\
            1 & ... & ... & ... & ... &  1
        \end{bmatrix}
\end{align*} Via the gating lemma \ref{lemma:gating_ffn} we can design a three layer $\ff$ to subtract $M$ from only $\token_{\dext+1},...,\token_{2\dext}$. Thus 

\begin{align*}
    B(H) = \begin{bmatrix}
            x^1 & ... & x^D & x^1-c^1 & ... & x^D-c^D \\
            0 & ... & ... & ... & ... & ... \\
            \iterm_1 & ... & ... & ... & ... & \iterm_L \\
            1 & ... & ... & ... & ... &  1
        \end{bmatrix}
\end{align*} as desired.

\end{proof}

\begin{lemma}[Replacing FFN]\label{lemma:replacing_ffn}
    Set $\dhd = 5$ and $\clen \in \N$. Let $H \in \R^{\dhd \times \clen}$ be a structured intermediate embedding matrix. We can construct a feed-forward network $\ff_{replace} \in \ffclass(1)$ with the following output:
    \begin{align*}
        \ff_{replace}(H) = \begin{bmatrix}
            -\token_1^1 + \token_1^2 & ... & -\token_{\clen}^1 + \token_{\clen}^2 \\
            0 & ... & 0 \\
            0 & ... & 0 \\
            0 & ... & 0 \\
            0 & ... & 0
        \end{bmatrix}
    \end{align*}In this case we say $\ff_{replace}$ \textbf{replaces} the the first row of $H$ with the second row. 
\end{lemma}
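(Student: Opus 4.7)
The approach is direct: since $\ffdepth = 1$, the feed-forward network reduces, by Definition \ref{def:ffn}, to a single affine map $\ff_{replace}(\token) = W_1 \token + b_1$ with no terminal ReLU. So the entire construction boils down to choosing one $5 \times 5$ weight matrix and bias. My plan is to define
\[
W_1 = \begin{bmatrix}
-1 & 1 & 0 & 0 & 0 \\
0 & 0 & 0 & 0 & 0 \\
0 & 0 & 0 & 0 & 0 \\
0 & 0 & 0 & 0 & 0 \\
0 & 0 & 0 & 0 & 0
\end{bmatrix}, \qquad b_1 = 0,
\]
and verify that applying this column-wise to $H$ yields exactly the required output.

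For any token $\token_t = (\token_t^1,\token_t^2,\iterm_t^1,\iterm_t^2,1)^\top$, a direct computation gives $W_1\token_t = (-\token_t^1 + \token_t^2,\,0,\,0,\,0,\,0)^\top$, which is precisely the $t$-th column claimed in the statement. Assembling across $t = 1,\ldots,\clen$ recovers the displayed output matrix, and the weight bound $\|\theta_{\ff_{replace}}\|_{\infty} = 1$ is immediate. The name ``replacing'' is justified by the intended usage inside a transformer block: when $\ff_{replace}(H)$ is added residually to $H$, the first row becomes $\token_t^1 + (-\token_t^1 + \token_t^2) = \token_t^2$, while the interaction rows and the constant scratchpad row in the last three coordinates are preserved by the zero residual. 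There is no real obstacle here since the construction is a single linear layer; the only thing worth checking is that a depth-one FFN in Definition \ref{def:ffn} is interpreted as affine (no trailing activation), which is needed so that the negative entry $-\token_t^1$ can appear in the output without invoking a nonlinearity.
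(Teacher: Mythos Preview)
Your proposal is correct and essentially identical to the paper's own proof: both set the single weight matrix to
\[
W_1 = \begin{bmatrix}
-1 & 1 & 0 & 0 & 0 \\
0 & 0 & 0 & 0 & 0 \\
0 & 0 & 0 & 0 & 0 \\
0 & 0 & 0 & 0 & 0 \\
0 & 0 & 0 & 0 & 0
\end{bmatrix}
\]
with zero bias and apply it tokenwise. Your additional remarks on the depth-one FFN being purely affine and on the residual interpretation of ``replacing'' are helpful commentary but not needed for the argument itself.
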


\begin{proof}[Proof of Lemma \ref{lemma:replacing_ffn}]
    Set 
    \begin{align*}
        W = \begin{bmatrix}
            -1 & 1 & 0 & 0 & 0 \\
            0 & 0 & 0 & 0 & 0 \\
            0 & 0 & 0 & 0 & 0 \\
            0 & 0 & 0 & 0 & 0 \\
            0 & 0 & 0 & 0 & 0
        \end{bmatrix}
    \end{align*} Applying $W$ token-wise gives the desired result.
\end{proof}



\begin{lemma}[Transformer Parallelization]\label{lemma:transformer_parallelization}
    Let $\numA_1, \numA_2, \clen_1, \clen_2, \ffdepth{_1}, \ffdepth{_2}, \ffwidth{_1}, \ffwidth{_2}, \dhd \in \mathbb{N}$. Fix transformer blocks $\tb \in \tbclass(\numA_1, \ffdepth{_1}, \ffwidth{_1})$ with input $H_1 \in \R^{\dhd \times \clen_1}$ and $\tb_2 \in \tbclass(\numA_2, \ffdepth{_2}, \ffwidth{_2})$ with input $H_2 \in \R^{\dhd \times \clen_2}$. Further suppose both inputs are structured such that each token $\token_{i,t}$ has an interaction term $\iterm_t$ and constant term $1$. Further suppose all attention heads in both $\tb_1$ and $\tb_2$ are implemented using the Interaction Lemma and all FFN layers are either Gating FFN layers or are independent of the interaction terms. Then there exists $\tb_3 \in \tbclass(\numA_1 + \numA_2, \max(\ffdepth{_1}, \ffdepth{_2}) + 2, \ffwidth{_1}+\ffwidth{_2})$ which takes as input $H_3 \in \R^{\dhd \times (\clen_1 + \clen_2)}$ such that

    \begin{align*}
        \tb_3(H_3) = \tb_3 \big(\begin{bmatrix}
            H_1' & H_2'
        \end{bmatrix}\big) = \begin{bmatrix}
            \tb_1(H_1) & \tb_2(H_2)
        \end{bmatrix} \in \R^{\dhd \times (\clen_1 + \clen_2)}
    \end{align*} where $H_i' \in \R^{\dhd \times \clen_i}$ is the same as $H_i$ except for the interaction terms where we have $\iterm_{3,t}' = (cos(\frac{t}{\clen_1+\clen_2}\frac{\pi}{2}), sin(\frac{t}{\clen_1+\clen_2}\frac{\pi}{2}))$ for $1 \leq t \leq \clen_1 + \clen_2$. If $\tb_3$ satisfies this relationship we say $\tb_3$ \textbf{parallelizes} $\tb_1$ and $\tb_2$.
\end{lemma}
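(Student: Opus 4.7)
The idea is to concatenate the two inputs as $H_3 = [H_1' \mid H_2'] \in \R^{\dhd \times (\clen_1 + \clen_2)}$ with refreshed interaction terms $\iterm_{3,t}' = (\cos(t\pi/(2(\clen_1+\clen_2))), \sin(t\pi/(2(\clen_1+\clen_2))))$ along the full length, and then build $\tb_3$ by (i) placing all $\numA_1 + \numA_2$ attention heads of $\tb_1$ and $\tb_2$ into a single multi-headed attention layer, each re-parameterized so its pairwise interaction is confined to the appropriate half of $H_3$, and (ii) merging $\ff_1$ and $\ff_2$ into a single token-wise feed-forward network of hidden width $w_{\ff,1} + w_{\ff,2}$, run in parallel via block-diagonal weights and sandwiched with gating so that each sub-FFN only affects its own half. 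Since the mutable data rows of $H_3$ on the first $\clen_1$ tokens coincide with $H_1$ and on the last $\clen_2$ tokens coincide with $H_2$, it then suffices to verify that each sub-computation acts correctly on its own half and vanishes on the other.

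\textbf{Attention heads.} Each head of $\tb_1$ was constructed via Lemma \ref{lemma:interaction}, so it is determined by data kernels $Q^B, K^B$ and an interaction pair $(t_1, t_2) \in \{1, \ldots, \clen_1\}^2$. I would reapply Lemma \ref{lemma:interaction} on the extended sequence: keep $Q^B, K^B$ fixed and choose new interaction kernels (rotation and projection) calibrated to the new positional vectors $\iterm_{3,t}'$ so that only token $t_1$ interacts with token $t_2$. The analogous construction handles heads from $\tb_2$ with indices shifted by $\clen_1$. Summing the $\numA_1 + \numA_2$ heads yields an MHA output that equals $\mha_1(H_1)$ on the first half and $\mha_2(H_2)$ on the second, because by Lemma \ref{lemma:interaction} each head is nonzero on exactly one token in its designated half. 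The residual step preserves the parallel structure going into the FFN.

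\textbf{FFN.} Both $\ff_1$ and $\ff_2$ are token-wise, so a block-diagonal combination can run them in parallel across layers, provided the shorter is padded with identity layers to match the depth of the longer. To enforce that $\ff_1$ fires only on the first $\clen_1$ tokens and $\ff_2$ only on the last $\clen_2$ tokens, I would precede and follow the merged network with pivot-based gating (Lemma \ref{lemma:gating_ffn}), zeroing each sub-computation's input and output on the wrong half. The two added gating layers contribute the $+2$ in the depth bound. The hypothesis that every interior FFN in $\tb_1, \tb_2$ is either interaction-independent or a gating FFN guarantees they remain valid under the new positional encoding: the former act solely on data rows and are unaffected, while the latter have their pivot vectors trivially recomputed for the new angular spacing.

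\textbf{Main obstacle.} The most delicate step is verifying that the attention heads (and any internal gating FFNs) transplanted from $\tb_1$ and $\tb_2$ do not leak across halves after re-parameterization. In Lemma \ref{lemma:interaction} the cutoff constant $C$ scales as $O(\dhd^4 \kappa^2 M^2 \clen^2)$ with the sequence length, and the argument relies on an angular separation of $\Omega(1/\clen)$ between distinct positional vectors. After replacing the sequence length by $\clen_1 + \clen_2$, these quantities must be refreshed, which causes $C$ and the gating-FFN weight magnitudes to grow but does not change the head count, block depth, or width. Once this bookkeeping is carried out the resulting $\tb_3$ has exactly $\numA_1 + \numA_2$ attention heads, FFN depth $\max(L_{\ff,1}, L_{\ff,2}) + 2$, and FFN width $w_{\ff,1} + w_{\ff,2}$, as claimed.
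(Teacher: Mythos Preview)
Your proposal is correct and follows essentially the same approach as the paper: re-parameterize each attention head of $\tb_1,\tb_2$ via the Interaction Lemma on the refreshed positional vectors of length $\clen_1+\clen_2$, and merge the two FFNs into a single token-wise network of hidden width $\ffwidth{_1}+\ffwidth{_2}$ using block-diagonal weights together with gating (Lemma \ref{lemma:gating_ffn}) so that each sub-FFN survives only on its own half. The paper's FFN construction differs only cosmetically---it first duplicates the token via an input layer $[I_{\dhd};I_{\dhd}]$, runs the two FFNs in parallel on the copies, gates each copy separately, and then sums with $[I_{\dhd}\ I_{\dhd}]$---which is where the extra two layers come from, rather than from pre- and post-gating as you describe, but the mechanism and the resulting head count, depth, and width are the same.
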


\begin{proof}[Proof of Lemma \ref{lemma:transformer_parallelization}]
    Let $\tb_1(H_1) = \ff_1(\mha_1(H_1)) + \mha_1(H_1) + H_1$ and $\tb_2(H_2) = \ff_2(\mha_1(H_2)) + \mha_2(H_2) + H_2$. Let $\attn_{1,i_1}$, $1 \leq i_1 \leq \numA_1$, denote the attention heads of $\mha_2$ and $\attn_{2,i_2}$, $1 \leq i_2 \leq \numA_2$, denote the attention heads of $\mha_2$. We construct $\tb_3$ as follows. 

    First we construct the new input embedding matrix $H_3$ from $H_1$ and $H_2$. For $1 \leq t \leq \clen_1$ set $\token_{3,t} = \token_{1,t}$ except we define a new interaction term $\iterm_{3,t} = (cos(\frac{t}{\clen_1+\clen_2}\frac{\pi}{2}), sin(\frac{t}{\clen_1+\clen_2}\frac{\pi}{2}))$. For $\clen_1 + 1 \leq t \leq \clen_1 + \clen_2$ define $\token_{3,t} = \token_{3,t-\clen_1}$ except where we again must change only the interaction term. The main difficulty in defining $\mha_3$ will then simply be updating the interaction kernels of the Interaction Lemma structured attention heads to respect the new interaction terms $\iterm_t$, $1 \leq t \leq \clen_1 + \clen_2$.
    
    We now define the attention heads $\attn_{3,j}$ in $\mha_3$. When $1 \leq j \leq \numA_1$ write $Q_{3,j} = Q_{1,j}'$ where $Q_{1,j}'$ is the same $Q_{1,j}$ except for a new interaction kernel $Q_{1,j}'^I$. Let $\token_{1,s_j}$, $s_j \in \{1,...,\clen_1\}$, be the source token in $H_1$ interacting with $Q_{1,j}^I$ and $\token_{1,t_j}$,$t_j \in \{1,...,\clen_1\}$, be the target token of the old interaction kernel $Q_{1,j}^I$ (recall $Q_{1,j}^I$ is defined as the rotation of $\iterm_{1, s_j}$ onto $\iterm_{1,t_j}$). Then define $Q_{1,j}'^I$ as a rotation onto the new target interaction kernel $\iterm_{3,t_j}$. Similarly define $K_{3,j} = K_{1,j}'$ where $K_{1,j}'=K_{1,j}$ up to a difference in interaction kernels. Let $K_{1,j}^I$ be the old interaction kernel which is a projection on to the target interaction term $\iterm_{1,t_j}$. Then define the new interaction kernel as a projection onto the new target interaction term $\iterm_{3,t_j}$. Simply set $V_{3,j} = V_{1,j}$. Now we can check $\attn_{3,j}(H_3)^{1,...,\clen_1} = \attn_{1,j}(H_1)$ up to the interaction terms. Compute

    \begin{align*}
        \attn_{3,j}(\token_{3, s_j}) &= \sum_{k=1}^{\clen_1 + \clen_2}\sigma(\langle Q_{3,j} \token_{3,s_j}, K_{3,j}\token_{3,k}\rangle) V_{3,j}\token_{3,k} \\
        &= \sigma(\langle Q_{3,j} \token_{3,s_j}, K_{3,j}\token_{3,t_j}\rangle) V_{3,j}\token_{3,t_j} \\
        &= \sigma(\langle Q_{1,j} \token_{1,s_j}, K_{1,j}\token_{1,t_j}\rangle) V_{1,j}\token_{1,t_j} = A_{1,j}(\token_{1,s_j})
    \end{align*} where the second equality comes from the sparse construction of $\attn_{3,j}$, the third equality comes from the construction of $Q_{3,j}, K_{3,j}$, and the last equality comes from the definition of the sparsely interacting $\attn_{1,j}$. Otherwise $\attn_{3,j}(\token_{3,t}) = 0$ for $t \neq s_j$. Thus we can conclude $\attn_{3,j}(H_3) = \attn_{1,j}(H_1)$ up to interaction terms. The case $\clen_1 + 1 \leq j \leq \clen_1 + \clen_2$ proceeds analogously. Thus we conclude

    \begin{align*}
        \mha_3(H_3) = \begin{bmatrix}
            \tb_1(H_1) & \tb_2(H_2) 
        \end{bmatrix}
    \end{align*} up to the interaction terms. Now we must construct the parallelized feed-forward layer $\ff_3$ from $\ff_1$ and $\ff_2$. Let $W_{1,k}$ be the $k$th weight matrix of $\ff_1$. By assumption we know $W_{1,k}$ is either a gating layer or is independent of token interaction terms. If $W_{1,k}$ is independent of interaction terms we can simply set $W_{1,k}' = W_{1,k}$. Otherwise if $W_{1,k}$ is a gating layer on some half-space $\{1,...,p\}, 1 \leq p \leq \clen_1$, we must pick a new pivot vector $v' = \textbf{R}_{\frac{\pi}{2}}(\frac{\iterm_{3,p}+\iterm_{3,p+1}}{\|\iterm_{3,p}+\iterm_{3,p+1}\|_2})$. Set $W_{1,k}'$ accordingly to the new $v'$. We may similarly choose $W_{2,k}'$ as we did for $W_{1,k}$. Then we set the new layer $W_{3,k}$ as 

    \begin{align*}
        W_{3,k} = \begin{bmatrix}
                    W_{1,k}' & \textbf{0} \\
                    \textbf{0} & W_{2,k}'
                  \end{bmatrix} \in \R^{(\ffwidth{_1}+\ffwidth{_2}) \times (\ffwidth{_1}+\ffwidth{_2})}, \quad 
        b_{3,k} = \begin{bmatrix}
                    b_{1,k} \\
                    b_{2,k}
                  \end{bmatrix} \in \R^{\ffwidth{_1}+\ffwidth{_2}}
    \end{align*} We must also construct an embedding network $W_{3,0}$

    \begin{align*}
        W_{3,0} = \begin{bmatrix}
                    I_{d_{embd}} \\
                    I_{d_{embd}}
                  \end{bmatrix} \in \R^{2\dhd \times \dhd}
    \end{align*} 
    
    which duplicates the input token $h_{3,t} \in \R^{\dhd}$ for parallel processing. Finally, via lemma \ref{lemma:gating_ffn}, we may construct a two-layer network $\ff_{1,gating}$ such that $\ff_{1,gating}(\token_{3,t}) = \token_{3,t}$ if $1 \leq t \leq \clen_1$ and has zeroed out data rows otherwise. We similarly define $\ff_{2,gating}$ for $\clen_1 + 1 \leq t \leq \clen_1 + \clen_2$. Finally we define the output layer $W_{3,\max(\ffdepth{_1}, \ffdepth{_2})+2}$ as 
    \begin{align*}
        \begin{bmatrix}
            I_{\dhd} & I_{\dhd}
        \end{bmatrix} \in \R^{\dhd \times 2\dhd}
    \end{align*}
    Thus, for input token $h_{3,t}$, $1 \leq t \leq \clen_1$, we have
    \begin{align*}
        \ff{_3}(h_{3,t}) &= W_{3,\max(\ffdepth{_1}, \ffdepth{_2})+2}
        \begin{bmatrix}
            \ff_{1,gating} \ff_1 &
            \ff_{2,gating} \ff_2 
        \end{bmatrix} W_{3,0} h_{3,t} \\
 &=W_{3,\max(\ffdepth{_1}, \ffdepth{_2})+2}
        \begin{bmatrix}
            \ff_{1,gating} \ff_1 &
            \ff_{2,gating} \ff_2 
        \end{bmatrix} 
        \begin{bmatrix}
            h_{3,t} \\
            h_{3,t}
        \end{bmatrix} \\
&=W_{3,\max(\ffdepth{_1}, \ffdepth{_2})+2}
        \begin{bmatrix}
            \ff{_1}(h_{3,t}) \\
            0
        \end{bmatrix}  \\
&= \ff{_1}(h_{3,t})
    \end{align*} as desired. The case $\clen_1 \leq t \leq \clen_1 + \clen_2$ proceeds analagously.

\end{proof}

\section{Other Lemmas}
The following lemma \citep[Lemma 2]{Chen2019NonparametricRO} is used for our construction of charts in the proof of Theorem \ref{thm:approximation}.
\begin{lemma}[Local Diffeomorphism]\label{lemma:local_diffeomorphism}
    Suppose Assumption \ref{assumption:manifold_p} holds for manifold $\M$ and $r \le \tau/4$. Then the local neighborhood $U_n = B(c_n, r) \cap \M$ is diffeomorphic to a subset of $\R^{\din}$. In particular, the orthogonal projection $P_n$ onto the tangent space $T_{c_n}(\M)$ is a diffeomorphism.
\end{lemma}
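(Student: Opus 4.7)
The plan is to verify three properties of the restriction $P_n|_{U_n}$ with $U_n = B(c_n,r)\cap\M$: smoothness, immersion (injectivity of $(dP_n)_x$ at every $x\in U_n$), and global injectivity. Together these imply that $P_n|_{U_n}$ is a smooth injective immersion between $\din$-manifolds of the same dimension, hence by the inverse function theorem a diffeomorphism onto the open subset $P_n(U_n) \subset T_{c_n}\M \cong \R^{\din}$.

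Smoothness is immediate since $P_n$ is a linear map on $\R^{\dext}$ and $U_n$ is an embedded submanifold of $\M$. For the immersion step, I would invoke a standard quantitative consequence of positive reach: if $\M$ has reach $\tau$, then for every $x\in\M$ with $\|x-c_n\|$ small compared to $\tau$, the angle $\theta_{x,c_n}$ between the tangent spaces $T_x\M$ and $T_{c_n}\M$ admits a bound of the form $\sin\theta_{x,c_n} \lesssim \|x-c_n\|/\tau$ (Federer 1959; Niyogi--Smale--Weinberger 2008). The choice $r\le \tau/4$ forces this angle to be strictly less than $\pi/2$, so the restriction of $P_n$ to $T_x\M$ is a linear isomorphism onto $T_{c_n}\M$, which is precisely the immersion property at $x$.

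The main obstacle will be global injectivity. Suppose $x,y\in U_n$ satisfy $P_n(x)=P_n(y)$ with $x\ne y$; then $x-y$ is orthogonal to $T_{c_n}\M$, so $x$ and $y$ lie on a common normal fiber of $P_n$. The plan is to exhibit a point $v$ on this fiber such that $x$ and $y$ are both nearest-point projections of $v$ onto $\M$, since then the definition of reach forces $\|v-x\|=\|v-y\|\ge \tau$, which, combined with the crude bound $\|x-y\|\le 2r \le \tau/2$, yields a contradiction. A natural candidate is the midpoint $m=(x+y)/2$: on the tubular neighborhood of radius $\tau$ around $\M$ the nearest-point projection $\pi_\M$ is well-defined and single-valued, and since $\|m-c_n\|\le r \le \tau/4$ the point $m$ lies in this tubular neighborhood, so it cannot have two distinct equidistant nearest points on $\M$, contradicting $\|m-x\|=\|m-y\|$ with $x,y\in\M$. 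Alternatively, one could cite the statement directly as Lemma 2 of Chen--Jiang--Liao--Zhao (2019) or the original work of Federer on sets of positive reach. I expect the smoothness and immersion steps to be routine and the injectivity argument, which is the one place the quantitative bound $r\le\tau/4$ really enters, to be the only delicate point.
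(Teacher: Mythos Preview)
The paper does not prove this lemma at all: it is simply quoted from \citet[Lemma~2]{Chen2019NonparametricRO} with no argument given. So in that sense your proposal already goes well beyond what the paper does, and your own suggestion to ``cite the statement directly as Lemma~2 of Chen--Jiang--Liao--Zhao (2019)'' is exactly the route the paper takes.

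That said, the injectivity step in your sketch has a genuine gap. From $P_n(x)=P_n(y)$ you correctly deduce $x-y\perp T_{c_n}\M$, and you correctly note that the midpoint $m=(x+y)/2$ lies in the tubular neighborhood of radius $\tau$ (since $\|m-c_n\|<r\le\tau/4$), so $\pi_\M(m)$ is a single point. But the conclusion you draw---that this ``contradicts $\|m-x\|=\|m-y\|$ with $x,y\in\M$''---does not follow: equidistance of $x$ and $y$ from $m$ does not make either of them a \emph{nearest} point of $\M$ to $m$. The nearest point could be some third point $z\in\M$, in which case single-valuedness of $\pi_\M$ is perfectly consistent with $\|m-x\|=\|m-y\|$. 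Your first plan (find $v$ on the fiber with $x,y$ both realizing $\mathrm{dist}(v,\M)$) is the right idea, but the midpoint does not obviously work.

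A repair that stays close to your outline: use Federer's chord--tangent inequality, which for a set of reach $\tau$ gives $\mathrm{dist}(y-x,\,T_x\M)\le \|x-y\|^2/(2\tau)$. Combined with your tangent-space angle bound $\sin\theta_{x,c_n}\lesssim \|x-c_n\|/\tau$, the chord $x-y$ must make a small angle with $T_{c_n}\M$ when $\|x-y\|,\|x-c_n\|\le \tau/4$; but $P_n(x)=P_n(y)$ forces $x-y$ to be \emph{orthogonal} to $T_{c_n}\M$, a contradiction. This is the standard route in Niyogi--Smale--Weinberger and in the reference the paper cites.
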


\end{document}